\definecolor{blue}{rgb}{0.0, 0.0, 1.0}
\definecolor{jgGreen}{rgb}{0.0, 0.5, 0.0}
\definecolor{pink}{rgb}{0.5, 0.0, 0.25}
\newcommand{\ignore}[1]{}%for temporarily taking out text
\definecolor{darkorange}{RGB}{255, 140, 0}
\newcommand{\prm}{R_\phi^\e}
\newcommand{\dl}{\bar R_\phi}
\newcommand{\hingephi}{{\phi_{\text{hinge}}}}
\newcommand{\cprm}{R^\e}
\newcommand{\cdl}{\bar R}
\newcommand{\const}[1]{\frac 1 {\phi(0)-C_\phi^*(1/2-#1)}}
\newcommand{\cB}{{\mathcal B}}
\newcommand{\cH}{\mathcal{H}}
\newcommand{\bx}{{\mathbf x}}
\newcommand{\by}{{\mathbf y}}
\newcommand{\bz}{{\mathbf z}}
\DeclareMathOperator*{\esssup}{ess\,sup}
\DeclareMathOperator{\id}{id}
\DeclareMathOperator{\supp}{supp}
\DeclareMathOperator{\conc}{conc}
\DeclareMathOperator{\dom}{dom}
\def\Rset{\mathbb{R}}
\newcommand{\PP}{{\mathbb P}}
\newcommand{\QQ}{{\mathbb Q}}
\newcommand{\one}{{\mathbf{1}}}
\newcommand{\zero}{{\mathbf{0}}}
\newcommand{\Wball}[1]{{\cB^\infty_{#1}}}
\newcommand{\ov}{\overline}
\newcommand{\td}{\tilde}
\newcommand{\e}{\epsilon}
\newtheorem{assumption}{Assumption}
\newtheorem{theorem}{Theorem}
\newtheorem{definition}{Definition}
\newtheorem{lemma}{Lemma}
\newtheorem{proposition}{Proposition}
\theoremstyle{definition}
\newtheorem{example}{Example}
\crefname{assumption}{Assumption}{Assumptions}
\title{Adversarial Surrogate Risk Bounds \\for Binary Classification}
\author{\name Natalie Frank \email natalief@uw.edu \\
      \addr Department of Applied Mathematics\\
      University of Washington
      }
\begin{document}

\maketitle

\begin{abstract}%
A central concern in classification is the vulnerability of machine learning models to adversarial attacks. Adversarial training is one of the most popular techniques for training robust classifiers, which involves minimizing an adversarial surrogate risk.  Recent work has characterized the conditions under which any sequence minimizing the adversarial surrogate risk also minimizes the adversarial classification risk in the binary setting, a property known as \emph{adversarial consistency}. However, these results do not address the rate at which the adversarial classification risk approaches its optimal value along such a sequence. This paper provides surrogate risk bounds that quantify that convergence rate.  
\end{abstract}%

\section{Introduction}
A central concern regarding regarding sophisticated machine learning models is their susceptibility to adversarial attacks. Prior work \citep{biggio2013evasion,szegedy2013intriguing} demonstrated that imperceptible perturbations can degrade the performance of neural nets. As such models are deployed in security-critical applications, including facial recognition \citep{Xu2022face} and medical imaging \citep{PaschaliConjetiNavarroNavab2018medical}, training robust models remains a key challenge in machine learning.

In the standard classification setting, the \emph{classification risk} is the proportion of incorrectly classified data. Directly minimizing this quantity is a combinatorial optimization problem, so typical machine learning algorithms instead minimize a more tractable \emph{surrogate} risk via gradient-based methods. A surrogate risk is said to be consistent for a given data distribution if every minimizing sequence also minimizes the classification risk for that distribution. Beyond consistency, a central objective is efficiency: minimizing the surrogate risk should translate into a rapid reduction of the classification risk. This rate can be quantified via surrogate risk bounds, which bound the excess classification risk in terms of the excess surrogate risk.

In the standard binary classification setting, consistency and surrogate risk bounds are well-studied topics \citep{BartlettJordanMcAuliffe2006,Lin2004,Steinwart2007,zhang04}. A typical approach reduces the problem to a pointwise analysis of the conditional classification and surrogate risks.
In contrast, the adversarial setting is less understood. The adversarial classification risk penalizes instances that can be perturbed into the opposite class, while the adversarial surrogate risk computes the worst-case value over an $\epsilon$-ball. The dependence on the value of a function over an $\e$-ball precludes a pointwise decomposition, rendering the classical analysis inapplicable. 
\Citet{FrankNilesWeed23consistency} characterized the risks that are consistent for all data distributions, and the corresponding losses are referred to as \emph{adversarially consistent}. Unfortunately, no convex loss function can be adversarially consistent for all data distributions \citep{MeunierEttedguietal22}. On the other hand, \citet{frank2024consistency} showed that such situations are rather atypical--- when the data distribution is absolutely continuous, a surrogate risk is adversarially consistent so long as the adversarial Bayes classifier satisfies a certain notion of uniqueness. While these results characterize consistency, none describe convergence rates.

\textbf{Our Contributions:} 
\begin{itemize}
    \item We prove a linear surrogate risk bound for adversarially consistent losses (\cref{thm:adv_surrogate_massart}).
    \item When the “distribution of optimal attacks” satisfies a bounded noise condition, we prove a linear surrogate risk bound under mild conditions on the loss (\cref{thm:adv_surrogate_massart}).
    \item We establish a distribution-dependent surrogate risk bound that applies whenever a loss is adversarially consistent for the data distribution (\cref{thm:convex_surrogate_bound}). 
\end{itemize}
    Notably, the last result applies to convex loss functions. By prior consistency results \citep{frank2024consistency,FrankNilesWeed23consistency,MeunierEttedguietal22}, one cannot hope for distribution independent surrogate bounds for non-adversarially consistent losses. This work presents a framework for surrogate risk bounds that applies to any supremum-based risk under mild conditions. A detailed comparison with prior work is provided in \cref{sec:related_works}. 
\section{Background and Preliminaries}
\subsection{Surrogate Risks}
We study binary classification on $\mathbb{R}^d$ with labels $-1$ and $+1$, where $\PP_0$ and $\PP_1$ denote the class-conditional distributions. For a measurable set $A$, the classification risk is  
\[
R(A) = \int 1_{A^C} \, d\PP_1 + \int 1_A \, d\PP_0,
\]
with minimum $R^*$ over all Borel sets. Because the indicator function is nondifferentiable, one instead minimizes a \emph{surrogate risk}  
\[
R_\phi(f) = \int \phi(f) \, d\PP_1 + \int \phi(-f) \, d\PP_0,
\]
with minimum $R_{\phi,*}$ over all Borel functions. The loss $\phi$ satisfies:

\begin{assumption}\label{as:phi}
$\phi$ is continuous, non-increasing, and $\lim_{\alpha \to \infty} \phi(\alpha) = 0$.
\end{assumption}

Thresholding $f$ at zero yields the classifier $\{f>0\}$, whose risk is  
\[
R(f)=R(\{f>0\}) = \int 1_{f \le 0} \, d\PP_1 + \int 1_{f > 0} \, d\PP_0.
\]

It remains to verify that minimizing the surrogate risk $R_\phi$ will also minimize the classification risk $R$. 
\begin{definition}
    The loss function $\phi$ is \emph{consistent for the distribution $\PP_0$, $\PP_1$} if every minimizing sequence of $R_\phi$ is also a minimizing sequence of $R$. The loss function $\phi$ is \emph{consistent} if it is consistent for all distributions.   
\end{definition}
Prior work establishes conditions under which many common loss functions are consistent. For convex $\phi$, consistency occurs iff $\phi$ is differentiable at $0$ and $\phi'(0) < 0$ \citep[Theorem~2]{BartlettJordanMcAuliffe2006}. \citet[Proposition~3]{FrankNilesWeed23consistency} show that consistency holds if $\inf_\alpha \frac{1}{2}(\phi(\alpha)+\phi(-\alpha)) < \phi(0)$, which is satisfied by losses such as
%Furthermore, \citep[Proposition~3]{FrankNilesWeed23consistency} establishes a condition that applies to non-convex losses:
%\begin{theorem}%[\citep{FrankNilesWeed23consistency}]\label{th:strong_consistent}
%    If $\inf_\alpha 1/2(\phi(\alpha)+\phi(-\alpha))<\phi(0)$, then the loss $\phi$ is consistent.
%\end{theorem}
the $\rho$-margin loss $\phi_\rho(\alpha)=\min(1,\max(1-\alpha/\rho,0))$ and the shifted sigmoid loss $\phi_\tau(\alpha)=1/(1+\exp(\alpha-\tau))$, $\tau>0$. However, a convex loss $\phi$ cannot satisfy this inequality:
\begin{equation}\label{eq:adversarially_consistent_non_convex}
    \frac 12\left(\phi(\alpha)+\phi(-\alpha)\right)\geq\phi\big( \frac 12 \alpha +\frac 12 \cdot -\alpha\big)= \phi(0).
\end{equation}
\subsection{Surrogate Risk Bounds}
In addition to consistency, quantifying convergence rates is a key concern. Specifically, prior work \citep{BartlettJordanMcAuliffe2006,zhang04} establishes \emph{surrogate risk bounds} of the form $\Psi(R(f)-R_*)\leq R_\phi(f)-R_{\phi,*}$ for some function $\Psi$, linking excess classification risk to excess surrogate risk. These bounds involve pointwise minima of the \emph{conditional} classification and surrogate risks.

Let $\PP=\PP_0+\PP_1$ and $\eta(\bx)=d\PP_1/d\PP$. An equivalent formulation of the classification risk is 
\begin{equation}\label{eq:classification_risk_eta}
    R(f)=\int C(\eta(\bx),f(\bx))d\PP(\bx)    
\end{equation}
where $C(\eta, \alpha)=\eta \one_{\alpha\leq 0}+(1-\eta)\one_{\alpha>0}$,
with minimal conditional risk 
\begin{equation}\label{eq:C^*_def}
    C^*(\eta)=\inf_\alpha C(\eta,\alpha)=\min(\eta,1-\eta),
\end{equation}
and thus the minimal classification risk is $R_*=\int C^*(\eta(\bx))d\PP(\bx)$.
Analogously, the surrogate risk in terms of $\eta$ and $\PP$ is 
\begin{equation}\label{eq:surrogate_pw}
    R_\phi(f)= \int C_\phi(\eta(\bx),f(\bx))d\PP, \quad C_\phi(\eta,\alpha)=\eta\phi(\alpha)+(1-\eta)\phi(-\alpha)
\end{equation}
and the minimal surrogate risk is 
$R_{\phi,*}=\int C_\phi^*(\eta(\bx))d\PP(\bx)$
with the  minimal conditional risk $C_\phi^*(\eta)$ defined by 
\begin{equation}\label{eq:C_phi^*_def}
\quad C_\phi^*(\eta)=\inf_\alpha C_\phi(\eta,\alpha).    
\end{equation}

%Minimizers of $R_\phi$ may take values in $\ov \Rset$; for example, under the exponential loss $\phi(\alpha) = e^{-\alpha}$ and a distribution with $\eta(\bx) \equiv 1$, the unique minimizer of $R_\phi$ is $+\infty$. 
Prior work on consistency typically establishes surrogate risk bounds via pointwise analysis of the conditional risks, relating the excess conditional surrogate risk $C_\phi(\eta,\alpha) - C_\phi^*(\eta)$ to the excess conditional classification risk $C(\eta,\alpha) - C^*(\eta)$.

 The consistency of $\phi$ can be fully characterized by the properties of the function $C_\phi^*(\eta)$.

\begin{theorem}\label{th:consistent_characterization}
        A loss $\phi$ is consistent iff $C_\phi^*(\eta)<\phi(0)$ for all $\eta\neq 1/2$.
    \end{theorem}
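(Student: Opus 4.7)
The plan is to prove both directions separately. The necessity direction ($\Rightarrow$) I handle by contrapositive: if $C_\phi^*(\eta_0) = \phi(0)$ for some $\eta_0 \neq 1/2$, then by the symmetry $C_\phi^*(\eta)=C_\phi^*(1-\eta)$ (which follows from $C_\phi(1-\eta,\alpha)=C_\phi(\eta,-\alpha)$) I may assume $\eta_0 > 1/2$. I then take the point-mass distribution $\PP_0=(1-\eta_0)\delta_{x_0}$, $\PP_1=\eta_0\delta_{x_0}$, for which $\eta\equiv\eta_0$, $R_{\phi,*}=C_\phi^*(\eta_0)=\phi(0)$, and $R_*=1-\eta_0$. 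The constant sequence $f_n\equiv 0$ then satisfies $R_\phi(f_n)=\phi(0)=R_{\phi,*}$ yet $R(f_n)=\eta_0>R_*$, so it minimizes $R_\phi$ but not $R$; hence $\phi$ is not consistent.

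For the sufficiency direction ($\Leftarrow$), the plan is to first upgrade the hypothesis to the classical Bartlett--Jordan--McAuliffe calibration condition
\[H^-(\eta) := \inf\{C_\phi(\eta,\alpha) : \alpha(2\eta-1)\leq 0\} > C_\phi^*(\eta) \quad \text{for all } \eta \neq 1/2,\]
and then apply the standard calibration machinery. For the upgrade, by symmetry it suffices to treat $\eta > 1/2$, so that the wrong side is $\alpha \leq 0$. Suppose for contradiction that a sequence $\alpha_n \leq 0$ satisfies $C_\phi(\eta,\alpha_n) \to C_\phi^*(\eta)$. The algebraic identity
\[C_\phi(\eta,\alpha) - C_\phi(\eta,-\alpha) = (2\eta-1)\bigl(\phi(\alpha) - \phi(-\alpha)\bigr)\]
is nonnegative when $\alpha \leq 0$ and $\eta > 1/2$, since monotonicity of $\phi$ gives $\phi(\alpha) \geq \phi(0) \geq \phi(-\alpha)$. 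Hence $C_\phi(\eta,-\alpha_n) \leq C_\phi(\eta,\alpha_n) \to C_\phi^*(\eta)$, and since $C_\phi(\eta,-\alpha_n) \geq C_\phi^*(\eta)$ trivially, both sides converge to $C_\phi^*(\eta)$. The difference $(2\eta-1)(\phi(\alpha_n)-\phi(-\alpha_n))$ therefore tends to zero, and combined with the squeeze $\phi(\alpha_n) \geq \phi(0) \geq \phi(-\alpha_n)$ this forces $\phi(\alpha_n),\phi(-\alpha_n) \to \phi(0)$. Consequently $C_\phi(\eta,\alpha_n) = \eta\phi(\alpha_n)+(1-\eta)\phi(-\alpha_n) \to \phi(0)$, so $C_\phi^*(\eta) = \phi(0)$, contradicting the hypothesis.

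Given the calibration condition, consistency follows from the standard argument of \citet{BartlettJordanMcAuliffe2006}: one constructs a nonnegative, strictly increasing function $\Psi$ on $[0,1]$ with $\Psi(0)=0$ and a surrogate risk bound $\Psi(R(f)-R_*) \leq R_\phi(f) - R_{\phi,*}$ by taking the convex lower envelope of $\theta \mapsto \inf_{|2\eta-1|\geq \theta}[H^-(\eta) - C_\phi^*(\eta)]$ and applying Jensen's inequality, so that any minimizing sequence for $R_\phi$ is automatically a minimizing sequence for $R$. I expect the main obstacle to be the upgrade step, since it uses only the pointwise hypothesis together with the monotonicity of $\phi$ and the non-strict behavior of $\phi$ near zero; the subsequent passage to consistency is essentially a black box from the prior literature.
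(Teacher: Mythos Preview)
Your proof is correct. The forward direction is identical in spirit to the paper's: both exhibit a point-mass distribution on which the constant-zero sequence minimizes $R_\phi$ but not $R$ (the paper packages this as \cref{prop:consistent_at_1/2}). For the backward direction, both arguments first establish classification-calibration---that no minimizing sequence of $C_\phi(\eta,\cdot)$ can stay on the wrong side of zero when $\eta\neq 1/2$---but the mechanics differ: the paper passes to a limit point $\alpha^*\in\ov\Rset$ and uses continuity of $\phi$ to locate the minimizer, whereas you use the algebraic identity $C_\phi(\eta,\alpha)-C_\phi(\eta,-\alpha)=(2\eta-1)(\phi(\alpha)-\phi(-\alpha))$ and a squeeze, avoiding any compactness. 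Your route is slightly more elementary; the paper's is a bit shorter. One point where your write-up is actually more complete than the paper's: you explicitly invoke the calibration $\Rightarrow$ consistency machinery of \citet{BartlettJordanMcAuliffe2006} to pass from the pointwise conditional statement to the conclusion about minimizing sequences of $R_\phi$, whereas the paper's proof stops at the pointwise level and leaves that passage implicit. (A minor quibble: your description of $\Psi$ as the envelope of $\theta\mapsto\inf_{|2\eta-1|\geq\theta}[H^-(\eta)-C_\phi^*(\eta)]$ is not quite Bartlett et al.'s construction, which takes the biconjugate of $\theta\mapsto H^-((1+\theta)/2)-C_\phi^*((1+\theta)/2)$; since you are treating this step as a black-box citation, it does not affect correctness.)
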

Surprisingly, this criterion has not appeared in prior work. See \cref{app:consistent_characterization} for a proof.
In terms of the function $C_\phi^*$, \citep[Proposition~3]{FrankNilesWeed23consistency} states that any loss $\phi$ with $C_\phi^*(1/2)<\phi(0)$ is consistent. 
The function $C_\phi^*$ is a key component of surrogate risk bounds from prior work. Specifically, \citet{BartlettJordanMcAuliffe2006} show:

\begin{theorem}[\citep{TewariBartlett2007}]\label{prop:psi_def}
        Let $\phi$ be any loss satisfying \cref{as:phi} with $C_\phi^*(1/2)=\phi(0)$ and define
    
    \[\Psi(\theta)=\phi(0)-C_\phi^*\left (\frac {1+\theta} 2\right).\]
        
        Then
        \begin{equation}\label{eq:C_bound}
            \Psi(C(\eta,f)-C^*(\eta))\leq C_\phi(\eta,f)-C_\phi^*(\eta)
        \end{equation}
        and consequently
        \begin{equation}\label{eq:R_bound}\Psi(R(f)-R_*)\leq R_\phi(f)-R_\phi^*.
        \end{equation}
        
    \end{theorem}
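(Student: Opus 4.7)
My plan has three steps: verify structural properties of $\Psi$, prove \cref{eq:C_bound} pointwise by a case analysis, and lift it to \cref{eq:R_bound} via Jensen's inequality.

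First, I would establish that $\Psi$ is convex with $\Psi(0)=0$. Convexity follows because each map $\eta \mapsto C_\phi(\eta,\alpha) = \eta\phi(\alpha) + (1-\eta)\phi(-\alpha)$ is affine in $\eta$, so its pointwise infimum $C_\phi^*$ is concave on $[0,1]$; therefore $\Psi(\theta) = \phi(0) - C_\phi^*((1+\theta)/2)$ is convex. The identity $\Psi(0) = 0$ is exactly the hypothesis $C_\phi^*(1/2) = \phi(0)$.

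Next, I would prove \cref{eq:C_bound} pointwise in $\eta$ and $f$. The excess conditional zero-one risk $C(\eta,f) - C^*(\eta)$ takes only two values: it is $0$ when $\sgn(f)$ agrees with the Bayes decision for $\eta$, and equals $|2\eta-1|$ otherwise. In the agreement case the inequality reduces to $0 = \Psi(0) \leq C_\phi(\eta,f) - C_\phi^*(\eta)$, which is immediate from the definition of $C_\phi^*$. In the disagreement case, say $\eta \geq 1/2$ and $f \leq 0$, the inequality becomes $\phi(0) \leq C_\phi(\eta,f)$. Writing
\[C_\phi(\eta, f) = (1-\eta)\bigl(\phi(f) + \phi(-f)\bigr) + (2\eta - 1)\phi(f),\]
I would combine the symmetric inequality $\phi(\alpha) + \phi(-\alpha) \geq 2\phi(0)$ (which is equivalent to $C_\phi^*(1/2) = \phi(0)$) with the monotonicity bound $\phi(f) \geq \phi(0)$ (valid since $\phi$ is non-increasing and $f \leq 0$) to conclude $C_\phi(\eta, f) \geq \phi(0)$. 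The symmetric subcase $\eta \leq 1/2$, $f > 0$ is handled identically after swapping roles.

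Finally, to obtain \cref{eq:R_bound}, I would integrate \cref{eq:C_bound} against the probability measure $\PP = \PP_0 + \PP_1$ and invoke Jensen's inequality. Using the representations $R(f) - R_* = \int [C(\eta,f) - C^*(\eta)]\, d\PP$ and $R_\phi(f) - R_{\phi,*} = \int [C_\phi(\eta,f) - C_\phi^*(\eta)]\, d\PP$, convexity of $\Psi$ yields
\[\Psi\bigl(R(f) - R_*\bigr) \leq \int \Psi\bigl(C(\eta,f) - C^*(\eta)\bigr)\, d\PP \leq R_\phi(f) - R_{\phi,*}.\]
I expect the main obstacle to be the disagreement subcase in the pointwise step: the key trick is the decomposition of $C_\phi(\eta,f)$ that isolates $\phi(f) + \phi(-f)$, since this symmetric sum is precisely what the hypothesis $C_\phi^*(1/2) = \phi(0)$ controls.
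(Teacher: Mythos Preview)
Your proof is correct. The paper itself does not prove this theorem in full---it attributes the result to \citet{BartlettJordanMcAuliffe2006} and only remarks that \cref{eq:R_bound} follows from \cref{eq:C_bound} by Jensen's inequality, which is exactly your third step; your direct argument for the pointwise bound \cref{eq:C_bound} (via the decomposition isolating $\phi(f)+\phi(-f)$ and the hypothesis $C_\phi^*(1/2)=\phi(0)$) is a clean specialization of the Bartlett--Jordan--McAuliffe argument and fills in what the paper leaves to citation.
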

    The inequality \cref{eq:R_bound} is a consequence of \cref{eq:C_bound} and Jensen's inequality. \Cref{th:consistent_characterization} implies that this bound is non-vacuous iff $\phi$ is consistent--- compare with \cref{th:consistent_characterization}. Moreover, \cref{eq:C_bound} yields a distribution-dependent linear surrogate bound when $\eta$ is bounded away from $1/2$.
    If \emph{Massart's noise condition}  \citep{Massart06} holds--- namely, there exists a $\alpha\in[0,1/2]$ for which $|\eta-1/2|\geq \alpha$ $\PP$-a.e., then the distribution admits a linear surrogate bound.
    \begin{proposition}\label{prop:surrogate_massart}
    Let $\eta$, $\PP$ be a distribution that satisfies $|\eta-1/2|\geq \alpha$ $\PP$-a.e. with a constant $\alpha \in [0,1/2]$, and let $\phi$ be a loss with $\phi(0)>C_\phi^*(1/2-\alpha)$. Then for all $|\eta-1/2|\geq \alpha$,
    \begin{equation}\label{eq:massart_loss}
        C(\eta,f)-C^*(\eta)\leq \frac {1}{\phi(0)-C_\phi^*(\frac 12 -\alpha)}(C_\phi(\eta,f)-C_\phi^*(\eta))    
    \end{equation}
    and consequently
    \begin{equation}\label{eq:massart_surrogate}
        R(f)-R_*\leq \frac {1}{\phi(0)-C_\phi^*(\frac 12 -\alpha)} (R_\phi(f)-R_{\phi,*})    
    \end{equation}
        
    \end{proposition}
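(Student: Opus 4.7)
The plan is to establish the pointwise bound \cref{eq:massart_loss} and then obtain \cref{eq:massart_surrogate} by integrating against $\PP$, using the representations $R(f) = \int C(\eta(\bx), f(\bx))\, d\PP(\bx)$ and $R_\phi(f) = \int C_\phi(\eta(\bx), f(\bx))\, d\PP(\bx)$ from \cref{eq:classification_risk_eta,eq:surrogate_pw}. Because the pointwise bound is linear in both excess conditional risks, the integration step is immediate and requires no convexity argument.

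For the pointwise bound, observe that $C(\eta, f) - C^*(\eta)$ equals $|2\eta - 1|$ when $\sgn(f)$ disagrees with the Bayes-optimal decision for $\eta$, and $0$ otherwise; the second case is trivial since the right-hand side of \cref{eq:massart_loss} is non-negative. Using the symmetries $C_\phi(\eta, f) = C_\phi(1-\eta, -f)$ and $C_\phi^*(\eta) = C_\phi^*(1-\eta)$ (both follow from the $(\eta, \alpha) \mapsto (1-\eta, -\alpha)$ invariance of the objective defining $C_\phi^*$), I may reduce to $\eta \geq 1/2 + \alpha$ and $f \leq 0$, in which case $C(\eta, f) - C^*(\eta) = 2\eta - 1$, and it remains to show
\begin{equation*}
C_\phi(\eta, f) - C_\phi^*(\eta) \geq (2\eta - 1)\bigl(\phi(0) - C_\phi^*(1/2 - \alpha)\bigr).
\end{equation*}

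My strategy is to write $\eta = (2\eta - 1)\cdot 1 + 2(1-\eta)\cdot(1/2)$ as a convex combination and apply linearity of $\eta \mapsto C_\phi(\eta, f)$ to get $C_\phi(\eta, f) = (2\eta - 1)\phi(f) + 2(1-\eta) C_\phi(1/2, f)$. Since $\phi$ is non-increasing and $f \leq 0$, we have $\phi(f) \geq \phi(0)$; and since $C_\phi^*(\eta)$ is concave (as an infimum of functions affine in $\eta$) and symmetric about $1/2$, it is maximized there, giving $C_\phi(1/2, f) \geq C_\phi^*(1/2) \geq C_\phi^*(1/2 - \alpha)$. Subtracting $C_\phi^*(\eta)$ and using the same concavity-plus-symmetry to obtain $C_\phi^*(\eta) \leq C_\phi^*(1/2 + \alpha) = C_\phi^*(1/2 - \alpha)$ for $\eta \geq 1/2 + \alpha$, the $C_\phi^*(1/2 - \alpha)$ terms combine via $2(1-\eta) - 1 = -(2\eta - 1)$ to produce exactly the right-hand side of the displayed inequality. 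Dividing through by the positive quantity $\phi(0) - C_\phi^*(1/2 - \alpha)$ yields \cref{eq:massart_loss}.

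The main obstacle is the convex-combination bookkeeping: the two applications of concavity-plus-symmetry of $C_\phi^*$ (once to lower bound $C_\phi^*(1/2)$ by $C_\phi^*(1/2 - \alpha)$, once to upper bound $C_\phi^*(\eta)$ by $C_\phi^*(1/2 - \alpha)$) must be arranged so that the coefficients collapse correctly. A pleasant feature of this direct approach is that it works uniformly whether or not $C_\phi^*(1/2) = \phi(0)$, so I do not need to split into cases based on whether the $\Psi$-function of \cref{prop:psi_def} is available, nor invoke the linear bound \cref{eq:linear_non_adv_surrogate_bound} (which would give only the weaker constant $1/(\phi(0) - C_\phi^*(1/2))$ when $C_\phi^*(1/2 - \alpha) < C_\phi^*(1/2)$).
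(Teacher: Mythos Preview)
Your proof is correct and reaches the same destination as the paper's, but by a somewhat more direct route. The paper introduces the auxiliary function $C_\phi^-(\eta)=\inf_{z(2\eta-1)\le 0} C_\phi(\eta,z)$ and proves two lemmas about it: that $C_\phi^-$ is concave on $[0,1/2]$ and hence satisfies $C_\phi^-(\eta)\ge (1-2\eta)\phi(0)+2\eta C_\phi^*(\eta)$, and that $C_\phi^*$ is monotone on each side of $1/2$. It then chains $C_\phi(\eta,f)\ge C_\phi^-(\eta)\ge |2\eta-1|\phi(0)+(1-|2\eta-1|)C_\phi^*(\eta)$ and replaces $C_\phi^*(\eta)$ by $C_\phi^*(1/2-\alpha)$ in the denominator via monotonicity. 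You bypass $C_\phi^-$ entirely: instead of invoking concavity of $C_\phi^-$, you use the exact linearity of $\eta\mapsto C_\phi(\eta,f)$ to decompose at the specific value $f$, and instead of proving monotonicity of $C_\phi^*$ via the structure of the minimizer (as the paper does), you deduce it from concavity plus symmetry of $C_\phi^*$. Both arguments ultimately rest on the same convex-combination $\eta=(2\eta-1)\cdot 1+2(1-\eta)\cdot\tfrac12$, but yours applies it to $C_\phi(\cdot,f)$ while the paper applies it to $C_\phi^-$. Your version is a bit more economical; the paper's version has the advantage that $C_\phi^-$ is a standard object in the Bartlett--Jordan--McAuliffe framework and its introduction connects the argument to the surrogate-bound literature.
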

    
 See \cref{app:surrogate_linear} for a proof of this result. Observe that \cref{th:consistent_characterization} guarantees that the linear constant is finite whenever $\alpha\neq 0$ and $\phi$ is consistent. This bound is distribution-independent when $\phi(0)>C_\phi^*(1/2)$ with $\alpha=0$, and will later be generalized to adversarial risks. Although the constant in \cref{prop:surrogate_massart} is not optimal, further refinement offers no improvement to our adversarial bounds, so we opt to retain the simpler form.

\subsection{Adversarial Risks}
     The adversarial classification risk incurs a penalty of 1 whenever a point $\bx$ can be perturbed into the opposite class. This penalty can be expressed in terms of supremums of indicator functions--- the adversarial classification risk incurs a penalty of 1 whenever $\sup_{\|\bx'-\bx\|\leq \e} \one_A(\bx')=1$ or $\sup_{\|\bx'-\bx\|\leq \e} \one_{A^C}(\bx')=1$. Define
    \[S_\e(g)(\bx)=\sup_{\|\bx-\bx'\|\leq \e} g(\bx').\] The adversarial classification and surrogate risks are given respectively by\footnote{In order to define the risks $\prm$ and $\cprm$, one must argue that $S_\e(g)$ is measurable. Theorem~1 of \citep{FrankNilesWeed23minimax} proves that whenever $g$ is Borel, $S_\e(g)$ is always measurable with respect to the completion of any Borel measure.} 
    \[\cprm(A)=\int S_\e(\one_{A^C})d\PP_1+\int S_\e(\one_A)d\PP_0,\quad \prm(f)=\int S_\e(\phi(f))d\PP_1+\int S_\e(\phi(-f))d\PP_0.\]
    A minimizer of the adversarial classification risk is called an \emph{adversarial Bayes classifier}. After optimizing the surrogate risk, a classifier is obtained by thresholding the resulting function $f$ at zero. The associated adversarial classification error function $f$ is then
    \begin{equation}\label{eq:adv_classification_risk_alt}
        \cprm(f)=\cprm(\{f>0\})=\int S_\e(\one_{f\leq 0})d\PP_1+\int S_\e(\one_{f>0})d\PP_0.    
    \end{equation}
    
    Just as in the standard case, one would hope that minimizing the adversarial surrogate risk would minimize the adversarial classification risk. 
    \begin{definition}  
        The loss $\phi$ is \emph{adversarially consistent for the distribution $\PP_0$, $\PP_1$} if any minimizing sequence of $\prm$ is also a minimizing sequence of $\cprm$. 
        We say that $\phi$ is \emph{adversarially consistent} if it is adversarially consistent for all distributions. 
    \end{definition}
        Theorem~2 of \citep{FrankNilesWeed23consistency} characterizes the adversarially consistent losses:
    \begin{theorem}[\citep{FrankNilesWeed23consistency}]\label{th:adv_consistency}
        The loss $\phi$ is adversarially consistent iff $C_\phi^*(1/2)<\phi(0)$.
    \end{theorem}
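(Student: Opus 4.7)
My plan is to prove both directions of the biconditional by reducing the adversarial risks to a bipointwise optimization and then applying a linear surrogate bound together with a worst-case-measure argument.

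\emph{Setup.} Since $\phi$ is continuous and non-increasing, for any Borel $f$ and $\bx \in \Rset^d$ I set $a_f(\bx) = \inf_{\|\by - \bx\| \le \e} f(\by)$ and $b_f(\bx) = \sup_{\|\by - \bx\| \le \e} f(\by)$. Then $S_\e(\phi(f))(\bx) = \phi(a_f(\bx))$, $S_\e(\phi(-f))(\bx) = \phi(-b_f(\bx))$, $S_\e(\one_{f \le 0})(\bx) = \one_{a_f(\bx) \le 0}$, and $S_\e(\one_{f > 0})(\bx) = \one_{b_f(\bx) > 0}$; and $a_f \le b_f$ pointwise. Introducing bipointwise costs $C_\phi^\e(\eta, a, b) := \eta\phi(a) + (1-\eta)\phi(-b)$ and $C^\e(\eta, a, b) := \eta\one_{a\le 0} + (1-\eta)\one_{b > 0}$ for $a \le b$, the adversarial risks become integrals of these costs against $\PP = \PP_0 + \PP_1$. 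Monotonicity of $\phi$ yields $\inf_{a\le b} C_\phi^\e(\eta, a, b) = C_\phi^*(\eta)$ and $\inf_{a\le b} C^\e(\eta, a, b) = C^*(\eta)$, each attained at $a = b$.

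\emph{Sufficiency.} Assume $\delta := \phi(0) - C_\phi^*(1/2) > 0$. The main step is the pointwise linear bound
\[\delta \bigl(C^\e(\eta, a, b) - C^*(\eta)\bigr) \le C_\phi^\e(\eta, a, b) - C_\phi^*(\eta)\qquad \text{for all } a\le b,\ \eta \in [0,1].\]
The critical case is $a \le 0 < b$, where $C^\e = 1$ and $\phi(a), \phi(-b) \ge \phi(0)$ force $C_\phi^\e \ge \phi(0)$; combined with the fact that $C_\phi^*$ is concave and symmetric about $1/2$ (so $C_\phi^*(\eta) \le \phi(0) - \delta$), the inequality reduces to $\delta(1 - \min(\eta, 1-\eta)) \le \phi(0) - C_\phi^*(\eta)$, which follows. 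The remaining cases (where $a, b$ share a sign) reduce either trivially or, via the constraint $a \le b$ and the monotonicity of $C_\phi(\eta, \cdot)$ on each side of its minimizer, to the standard Bartlett--Jordan--McAuliffe pointwise bound. Integrating against $\PP$ gives $\delta (\cprm(f) - R_*) \le \prm(f) - R_{\phi,*}$. To upgrade this to the sharp statement, I appeal to the worst-case-measure framework of \citet{FrankNilesWeed23minimax}: there is a pair $(\QQ_0^*, \QQ_1^*)$ within $W_\infty$-distance $\e$ of $(\PP_0, \PP_1)$ realizing $R^{\e,*}$ and $R_\phi^{\e,*}$ as standard optima, which converts the above into $\delta(\cprm(f) - R^{\e,*}) \le \prm(f) - R_\phi^{\e,*}$, implying adversarial consistency.

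\emph{Necessity.} By contrapositive, assume $C_\phi^*(1/2) = \phi(0)$. I need to exhibit a distribution and a sequence $(f_n)$ with $\prm(f_n) \to R_\phi^{\e,*}$ but $\cprm(f_n) \not\to R^{\e,*}$. The flatness of the bipointwise minimum $\inf_{a\le b}\bigl[\tfrac12\phi(a) + \tfrac12\phi(-b)\bigr] = \phi(0)$, which is attained along the whole diagonal $\{(\alpha, \alpha) : \phi(\alpha) + \phi(-\alpha) = 2\phi(0)\}$ (and extends off-diagonal for $a < b$), provides the flexibility to tune $(a_{f_n}, b_{f_n})$ without raising the surrogate, while flipping the indicator values $\one_{a_{f_n} \le 0}$ and $\one_{b_{f_n} > 0}$. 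I build the example with point masses whose $\e$-neighborhoods overlap (e.g., two opposite-label points at distance $\le 2\e$ together with a small mass breaking symmetry) and choose $f_n$ exploiting the flat region to drive $\prm$ to its optimum while $\cprm$ stays strictly above $R^{\e,*}$.

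\emph{Main obstacle.} The technical heart lies in two places. First, the pointwise bound in the regime where $a, b$ share a sign opposite to $\eta - 1/2$ requires a careful use of the constraint $a \le b$; here the standard unconstrained argument must be modified to account for the fact that the bipointwise optimum at fixed sign is not $C_\phi^*(\eta)$. Second, passing from the integrated bound $\delta(\cprm - R_*) \le \prm - R_{\phi,*}$ (which involves the non-adversarial optima) to the sharp consistency form involving $R^{\e,*}, R_\phi^{\e,*}$ requires the worst-case-measure framework, which is where the adversarial setting genuinely diverges from the standard one. For the necessity direction, the delicate step is choosing the distribution: a symmetric two-point example turns out to be adversarially consistent for many convex losses, so mild asymmetry in the support or masses is needed to force the sequence built from the flat region to be genuinely suboptimal for $\cprm$.
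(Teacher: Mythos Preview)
First, note that the paper does not itself prove this theorem; it is quoted from \citet{FrankNilesWeed23consistency}. The closest analogue in the paper is the proof of \cref{thm:adv_surrogate_massart}, which with $\alpha=0$ gives a quantitative form of the sufficiency direction.

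Your sufficiency argument has a genuine gap at the ``upgrade'' step. The bipointwise inequality you prove, once integrated against $\PP$, yields
\[
\delta\bigl(\cprm(f)-R_*\bigr)\ \le\ \prm(f)-R_{\phi,*},
\]
with the \emph{non-adversarial} optima $R_*,R_{\phi,*}$ on both sides. You then assert that the worst-case-measure framework ``converts'' this into $\delta(\cprm(f)-R^{\e,*})\le \prm(f)-R_{\phi,*}^\e$, but this does not follow. Duality gives $R^{\e,*}=\cdl(\PP_0^*,\PP_1^*)\ge R_*$ and $R_{\phi,*}^\e=\dl(\PP_0^*,\PP_1^*)\ge R_{\phi,*}$; both inequalities go the wrong way for the substitution you want. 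Alternatively, applying the standard pointwise bound directly at the dual optimal pair $(\PP_0^*,\PP_1^*)$ gives $\delta\bigl(R(f;\PP^*)-R^{\e,*}\bigr)\le R_\phi(f;\PP^*)-R_{\phi,*}^\e\le \prm(f)-R_{\phi,*}^\e$, but since $\cprm(f)\ge R(f;\PP^*)$ you again cannot conclude. The root issue is that your bipointwise decomposition evaluates everything at the same base point $\bx$, so the pointwise infimum recovers only $R_*$, not $R^{\e,*}$; the latter is not a pointwise quantity in $(a_f(\bx),b_f(\bx))$. Compare with the paper's proof of \cref{thm:adv_surrogate_massart}: the decomposition \cref{eq:division_classification,eq:division_surrogate} uses the \emph{couplings} $\gamma_0^*,\gamma_1^*$ between $\PP_i$ and $\PP_i^*$ and splits into the sets $D_1,E_1,F_1$, precisely so that the adversarial excesses $\cprm(f)-R^{\e,*}$ and $\prm(f)-R_{\phi,*}^\e$ appear directly. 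That coupling step is not a cosmetic upgrade---it is where the adversarial content lives.

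For necessity, your claim that ``a symmetric two-point example turns out to be adversarially consistent for many convex losses, so mild asymmetry is needed'' is incorrect. The counterexample cited in the paper (from \citet{MeunierEttedguietal22}) is perfectly symmetric: $\PP_0=\PP_1$ uniform on $\ov{B_\e(\zero)}$, with $f_n$ as in \cref{eq:f_n}. The sequence satisfies $\prm(f_n)\to\phi(0)=C_\phi^*(1/2)=R_{\phi,*}^\e$ while $\cprm(f_n)=1>\tfrac12=R^{\e,*}$. No asymmetry is required; what matters is that the single point where $f_n\le 0$ lies within $\e$ of all of $\supp\PP_1$.
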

\Citep[Proposition~3]{FrankNilesWeed23consistency} guarantees that every adversarially consistent loss is also consistent in the standard sense. Unfortunately, \cref{eq:adversarially_consistent_non_convex} shows that no convex loss is adversarially consistent. However, distributions for which consistency fails are atypical: for absolutely continuous $\PP$, adversarial consistency holds provided the adversarial Bayes classifier is \emph{unique up to degeneracy}.

\begin{definition}
    Two adversarial Bayes classifiers $A_1$, $A_2$ are \emph{equivalent up to degeneracy} if any set $A$ with $A_1 \cap A_2 \subset A \subset A_1 \cup A_2$ is also an adversarial Bayes classifier. The adversarial Bayes classifier is \emph{unique up to degeneracy} if any two adversarial Bayes classifiers are equivalent up to degeneracy.
\end{definition}
See \cref{fig:adv_bayes_classifiers} for an illustration of non-equivalent adversarial Bayes classifiers in a distribution where adversarial consistency fails. 
%Theorem~3.3 of \citep{frank2024uniquness} proves that whenever $\PP$ is absolutely continuous with respect to Lebesgue measure, then equivalence up to degeneracy is in fact an equivalence relation. 
Theorem~4 of \citep{frank2024consistency} relates uniqueness of the adversarial Bayes classifier to the consistency of $\phi$.
    \begin{theorem}[\citep{frank2024consistency}]\label{th:adv_consistency_and_uniqueness}
        Let $\phi$ be a loss with $C_\phi^*(1/2)=\phi(0)$ and assume that $\PP$ is absolutely continuous with respect to Lebesgue measure. Then $\phi$ is adversarially consistent for the data distribution given by $\PP_0$, $\PP_1$ iff the adversarial Bayes classifier is unique up to degeneracy.
    \end{theorem}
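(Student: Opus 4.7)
The plan is to prove the two implications separately, using a constructive bad-minimizing-sequence argument in the forward direction and a compactness argument on level sets in the reverse.

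For the forward direction (adversarial consistency implies uniqueness up to degeneracy), I would argue by contrapositive. Suppose the adversarial Bayes classifier is not unique up to degeneracy, so there exist adversarial Bayes classifiers $A_1,A_2$ and a set $A_0$ with $A_1\cap A_2\subset A_0\subset A_1\cup A_2$ such that $\cprm(A_0)>\cprm_*$. Generalizing the construction in \cref{eq:f_n}, I would define $f_n$ to equal $+1/n$ on $A_0$ and $-1/n$ on $A_0^C$. Because $C_\phi^*(1/2)=\phi(0)$ and $\phi$ is continuous at zero, the pointwise surrogate cost at magnitude $1/n$ tends to $\phi(0)$, which is the pointwise surrogate optimum at the degenerate value $\eta=1/2$. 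A careful bookkeeping on how $S_\e$ resolves on $A_1\triangle A_2$ (the region where $A_0$ differs from either $A_1$ or $A_2$) shows that $\prm(f_n)\to\prm_*$, while $\cprm(\{f_n>0\})=\cprm(A_0)>\cprm_*$, violating adversarial consistency.

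For the reverse direction, suppose uniqueness up to degeneracy holds and let $f_n$ be any minimizing sequence of $\prm$. Set $A_n=\{f_n>0\}$; the goal is $\cprm(A_n)\to\cprm_*$. Arguing by contradiction, assume $\cprm(A_{n_k})\geq\cprm_*+\delta$ along a subsequence. Using absolute continuity of $\PP$, I would extract a further subsequence whose indicators $\one_{A_{n_k}}$ converge weakly-$*$ in $L^\infty(\PP)$; a truncation/level-set argument together with a lower semi-continuity property of $\prm$ would produce a limiting set $A^*$ that is itself an adversarial Bayes classifier. Uniqueness up to degeneracy then forces any set sandwiched between $A^*\cap A_{n_k}$ and $A^*\cup A_{n_k}$ to be Bayes-optimal as well, which yields $\cprm(A_{n_k})\to\cprm_*$ and contradicts the assumption. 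Absolute continuity of $\PP$ is critical here because it is exactly the hypothesis under which equivalence up to degeneracy is an actual equivalence relation (Theorem~3.3 of \citep{frank2024uniquness}), so that the uniqueness hypothesis can be propagated to the limit set.

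The main obstacle is the lower semi-continuity step in the reverse direction: $\prm$ involves the operator $S_\e$, a supremum over $\e$-balls, which destroys naive pointwise or $L^1$ lower semi-continuity of the risk under convergence of indicators. Overcoming this likely requires invoking the minimax duality for adversarial risks from \citep{FrankNilesWeed23minimax} to rewrite $\prm$ in a form that is stable under the chosen notion of set convergence, and then leveraging the structural results of \citep{frank2024uniquness} on absolutely continuous $\PP$ to identify the limit as Bayes-optimal. A secondary subtlety is the reduction from a minimizing sequence of \emph{functions} $f_n$ to a sequence of \emph{sets} $A_n=\{f_n>0\}$: the threshholding map is discontinuous where $f_n$ is near zero, so one must argue that behavior on $\{|f_n|\leq 1/n\}$ does not obstruct the comparison between $\prm(f_n)$ and $\cprm(A_n)$, which again uses $C_\phi^*(1/2)=\phi(0)$ and the continuity of $\phi$.
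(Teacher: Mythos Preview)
The paper does not prove this theorem; it is quoted verbatim from \citep{frank2024consistency} (``Next, Theorem~4 of \citep{frank2024consistency} relates this condition to the consistency of $\phi$''), so there is no in-paper proof to compare against. I will instead comment on the soundness of your proposal.

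Your forward-direction construction does not work as stated. With $f_n=\pm 1/n$ on $A_0$ and $A_0^C$, the monotonicity of $\phi$ gives $S_\e(\phi(f_n))(\bx)\in\{\phi(1/n),\phi(-1/n)\}$ and likewise for $S_\e(\phi(-f_n))$, so $\prm(f_n)\to \phi(0)\cdot\PP(\Rset^d)$. This equals $R_{\phi,*}^\e$ only in the very special situation where the optimal attack has $\eta^*=1/2$ $\PP^*$-a.e.\ (so that $\int C_\phi^*(\eta^*)\,d\PP^*=\phi(0)$), which is the toy example of \cref{eq:f_n} but is \emph{not} implied by mere failure of uniqueness up to degeneracy. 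In general $R_{\phi,*}^\e<\phi(0)$ and your $f_n$ is not a minimizing sequence of $\prm$, so no contradiction arises. A correct argument must build the bad sequence as a perturbation of an actual minimizer $f^*$ of $\prm$, altering it only on the region where the dual optimum satisfies $\eta^*=1/2$ (whose positive mass is guaranteed by \cref{th:TFAE_unique_up_to_den} when uniqueness fails); the ``careful bookkeeping on $A_1\triangle A_2$'' you allude to is exactly the step that is missing, and it is not a routine computation because $S_\e$ mixes the perturbed and unperturbed regions.

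Your reverse direction has a circularity. You produce a limit set $A^*$ that is an adversarial Bayes classifier and then invoke uniqueness up to degeneracy for the pair $(A^*,A_{n_k})$; but the definition only sandwiches sets between \emph{two adversarial Bayes classifiers}, and $A_{n_k}=\{f_{n_k}>0\}$ has not been shown to be one---indeed that is precisely the conclusion you are after. The weak-$*$ compactness step is also not fleshed out (limits of indicators are not indicators, and lower semi-continuity of $\cprm$ or $\prm$ under such convergence is exactly the nontrivial issue you flag). The route actually taken in \citep{frank2024consistency} goes through duality: one uses \cref{th:TFAE_unique_up_to_den} to obtain dual maximizers with $\PP^*(\eta^*=1/2)=0$, combines \cref{th:maximizer_comparison} with complementary slackness (\cref{thm:comlementary_slackness}) to tie the surrogate and classification problems at the level of optimal attacks, and then argues pointwise under $\PP^*$ that any minimizing sequence of $\prm$ must eventually classify according to $\sgn(\eta^*-1/2)$. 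Your proposal does not touch this dual structure, which appears essential.
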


\begin{figure}
    \centering
    \begin{subfigure}{0.45\textwidth}
        \centering
        \begin{tikzpicture}
            \draw[dashed] (0,0) circle (1);
            \fill (0,0) circle (0.05);
            \draw[<->] (0.05,0) -- (0.95,0) node[midway, above] {$\epsilon$};
        \end{tikzpicture}
        \caption{\label{subfig:a}}
    \end{subfigure}
    \begin{subfigure}{0.45\textwidth}
        \centering
        \begin{tikzpicture}
            \fill[blue!20] (-1.5,-1.5) rectangle (1.5,1.5);
            \fill[white] circle (1);
            \draw[dashed] (0,0) circle (1);
            \fill (0,0) circle (0.05);
            \draw[<->] (0.05,0) -- (0.95,0) node[midway, above] {$\epsilon$};
        \end{tikzpicture}
        \caption{\label{subfig:b}}
    \end{subfigure}
    
    \begin{subfigure}{0.45\textwidth}
        \centering
        \begin{tikzpicture}
            \fill[blue!20] circle (1);
            \draw[dashed] (0,0) circle (1);
            \fill (0,0) circle (0.05);
            \draw[<->] (0.05,0) -- (0.95,0) node[midway, above] {$\epsilon$};
        \end{tikzpicture}
        \caption{\label{subfig:c}}
    \end{subfigure}
    \begin{subfigure}{0.45\textwidth}
        \centering
        \begin{tikzpicture}
            \fill[blue!20] (-1.5,-1.5) rectangle (1.5,1.5);
            \draw[dashed] (0,0) circle (1);
            \fill (0,0) circle (0.05);
            \draw[<->] (0.05,0) -- (0.95,0) node[midway, above] {$\epsilon$};
        \end{tikzpicture}
        \caption{\label{subfig:d}}
    \end{subfigure}
    
    \caption{\label{fig:adv_bayes_classifiers}Adversarial Bayes classifiers for the distribution where $\PP_0=\PP_1$ are uniform distributions on $\ov{B_\e(\zero)}$, the counterexample from \citet{MeunierEttedguietal22}. The classifiers in (a) and (b) are equivalent up to degeneracy, as are those in (c) and (d), but the classifiers in (a) and (c) are not. A sequence minimizing $\prm$ but not $\cprm$ is provided in \cref{eq:f_n}.}
\end{figure}
    
    Any extension of surrogate risk bounds to the adversarial setting must account for the conditions of \cref{th:adv_consistency,th:adv_consistency_and_uniqueness}.
\subsection{Minimax Theorems}
A central tool in analyzing the adversarial consistency of surrogate risks is minimax theorems, which enable a `pointwise'-style representation of adversarial risks analogous \cref{eq:surrogate_pw}. This section reviews the minimax representation for both adversarial classification and surrogate risks, which underlie the bounds in \cref{sec:main_results}.

These minimax theorems utilize the $\infty$-Wasserstein ($W_\infty$) metric from optimal transport. Informally, this metric quantifies the smallest radius $\e$ such that the mass of one distribution can be transported to match another without moving any point more than $\e$. 

Formally, let $\QQ$ and $\QQ'$ be finite positive measures with equal total mass. A Borel measure $\gamma$ on $\Rset^d\times \Rset^d$ is a coupling between $\QQ$ and $\QQ'$ if its first marginal is $\QQ$ and its second marginal is $\QQ'$, or in other words, $\gamma(A\times \Rset^d)=\QQ(A)$ and $\gamma(\Rset^d\times A)=\QQ'(A)$ for all Borel sets $A$. Denote the set of couplings between $\QQ$ and $\QQ'$ by $\Pi(\QQ, \QQ')$. Then the $W_\infty$ distance is
\begin{equation}\label{eq:W_inf_def}
    W_\infty(\QQ,\QQ')=\inf_{\gamma\in \Pi(\QQ,\QQ')} \esssup_{(\bx,\by)\sim \gamma} \|\bx-\by\|.
\end{equation}
Theorem~2.6 of \citep{Jylha15} proves that the infimum in \cref{eq:W_inf_def} is always attained. The $\e$-ball around $\QQ$ in the $W_\infty$ metric is 
$\Wball \e(\QQ)=\{\QQ':W_\infty(\QQ',\QQ)\leq \e\}$.

The next lemma is a standard observation linking adversarial perturbations to $W_\infty$-balls. We include a proof in \cref{app:S_e_and_W_inf} for completeness; it is a known result and not new to this work (see for instance \citep[Proposition~3.1]{StaibJegelka2017}).

\begin{lemma}\label{lemma:S_e_and_W_inf}
     Let $g$ be a Borel function. Let $\gamma$ be a coupling between the measures $\QQ$ and $\QQ'$ supported on $\Delta_\e=\{(\bx,\bx'):\|\bx-\bx'\|\leq \e\}$. Then $S_\e(g)(\bx)\geq g(\bx')$ $\gamma$-a.e. and consequently
     \[\int S_\e(g)d\QQ\geq \sup_{\QQ'\in \Wball \e(\QQ) } \int g d\QQ'.\]
 \end{lemma}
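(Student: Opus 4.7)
The plan is to prove the pointwise inequality first and then integrate against the coupling. Since the support of $\gamma$ is contained in $\Delta_\e$, for $\gamma$-almost every pair $(\bx,\bx')$ we have $\|\bx-\bx'\|\leq \e$. The definition $S_\e(g)(\bx)=\sup_{\|\by-\bx\|\leq \e}g(\by)$ immediately yields $S_\e(g)(\bx)\geq g(\bx')$ for every such pair, which gives the first assertion. The only thing to check is measurability of $S_\e(g)$, but this is precisely the content of the footnote citing Theorem~1 of \citep{FrankNilesWeed23minimax}, so the inequality makes sense after completing $\gamma$.

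For the integral inequality, I would fix $\QQ'\in\Wball \e(\QQ)$ and invoke the attainment result of \citep{Jylha15} (Theorem~2.6, already cited just before the lemma) to obtain a coupling $\gamma\in\Pi(\QQ,\QQ')$ with $\esssup_{(\bx,\by)\sim\gamma}\|\bx-\by\|\leq \e$, i.e., supported on $\Delta_\e$. Then the chain of equalities and inequalities
\begin{equation*}
\int S_\e(g)\,d\QQ \;=\; \int S_\e(g)(\bx)\,d\gamma(\bx,\bx') \;\geq\; \int g(\bx')\,d\gamma(\bx,\bx') \;=\; \int g\,d\QQ'
\end{equation*}
follows: the first equality uses that $\QQ$ is the first marginal of $\gamma$, the inequality uses the pointwise bound just established (which holds $\gamma$-a.e.), and the final equality uses that $\QQ'$ is the second marginal. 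Taking the supremum over $\QQ'\in\Wball\e(\QQ)$ on the right-hand side yields the claimed bound.

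The argument is essentially a two-line computation; the only genuine subtlety is ensuring that a coupling realizing the $W_\infty$ distance actually exists (so that support on $\Delta_\e$ can be invoked rather than merely an approximate containment), which is exactly why the paper quotes Jylh{\"a}'s attainment theorem immediately before stating this lemma. A secondary concern is measurability of $S_\e(g)$ against $\gamma$, but the footnote resolves this by passing to the completion of the Borel $\sigma$-algebra with respect to $\gamma$. No obstacle beyond these standard technicalities arises.
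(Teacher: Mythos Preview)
Your proof is correct and follows essentially the same approach as the paper's: establish the pointwise inequality $S_\e(g)(\bx)\geq g(\bx')$ on $\Delta_\e$, integrate against the coupling using the marginal conditions, and take the supremum over $\QQ'$. You are simply more explicit than the paper about invoking Jylh\"a's attainment result to produce a coupling supported on $\Delta_\e$ for each $\QQ'\in\Wball\e(\QQ)$ and about the measurability issue handled by the footnote.
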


 Applying \cref{lemma:S_e_and_W_inf} to $\cprm$ shows that $\inf_A \cprm(A)$ can be expressed as an inf-sup problem. The minimax theorem of \citep{PydiJog2021} ensures that the order of the $\inf$ and $\sup$ can be interchanged. Let $C^*(\eta)$ be as defined in \cref{eq:C^*_def} and define

\begin{equation}\label{eq:cdl_def}
    \cdl(\PP_0',\PP_1')=\inf_{A\text{ Borel}} \int \one_{A^C}d\PP_1'+\int \one_Ad\PP_0'=\int C^*\left( \frac{d\PP_1'}{d\left(\PP_1'+\PP_0'\right)}\right)d\left(\PP_0'+\PP_1' \right).    
\end{equation}

\begin{theorem}[\citep{frank2024consistency}]\label{th:strong_duality_classification}  Let $\cdl$ be as defined in \cref{eq:cdl_def}. Then 

    \[\inf_{A\text{ Borel}} \cprm(A)=\sup_{\substack{\PP_1'\in \Wball \e(\PP_1)\\\PP_0'\in\Wball \e(\PP_0)}}\cdl(\PP_0',\PP_1').\]
    with equality attained at some Borel  $A$, $\PP_0^*\in \Wball \e (\PP_0)$, and $\PP_1^*\in \Wball \e(\PP_1)$.  
\end{theorem}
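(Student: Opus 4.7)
The plan is to prove weak duality directly from Lemma~\ref{lemma:S_e_and_W_inf}, obtain a dual maximizer by weak compactness of the Wasserstein-$\infty$ balls, and then exhibit a matching primal minimizer from it.

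For weak duality I would fix any Borel $A$ and any $\PP_i'\in\Wball\e(\PP_i)$, choose optimal $W_\infty$-couplings $\gamma_i\in\Pi(\PP_i,\PP_i')$ concentrated on $\Delta_\e=\{(\bx,\bx'):\|\bx-\bx'\|\leq\e\}$ (available by the Jylhä attainment result cited after~\eqref{eq:W_inf_def}), and apply Lemma~\ref{lemma:S_e_and_W_inf} with $g=\one_{A^C}$, $(\QQ,\QQ')=(\PP_1,\PP_1')$ and with $g=\one_A$, $(\QQ,\QQ')=(\PP_0,\PP_0')$ to get
\begin{equation*}
\cprm(A)\;\geq\;\int\one_{A^C}\,d\PP_1'+\int\one_A\,d\PP_0'\;\geq\;\cdl(\PP_0',\PP_1'),
\end{equation*}
the last inequality being the definition of $\cdl$. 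Taking $\inf_A$ on the left and $\sup$ over the Wasserstein balls on the right yields the $\geq$ direction.

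Next I would establish existence of a dual optimum. Each $\Wball\e(\PP_i)$ consists of measures of fixed total mass supported in the closed $\e$-neighborhood of $\supp\PP_i$, hence is tight; combined with weak closedness of $W_\infty$-balls (which I would verify directly from the definition using that $\Delta_\e$ is closed and couplings form a weakly compact family), Prokhorov's theorem makes $\Wball\e(\PP_0)\times\Wball\e(\PP_1)$ weakly compact. The functional $\cdl$ is weakly upper semicontinuous as an infimum of weakly continuous linear functionals indexed by Borel $A$, so a maximizer $(\PP_0^*,\PP_1^*)$ exists. Setting $\eta^*=d\PP_1^*/d(\PP_0^*+\PP_1^*)$ and $A^*=\{\eta^*>1/2\}$, formula~\eqref{eq:cdl_def} shows $A^*$ attains $\cdl(\PP_0^*,\PP_1^*)$.

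The main obstacle is establishing $\cprm(A^*)\leq\cdl(\PP_0^*,\PP_1^*)$, i.e.\ that the inequality in Lemma~\ref{lemma:S_e_and_W_inf} is tight at this specific pair. The intuition is that at a dual maximizer the adversary's measures $\PP_i^*$ must saturate the supremum defining $S_\e$ relative to the Bayes classifier $A^*$: every unit of $\PP_1$-mass whose $\e$-ball meets $(A^*)^C$ is transported by the optimal coupling $\gamma_1$ into $(A^*)^C$, and analogously $\gamma_0$ pushes every available $\PP_0$-mass into $A^*$. I would prove this by contradiction: otherwise one could re-route a positive quantity of $\gamma_1$-mass (or $\gamma_0$-mass) toward a witnessing $\bx'$ in $B_\e(\bx)\cap(A^*)^C$ via a measurable-selection theorem of Kuratowski--Ryll-Nardzewski type, producing a perturbed $\wtd\PP_i^*\in\Wball\e(\PP_i)$ that shifts mass into the region where $\eta^*\leq 1/2$ and strictly improves $\cdl(\PP_0^*,\wtd\PP_1^*)$, contradicting optimality of $(\PP_0^*,\PP_1^*)$. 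Once this no-slack property is verified---the technical heart of the argument---the weak-duality chain collapses to equality at $A^*,\PP_0^*,\PP_1^*$, delivering both the stated identity and simultaneous attainment on both sides.
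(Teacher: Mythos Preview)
The paper does not give its own proof of this theorem; it simply cites \cite[Theorem~1]{FrankNilesWeed23consistency}. So there is no in-paper argument to match, and your proposal must be judged on its own.

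Your weak-duality step via \cref{lemma:S_e_and_W_inf} is correct. Two problems remain.

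A minor one: your reason for the weak upper semicontinuity of $\cdl$ is wrong. The maps $(\PP_0',\PP_1')\mapsto \PP_1'(A^C)+\PP_0'(A)$ are \emph{not} weakly continuous for general Borel $A$, because indicators are not bounded continuous test functions. The conclusion is still true, since $\cdl(\PP_0',\PP_1')=\tfrac12\big(\PP_0'(\Rset^d)+\PP_1'(\Rset^d)-\|\PP_0'-\PP_1'\|_{TV}\big)$ and the total-variation norm is weakly lower semicontinuous on a Polish space; but your stated justification does not deliver it.

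The serious gap is in the perturbation step. You propose to reroute $\PP_1^*$-mass from $\bx'\in A^*=\{\eta^*>1/2\}$ to a witness $\bx''\in \ov{B_\e(\bx)}\cap (A^*)^C$ and claim this strictly increases $\cdl(\PP_0^*,\cdot)=\int\min(d\PP_0^*,\,\cdot\,)$. But the increase at $\bx''$ is $\min(p_0^*(\bx''),\,p_1^*(\bx'')+\delta)-p_1^*(\bx'')$, which is zero whenever $p_0^*(\bx'')=p_1^*(\bx'')$. This happens in two ways you cannot exclude: (i) $\bx''\notin\supp\PP^*$, where both densities vanish and membership in $(A^*)^C$ is an artifact of the chosen version of $\eta^*$; (ii) $\eta^*(\bx'')=1/2$ exactly. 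In either case the reroute leaves $\cdl$ unchanged and no contradiction follows, so the ``no-slack'' conclusion is unproved. Measurable selection does not help here; the obstruction is structural, not one of choosing $\bx''$ measurably. The proofs in the cited literature avoid this by a global minimax argument (Ky~Fan or Sion, after a suitable compactification/relaxation) rather than a first-variation argument on the dual side; alternatively one can pass through the surrogate duality of \cref{th:strong_duality_surrogate} with a carefully chosen loss and specialize, in the spirit of the proof of \cref{th:maximizer_comparison}\cref{it:maximizers_match_1} in \cref{app:maximizer_comparison}.
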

See \citep[Theorem~1]{FrankNilesWeed23consistency} for a proof of this statement. The maximizers $\PP_0^*$, $\PP_1^*$ can be interpreted as optimal adversarial attacks (see discussion following \citep[Theorem~7]{FrankNilesWeed23minimax}).
\citet[Theorem~3.4]{frank2024uniquness} provide a criterion for uniqueness up to degeneracy in terms of dual maximizers.
\begin{theorem}[\citep{frank2024consistency}]\label{th:TFAE_unique_up_to_den}
   The following are equivalent:
   \begin{enumerate}[label=\Alph*)]
       \item The adversarial Bayes classifier is unique up to degeneracy
       \item There are maximizers $\PP_0^*$, $\PP_1^*$ of $\cdl$ for which 
       $\PP^*(\eta^*=1/2)=0$, where $\PP^*=\PP_0^*+\PP_1^*$ 
       and $\eta^*=d\PP_1^*/d\PP^*$
   \end{enumerate}
\end{theorem}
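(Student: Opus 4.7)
The strategy is to combine strong duality (\cref{th:strong_duality_classification}) with a complementary slackness identity. For any adversarial Bayes classifier $A$ and any dual maximizer $(\PP_0^*,\PP_1^*)$, \cref{lemma:S_e_and_W_inf} applied to $\one_{A^C}$ and $\one_A$, together with the pointwise bound $C^*(\eta^*)\le \eta^*\one_{A^C}+(1-\eta^*)\one_A$, yields
\begin{equation*}
\cprm(A)\;\ge\;\int\one_{A^C}\,d\PP_1^*+\int\one_A\,d\PP_0^*\;\ge\;\int C^*(\eta^*)\,d\PP^*\;=\;\cdl(\PP_0^*,\PP_1^*)\;=\;\cprm(A),
\end{equation*}
so both inequalities are tight. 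Tightness of the second forces $A=\{\eta^*>1/2\}$ $\PP^*$-a.e.\ on the set $\{\eta^*\neq 1/2\}$. Tightness of the first, using attainment of the infimum in \cref{eq:W_inf_def}, provides $W_\infty$-optimal couplings $\gamma_0,\gamma_1$ of $\PP_i$ with $\PP_i^*$ along which the indicator values of $A$ and $A^C$ at the second coordinate realize $S_\e$ at the first.

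For (B)$\Rightarrow$(A), assume $(\PP_0^*,\PP_1^*)$ satisfies $\PP^*(\eta^*=1/2)=0$ and let $A_1,A_2$ be two adversarial Bayes classifiers. The pinning step above gives $\one_{A_1}=\one_{A_2}=\one_{\{\eta^*>1/2\}}$ $\PP^*$-a.e., so any set $A$ with $A_1\cap A_2\subset A\subset A_1\cup A_2$ also agrees with $\{\eta^*>1/2\}$ $\PP^*$-a.e. The easy direction $\cprm(A)\ge \cdl(\PP_0^*,\PP_1^*)$ is immediate from \cref{lemma:S_e_and_W_inf}. The matching upper bound $\cprm(A)\le \cprm(A_1)$ is the principal obstacle: the map $A\mapsto S_\e(\one_{A^C})$ is pointwise-discontinuous in $A$, so a naive a.e.\ equality argument fails. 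I would instead modify the couplings $\gamma_0,\gamma_1$ produced above to new optimal couplings that certify tightness for $A$ directly; such a modification is possible because, for $\gamma_1$-a.e.\ $(\bx,\bx')$, the equality $\one_{A^C}(\bx')=\one_{A_1^C}(\bx')$ holds ($A\triangle A_1$ is $\PP_1^*$-null, and $\PP_1^*$ is the second marginal of $\gamma_1$), and $\bx'$ can be re-routed inside $B_\e(\bx)$ to a point attaining the sup for $A^C$ without disturbing marginals.

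For (A)$\Rightarrow$(B), assume uniqueness up to degeneracy and argue by contradiction: suppose every dual maximizer has $\PP^*(\eta^*=1/2)>0$. Fix such a maximizer, its couplings $\gamma_i$, and an adversarial Bayes classifier $A^*$ from \cref{th:strong_duality_classification}. I would construct a second adversarial Bayes classifier $A^{**}$ by altering $A^*$ on a subset of $\{\eta^*=1/2\}$ of positive $\PP^*$-measure, pulled back through the $\gamma_i$ to a set of positive $\PP_0$- or $\PP_1$-measure in $\Rset^d$, so that $A^{**}$ still saturates the chain above and is therefore adversarially optimal. The desired contradiction comes from exhibiting a set $A$ sandwiched between $A^*$ and $A^{**}$ whose $\cprm$ strictly exceeds $\cprm(A^*)$; positivity of $\PP^*(\eta^*=1/2)$ provides the freedom required. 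The main difficulty is designing the modification: the pull-back must be compatible with the couplings to preserve optimality of $A^{**}$, while still leaving an intermediate configuration whose $S_\e$-integrals strictly increase, a delicate interplay between transport on $\{\eta^*=1/2\}$ and the $S_\e$ operator.
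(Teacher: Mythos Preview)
The paper does not prove \cref{th:TFAE_unique_up_to_den}; it is quoted as \citep[Theorem~3.4]{frank2024uniquness}, so there is no in-paper argument to compare against.

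On the merits of your attempt: the complementary-slackness chain is correct, and the consequence that every adversarial Bayes classifier agrees with $\{\eta^*>1/2\}$ $\PP^*$-a.e.\ on $\{\eta^*\neq 1/2\}$ is the right opening move. But the gaps you flag are genuine and you do not close them. In (B)$\Rightarrow$(A), the ``re-routing'' step is ill-posed: moving the second coordinate $\bx'$ of the coupling alters its second marginal, which must remain $\PP_1^*$. More fundamentally, $A\mapsto S_\e(\one_A)$ is unstable under $\PP^*$-null modifications of $A$---even if $A=A_1$ $\PP^*$-a.e., the value $S_\e(\one_{A^C})(\bx)$ can jump from $0$ to $1$ on a set of positive $\PP_1$-measure, because the supremum detects single points regardless of their measure. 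Certifying optimality of the sandwiched $A$ therefore requires more than coupling bookkeeping; one needs structural information about how two adversarial Bayes classifiers can differ, which is exactly what the degenerate-set machinery in \citep{frank2024uniquness} is built to supply. In (A)$\Rightarrow$(B), the same obstruction reappears when you try to certify that the flipped set $A^{**}$ is still adversarially optimal, and the closing step---producing a strictly suboptimal sandwiched set from positivity of $\PP^*(\eta^*=1/2)$---is asserted but not argued.
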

Thus, uniqueness corresponds to the situation in which the set where both classes are equally probable has measure zero under some optimal adversarial attack.

 The analogous dual problem to $\prm$ uses $C_\phi^*(\eta)$ from \cref{eq:C_phi^*_def}

\begin{equation}\label{eq:dl_def}
    \dl(\PP_0',\PP_1')=\inf_{f\text{ Borel}} \int \phi(f)d\PP_1'+\int \phi(-f)d\PP_0'=\int C_\phi^*\left( \frac{d\PP_1'}{d\left(\PP_1'+\PP_0'\right)}\right)d\left(\PP_0'+\PP_1' \right)    
\end{equation}
and the analogous minimax theorem states (\citet[Theorem~6]{FrankNilesWeed23minimax}): 
\begin{theorem}[\citep{FrankNilesWeed23minimax}]\label{th:strong_duality_surrogate}
Let $\dl$ be defined as in \cref{eq:dl_def}. Then
   \[\inf_{\substack{f\text{ Borel,}\\  \Rset\text{-valued}}} \prm(f)=\sup_{\substack{\PP_1'\in \Wball \e(\PP_1)\\\PP_0'\in\Wball \e(\PP_0)}}\dl(\PP_0',\PP_1').\]
    with maximizers $\PP_0^*\in \Wball \e (\PP_0)$, $\PP_1^*\in \Wball \e(\PP_1)$ attained.
    
\end{theorem}

Finally, optimal attacks for the surrogate problem are also optimal for the classification problem:
\begin{theorem}\label{th:maximizer_comparison}
    Consider maximizing the dual objectives $\dl$ and $\cdl$ over $\Wball \e(\PP_0)\times \Wball \e(\PP_1)$.
    \begin{enumerate}[label=\arabic*)]
    \item   If $\phi$ is consistent, then any maximizer $(\PP_0^*, \PP_1^*)$ of $\dl$ over $\Wball \e(\PP_0)\times \Wball \e(\PP_1)$ also maximizes $\cdl$. \label{it:maximizers_match_1}
    \item {[\citep{frank2024consistency}]} If the adversarial Bayes classifier is unique up to degeneracy, then there exists a maximizer $(\PP_0^*,\PP_1^*)$ of $\dl$ with $\PP^*(\eta^*=1/2)=0$, where $\PP^*=\PP_0^*+\PP_1^*$ and $\eta^*=d\PP_1^*/d\PP^*$.\label{it:maximizers_match_2}
    \end{enumerate}

\end{theorem}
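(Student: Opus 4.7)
The plan is to combine strong duality (Theorems~\ref{th:strong_duality_classification} and~\ref{th:strong_duality_surrogate}) with the Wasserstein-coupling interpretation afforded by Lemma~\ref{lemma:S_e_and_W_inf} in order to align primal--dual pairs across the two problems.

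For Part~(1), start with a $\dl$-maximizer $(\PP_0^*, \PP_1^*)$ and, via Theorem~\ref{th:strong_duality_surrogate}, a primal optimizer $f^*$ with $\prm(f^*) = \dl(\PP_0^*, \PP_1^*)$. Let $\gamma_0, \gamma_1$ be optimal $\Delta_\e$-supported couplings witnessing $\PP_0^* \in \Wball\e(\PP_0)$ and $\PP_1^* \in \Wball\e(\PP_1)$. Chaining Lemma~\ref{lemma:S_e_and_W_inf} with the definition of $C_\phi^*$ gives
\[\prm(f^*) \geq \int \phi(f^*)\,d\PP_1^* + \int \phi(-f^*)\,d\PP_0^* \geq \int C_\phi^*(\eta^*)\,d\PP^* = \dl(\PP_0^*, \PP_1^*).\]
Since the endpoints coincide, every intermediate inequality is an equality, forcing both the pointwise optimality $C_\phi(\eta^*, f^*) = C_\phi^*(\eta^*)$ $\PP^*$-a.e.\ and the coupling-level identities $S_\e(\phi(\pm f^*))(\bx) = \phi(\pm f^*(\bx'))$ for $\gamma_{0/1}$-a.e.\ $(\bx, \bx')$. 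Setting $A^* = \{f^* > 0\}$, the pointwise bound $\phi(\alpha) \geq \phi(0)\one_{\alpha \leq 0}$ (and the analogous statement for $\phi(-\alpha)$), together with these coupling equalities, is used to derive $S_\e(\one_{A^{*C}})(\bx) = \one_{A^{*C}}(\bx')$ $\gamma_1$-a.e., and similarly for $\gamma_0$. Integrating yields $\cprm(A^*) = \int C(\eta^*, A^*)\,d\PP^* = \int C^*(\eta^*)\,d\PP^* = \cdl(\PP_0^*, \PP_1^*)$, where the middle equality holds because the pointwise optimality makes $A^*$ Bayes-optimal for $\eta^*$ up to the set $\{\eta^* = 1/2\}$, on which the choice of $A^*$ is irrelevant for $C$. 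Theorem~\ref{th:strong_duality_classification} then closes the loop: $\cdl(\PP_0^*, \PP_1^*) = \cprm(A^*) \geq \inf_A \cprm(A) = \sup \cdl$, exhibiting $(\PP_0^*, \PP_1^*)$ as a $\cdl$-maximizer.

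The main obstacle in Part~(1) is transferring the coupling-level equality from the continuous function $\phi(f^*)$ to the indicator $\one_{A^{*C}}$; this rests on a careful pointwise analysis exploiting monotonicity of $\phi$ to argue that $S_\e$ commutes with the threshold $\{f^* > 0\}$ under the established equality on $\phi(f^*)$.

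For Part~(2), apply Theorem~\ref{th:TFAE_unique_up_to_den} to obtain a $\cdl$-maximizer $(\tilde\PP_0, \tilde\PP_1)$ with $\tilde\PP(\tilde\eta = 1/2) = 0$, and aim to show this is itself a $\dl$-maximizer. Pick a Borel $\tilde f$ that pointwise minimizes $C_\phi(\tilde\eta(\bx), \cdot)$; since $\tilde\eta \neq 1/2$ $\tilde\PP$-a.e., the sign of $\tilde f$ is unambiguous off a $\tilde\PP$-null set. One direction of Lemma~\ref{lemma:S_e_and_W_inf} gives $\prm(\tilde f) \geq \dl(\tilde\PP_0, \tilde\PP_1)$; the matching upper bound $\prm(\tilde f) \leq \dl(\tilde\PP_0, \tilde\PP_1)$, combined with $\sup \dl = \inf_f \prm(f) \leq \prm(\tilde f)$ and $\dl(\tilde\PP_0, \tilde\PP_1) \leq \sup \dl$, will pin down $(\tilde\PP_0, \tilde\PP_1)$ as a $\dl$-maximizer. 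The main obstacle is this upper bound, which runs against the grain of Lemma~\ref{lemma:S_e_and_W_inf}; the idea is to construct $\tilde f$ to be ``transport-compatible'' with optimal couplings $\tilde\gamma_0, \tilde\gamma_1$ witnessing $(\tilde\PP_0, \tilde\PP_1)$, so that $S_\e(\phi(\pm\tilde f))(\bx) = \phi(\pm\tilde f(\bx'))$ holds along each coupling. The hypothesis $\tilde\PP(\tilde\eta = 1/2) = 0$ is precisely what makes this construction viable, since it ensures $\tilde f$ has a measurable, well-defined pointwise-optimal choice throughout its effective domain, with no ambiguous $\{\eta = 1/2\}$ region to contend with.
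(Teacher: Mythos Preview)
Your approach to Part~(1) follows the same broad strategy as the paper---use complementary slackness for $\prm$ to obtain both the pointwise optimality $C_\phi(\eta^*,f^*)=C_\phi^*(\eta^*)$ and the coupling identities $S_\e(\phi(f^*))(\bx)=\phi(f^*(\bx'))$, then transfer these to the indicator level to conclude $\cprm(A^*)=\cdl(\PP_0^*,\PP_1^*)$. However, the transfer step has a genuine gap: you need $\one_{f^*\leq 0}=\one_{\phi(f^*)\geq\phi(0)}$, which requires $\phi(\alpha)<\phi(0)$ for every $\alpha>0$. Under \cref{as:phi} alone, $\phi$ may be constant on a right-neighborhood of $0$, in which case $S_\e(\phi(f^*))(\bx)=\phi(f^*(\bx'))$ does not force $S_\e(\one_{A^{*C}})(\bx)=\one_{A^{*C}}(\bx')$, and your ``careful pointwise analysis exploiting monotonicity of $\phi$'' cannot close the gap. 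The paper confronts exactly this obstacle and resolves it by constructing an auxiliary loss $\psi$ (\cref{lemma:alt_loss}) with $C_\psi^*=C_\phi^*$---hence sharing the same dual $\dl$ and the same maximizers---together with the strict property $\psi(\alpha)>\psi(0)$ for all $\alpha<0$. The construction proceeds through Savage's representation of proper scoring rules and is the main technical content of the proof; it is what your proposal is missing.

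For Part~(2), the paper simply cites prior work (Theorems~5 and~7 of \citep{frank2024consistency}), so there is no internal proof to compare against. That said, your plan contains a circularity: you take a $\cdl$-maximizer $(\tilde\PP_0,\tilde\PP_1)$ from \cref{th:TFAE_unique_up_to_den} and try to show it also maximizes $\dl$ by exhibiting a transport-compatible, pointwise-optimal $\tilde f$ with $\prm(\tilde f)=\dl(\tilde\PP_0,\tilde\PP_1)$. But by the complementary-slackness characterization (\cref{thm:comlementary_slackness}), such an $\tilde f$ exists \emph{only if} $(\tilde\PP_0,\tilde\PP_1)$ already maximizes $\dl$---which is precisely the conclusion you seek. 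Since $\cdl$ is built from $C^*(\eta)=\min(\eta,1-\eta)$ while $\dl$ is built from $C_\phi^*$, a generic $\cdl$-maximizer need not maximize $\dl$, and the hypothesis $\tilde\PP(\tilde\eta=1/2)=0$ by itself does not supply the missing implication.
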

See \cref{app:maximizer_comparison} for a proof of \cref{it:maximizers_match_1}, \cref{it:maximizers_match_2} is shown in Theorems~5 and~7 of \citep{frank2024consistency}.
This minimax machinery links the adversarial Bayes classifier, optimal attacks, and surrogate risks, establishing the dual formulations used in Section 3 to derive adversarial surrogate risk bounds.

\section{Main Results}\label{sec:main_results}

Prior work has characterized when a loss $\phi$ is adversarially consistent with respect to a distribution $\PP_0$, $\PP_1$. \Cref{th:adv_consistency} shows that a distribution-independent surrogate risk bound is possible only when $C_\phi^*(1/2)<\phi(0)$. When $C_\phi^*(1/2)=\phi(0)$, \cref{th:adv_consistency_and_uniqueness} indicates that any such bound must depend on the marginal distribution of $\eta^*$ under $\PP^*$, and moreover, is possible only if $\PP^*(\eta^*=1/2)=0$. 

Compare these statements with \cref{prop:surrogate_massart}: \cref{th:adv_consistency,th:adv_consistency_and_uniqueness,th:maximizer_comparison} together imply if either $C_\phi^*(1/2)<\phi(0)$ or if there exist some maximizers of $\dl$ that satisfy Massart's noise condition, then $\phi$ is adversarially consistent for $\PP_0$, $\PP_1$. Alternatively, due to \cref{th:maximizer_comparison}, one can equivalently assume that there are maximizers of $\dl$ satisfying Massart's noise condition. Our first result extends \cref{prop:surrogate_massart} to the adversarial scenario, replacing $\PP_0$, $\PP_1$ with the distribution of optimal adversarial attacks.

\begin{theorem}\label{thm:adv_surrogate_massart} \label{thm:non_convex_adversarial_surrogate_bound}Let $\phi$ be consistent and let $\PP_0$, $\PP_1$ be a distribution for which there are maximizers $\PP_0^*$, $\PP_1^*$ of the dual problem $\dl$ that satisfy $|\eta^*-1/2|\geq \alpha$ $\PP^*$-a.e. for some constant $\alpha\in [0,1/2]$ with $C_\phi^*(1/2-\alpha)<\phi(0)$, where $\PP^*=\PP_0^*+\PP_1^*$, $\eta^*=d\PP_1^*/d\PP^*$.
     Then 
     \begin{equation}\label{eq:adv_surrogate_massart_ineq}
        R^\e(f)-R_*^\e\leq  \const \alpha \left(R_\phi^\e(f)-R_{\phi,*}^\e\right)     
     \end{equation}
    
\end{theorem}

When $C_\phi^*(1/2)<\phi(0)$, setting $\alpha=0$ in \cref{thm:adv_surrogate_massart} yields a distribution-independent bound. As noted earlier, two losses satisfying this condition are the $\rho$-margin loss and the shifted sigmoid loss. Likewise, \cref{th:consistent_characterization} ensures that the linear constant is finite whenever $\alpha\neq 0$ and $\phi$ is consistent. 

The constant appearing in \cref{thm:adv_surrogate_massart} is nearly optimal: \Cref{sec:lower_bound} shows that it can be improved by at most a factor of two, and this gap is attained by a known counterexample to consistency. Thus, the result provides a sharp characterization of how tightly the adversarial classification risk can be controlled by the surrogate risk across all consistent convex losses.

Furthermore, the theorem parallels the classical realizable-case guarantee from the non-adversarial setting. If the optimal adversarial risk satisfies $R_*^\e = 0$, then Massart’s noise condition holds with $\alpha = 1/2$ (see \cref{lemma:realizable_eta}). In this regime, \cref{thm:adv_surrogate_massart} yields a linear relationship between adversarial classification and surrogate risks that is directly analogous to the non-adversarial bound in \cref{prop:surrogate_massart}.
% \begin{theorem}\label{thm:adv_realizable}
% Let $\phi$ satisfy $\phi(0)>0$ and \cref{as:phi}. Then if $R_*^\e=0$, 
%     \[R^\e(f)-R^\e_*\leq \frac 1 {\phi(0)}\left(R_\phi^\e(f)-R_{\phi,*}^\e \right)\]
% \end{theorem}
Zero adversarial risk occurs whenever the supports of $\PP_0$ and $\PP_1$ are separated by at least $2\epsilon$ (\cref{ex:realizable,fig:realizable}).

\Cref{thm:adv_surrogate_massart} states that if some distribution of \emph{optimal adversarial attacks} satisfies Massart's noise condition, then the excess adversarial surrogate risk is at worst a linear upper bound on the excess adversarial classification risk. However, if $C_\phi^*(1/2)=\phi(0)$, the bound's constant diverges as $\alpha\to 0$, reflecting the failure of adversarial consistency when the adversarial Bayes classifier is not unique up to degeneracy. For $\alpha \neq 1/2$, understanding the assumptions on $(\PP_0,\PP_1)$ which ensure Massart’s condition for the distribution of adversarial attacks $(\PP_0^*,\PP_1^*)$ remains an open problem. Example~4.6 of \citep{frank2024uniquness} exhibits a distribution that satisfies Massart's noise condition and yet the adversarial Bayes classifier is not unique up to degeneracy. Thus Massart's noise condition for $\PP_0,\PP_1$ does not guarantee Massart's noise condition for $\PP_0^*$, $\PP_1^*$. See \cref{ex:massart,fig:massart} for an example where \cref{thm:adv_surrogate_massart} applies with $\alpha>0$.

One approach to relaxing the distributional restriction is to apply \cref{eq:adv_surrogate_massart_ineq} only on the portion of the distribution where $|\eta^*-1/2|\geq \alpha$ and then add back in the risk on  $|\eta^*-1/2|< \alpha$.

\begin{theorem}\label{thm:non_consistent_bound_linear}
    Assume that there exist maximizers $\mathbb P_0^\ast$, $\mathbb P_1^\ast$ of $\bar R_\phi$ that are induced by transport maps from $\PP_0$, $\PP_1$, and define $\PP^*=\PP_1^*+\PP_0^*$, $\eta^*=d\PP_1^*/d\PP^*$. Let $0\leq \alpha$, then
    \[\cprm(f)-\cprm_*\leq \const \alpha\big(\prm(f)-R^\e_{\phi,*}\big) +\left(\frac 12 +\alpha\right)\PP^*(|\eta^*-1/2|<\alpha)\]
\end{theorem}
Since this holds for all $\alpha$, the right-hand side can be minimized over $\alpha$. 
Prior work from optimal transport theory verifies the assumption on $\mathbb P^\ast$ under mild conditions: Theorem~3.5 of \citep{Jylha15} states that whenever $\mathbb P_0,\PP_1$ are absolutely continuous with respect to Lebesgue measure and the norm $\|\cdot\|$ is strictly convex, the measures $\mathbb P_0^\ast,\PP_1^\ast$ are induced by a transport map. It is unclear whether this holds for common datasets such as CIFAR-10 or MNIST.

Finally, an alternative approach to removing the distributional restriction is to average bounds of the form \cref{eq:adv_surrogate_massart_ineq} over all values of $\eta^*$ yielding a distribution-dependent surrogate bound, valid whenever the adversarial Bayes classifier is unique up to degeneracy.
For a given function $f$, let the \emph{concave envelope} of $f$ be the smallest concave function larger than $f$:
\begin{equation}\label{eq:conc_def}
    \conc(f)=\inf\{g:\geq f\text{ on }\dom(f), g\text{ concave and upper semi-continuous}\}    
\end{equation}

\begin{theorem}\label{thm:convex_surrogate_bound}
    Assume $\PP_0(\Rset^d)+\PP_1(\Rset^d)\leq 1$, $\phi$ is a consistent loss with $C_\phi^*(1/2)=\phi(0)$, and the adversarial Bayes classifier is unique up to degeneracy. Let $\PP_0^*$, $\PP_1^*$ be maximizers of $\dl$ for which $\PP^*(\eta^*=1/2)=0$, with $\PP^*=\PP_0^*+\PP_1^*$ and $\eta^*=d\PP_1^*/d\PP^*$. Define $H(z)=\conc(\PP^*(|\eta^*-1/2|\leq z))$, $\Psi$ as \cref{prop:psi_def}, and let $\tilde \Lambda(z)=\Psi^{-1}(\min(\frac z 4,\phi(0)))$.
    Then 
     \[\cprm(f)-R^\e_*\leq \tilde \Phi(\prm(f) -R_{\phi,*}^\e)\]
    with 
    \[\tilde \Phi(z)=4\left(\id+\min(1,\sqrt{-{e}H\ln H })\right)\circ \tilde\Lambda \]
\end{theorem}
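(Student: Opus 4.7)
The plan is to lift the distribution-dependent surrogate bound of Bartlett--Jordan--McAuliffe to the adversarial setting, combining the linear Massart-type bound of \cref{thm:adv_surrogate_massart} with a thresholding argument on $|\eta^* - 1/2|$ under $\PP^*$. By \cref{th:strong_duality_surrogate,th:maximizer_comparison}, the pair $(\PP_0^*, \PP_1^*)$ maximizes both the surrogate dual $\dl$ and the classification dual $\cdl$, so that $R_{\phi,*}^\e = \int C_\phi^*(\eta^*) d\PP^*$ and $R_*^\e = \int C^*(\eta^*) d\PP^*$; combining with \cref{lemma:S_e_and_W_inf} yields
\[ \prm(f) - R_{\phi,*}^\e \;\geq\; \int \bigl[ C_\phi(\eta^*, f) - C_\phi^*(\eta^*) \bigr] \, d\PP^*, \]
so the surrogate side of the analysis reduces to a pointwise quantity on $\PP^*$.

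Next, I would split at a margin threshold $z$ into the high-margin region $\{|\eta^* - 1/2| \geq z\}$ and its complement. On the high-margin region, the pointwise Massart bound of \cref{prop:surrogate_massart} (using the symmetry $C_\phi^*(1/2 - z) = C_\phi^*(1/2 + z)$ to identify $\phi(0) - C_\phi^*(1/2 - z) = \Psi(2z)$) gives
\[ C(\eta^*, f) - C^*(\eta^*) \;\leq\; \frac{1}{\Psi(2z)} \bigl( C_\phi(\eta^*, f) - C_\phi^*(\eta^*) \bigr), \]
whose $\PP^*$-integral is at most $(\prm(f) - R_{\phi,*}^\e)/\Psi(2z)$. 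On the low-margin region, the elementary estimate $C(\eta^*, f) - C^*(\eta^*) \leq 2|\eta^* - 1/2| \leq 2z$ together with $\PP^*(|\eta^* - 1/2| < z) \leq H(z)$ (by the defining property of the concave envelope and the assumption $\PP^*(\eta^* = 1/2) = 0$) yields a contribution of at most $2z\,H(z)$.

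The main technical obstacle is translating these $\PP^*$-integral bounds into a bound on $\cprm(f) - R_*^\e$ itself: the inequality $\cprm(f) \geq \int C(\eta^*, f) d\PP^*$ supplied by \cref{lemma:S_e_and_W_inf} points in the wrong direction. To close this gap, one should instead work with the distribution $\PP^*_f$ of attacks that are optimal against $f$ for the classification loss, for which $\cprm(f) - R_*^\e \leq \int \bigl[C(\eta^*_f, f) - C^*(\eta^*_f)\bigr] d\PP^*_f$ does hold (since $R_*^\e \geq \bar R(\PP^*_{0,f}, \PP^*_{1,f})$ by optimality of $(\PP_0^*,\PP_1^*)$). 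Comparing $\PP^*_f$ to $\PP^*$ near the decision surface $\{\eta^* = 1/2\}$ via uniqueness up to degeneracy and \cref{th:TFAE_unique_up_to_den}, along the lines of the argument behind \cref{thm:adv_surrogate_massart}, then closes the loop and accounts for the constant factor of $6$ appearing in $\tilde\Phi$.

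Finally, choosing $z = \tilde\Lambda(\prm(f) - R_{\phi,*}^\e) = \Psi^{-1}(\min((\prm(f) - R_{\phi,*}^\e)/6, \phi(0)))$ balances the two contributions, producing the leading $6\,\tilde\Lambda$ term. The concavity of $H$ together with an entropy-type inequality of the form $a \sqrt{-2e \ln a}$ applied to $H(z)$ in the low-margin estimate produces the $\sqrt{-2e H \ln 2H}$ factor, while the clipping at $1$ reflects the trivial upper bound $\cprm(f) - R_*^\e \leq \PP(\Rset^d)$ once $H$ is no longer small.
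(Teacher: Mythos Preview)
Your proposal contains a genuine gap that you yourself identify: the inequality furnished by \cref{lemma:S_e_and_W_inf} gives only $\cprm(f)\geq \int C(\eta^*,f)\,d\PP^*$, so bounding $\int[C(\eta^*,f)-C^*(\eta^*)]\,d\PP^*$ does not by itself bound the excess adversarial classification risk. Your proposed fix---switching to the $f$-optimal attack $\PP^*_f$---does give the correct direction on the classification side, but it breaks the rest of the argument: the function $H$ encodes the distribution of $|\eta^*-1/2|$ under the \emph{dual-optimal} $\PP^*$, and you have no control whatsoever over the distribution of $|\eta^*_f-1/2|$ under $\PP^*_f$. The vague appeal to ``comparing $\PP^*_f$ to $\PP^*$ via uniqueness up to degeneracy, along the lines of \cref{thm:adv_surrogate_massart}'' is not supported by that proof, which never introduces an $f$-dependent attack and works exclusively with the fixed dual maximizer $\PP^*$.

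The paper closes the gap differently. It keeps $\PP^*$ throughout and uses the optimal couplings $\gamma_0^*,\gamma_1^*$ to write the exact decomposition $\cprm(f)-R_*^\e=I_1(f)+I_0(f)$ with each $I_i$ a sum of a ``perturbation gap'' term $\int [S_\e(\one_{f\leq 0})(\bx)-\one_{f\leq 0}(\bx')]\,d\gamma_i^*$ and a conditional term $\int [C(\eta^*,f)-C^*(\eta^*)]\,d\PP_i^*$. The split is then on the \emph{pair} $(\bx,\bx')$ into the sets $D_1,E_1,F_1$ of \cref{sec:linear}, not on a level set of $|\eta^*-1/2|$, and the perturbation gap is controlled via Cauchy--Schwarz against a free concave function $G$ (\cref{prop:main_surrogate_bound}). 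Choosing $G=(H\circ\Psi^{-1})^r$ makes the constant $K=\int G((\phi(0)-C_\phi^*(\eta^*))/2)^{-1}\,d\PP^*\leq 2^r/(1-r)$ finite (\cref{lemma:multiple_rs_bound}), and the $\sqrt{-2eH\ln 2H}$ term arises from minimizing $2^r a^r/(1-r)$ over $r\in(0,1)$, not from an entropy inequality applied to a threshold estimate. Your threshold-in-$z$ optimization, even if the direction issue were resolved, would not produce this square-root-with-logarithm form.
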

This theorem is established under the assumption $\PP_0(\Rset^d)+\PP_1(\Rset^d)\leq 1$, which serves as an essential intermediate step for extending the result to case where the adversarial Bayes classifier is not uniquely defined up to degeneracy.
See \cref{ex:gaussian,fig:gaussian} for an example of calculating a distribution-dependent surrogate risk bound.

    The function $H$ is always continuous and satisfies $H(0)=0$, ensuring that this bound is non-vacuous (see \cref{lemma:envelope} in \cref{sec:convex_surrogate_bound}). Further notice that $H\ln H$ approaches zero as $H\to 0$.

 The map $\tilde \Phi$ combines two components: $\tilde \Lambda$, a modified version of $\Psi^{-1}$, and $H$, a modification of the cdf of $|\eta^*-1/2|$. The function $\tilde \Lambda$ is a scaled version of $\Psi^{-1}$, where $\Psi$ is the surrogate risk bound in the non-adversarial case of \cref{prop:psi_def}. The domain of $\Psi^{-1}$ is $[0,\phi(0)]$, and thus the role of the $\min$ in the definition of $\tilde \Lambda$ is to truncate the argument so that it fits into this domain. The factor of $1/4$ in this function appears to be an artifact of our proof, see \cref{sec:convex_surrogate_bound} for further discussion. In contrast, the map $H$ translates the distribution of $\eta^*$ into a surrogate risk transformation. Compare with \cref{th:adv_consistency_and_uniqueness}, which states that consistency fails if $\PP^*(\eta^*=1/2)>0$; accordingly, the function $H$ becomes a poorer bound when more mass of $\eta^*$ is near $1/2$.

If $\PP^*(\eta^*=1/2)$ is small, this result can still provide an informative surrogate bound.

\begin{theorem} \label{thm:non_consistent_bound_nonlinear}
        Assume that there exist maximizers $\mathbb P_0^\ast$, $\mathbb P_1^\ast$ of $\bar R_\phi$ that are induced by transport maps from $\PP_0^*$, $\PP_1^*$, and define $\PP^*=\PP_1^*+\PP_0^*$, $\eta^*=d\PP_1^*/d\PP^*$.  Let $\tilde \Phi$ be the function in \cref{thm:convex_surrogate_bound}, but with $H$ defined as $H(z)=\text{conc}(\mathbb P^\ast(0<|\eta^\ast-1/2|\leq z))$. Then 
     $$ R^\epsilon (f)-R_\ast^\epsilon\leq \tilde \Phi(R_\phi^\epsilon(f)-R^\epsilon_{\phi,\ast}) +\frac {\PP^*(\eta^*=1/2)} 2 $$
\end{theorem}
Removing the assumption that $\PP_0^\ast,\PP_1^*$ are induced by a transport map from \cref{thm:non_consistent_bound_linear,thm:non_consistent_bound_nonlinear} remains an open problem. We conjecture that this assumption is, in fact, unnecessary.

\subsection*{Comparison with real-world datasets}
\begin{figure}
	\centering
	\begin{subfigure}[b]{0.225\textwidth}
		\centering
		\includegraphics[width=\textwidth]{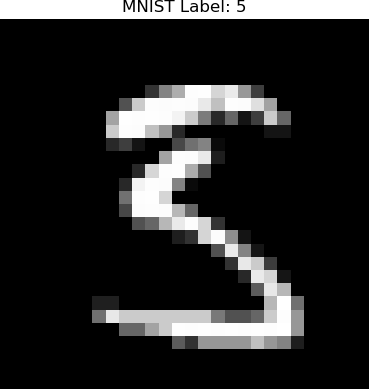}
		\caption{}
        \label{fig:hard_mnist5_1}
	\end{subfigure}
	\hfill
	\begin{subfigure}[b]{0.225\textwidth}
		\centering
		\includegraphics[width=\textwidth]{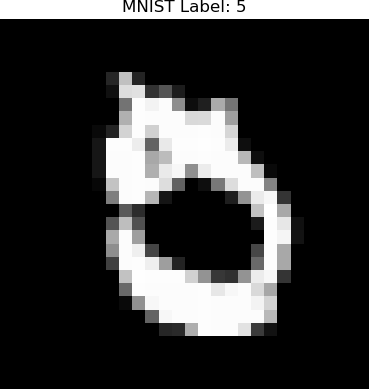}
		\caption{}
        \label{fig:hard_mnist5_2}
	\end{subfigure}
	\hfill
	\begin{subfigure}[b]{0.225\textwidth}
		\centering
		\includegraphics[width=\textwidth]{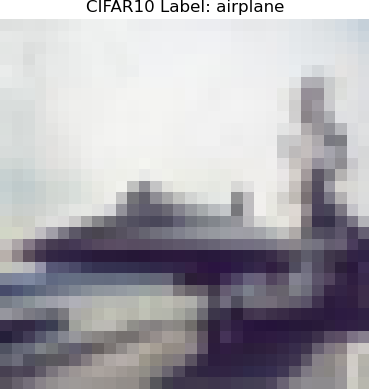}
		\caption{}
        \label{fig:hard_cifar_plane_1}
	\end{subfigure}
	\hfill
    	\begin{subfigure}[b]{0.225\textwidth}
		\centering
		\includegraphics[width=\textwidth]{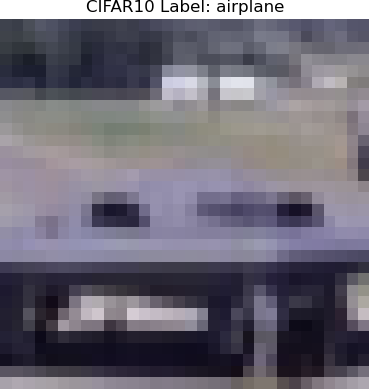}
		\caption{}
        \label{fig:hard_cifar_plane_2}
	\end{subfigure}
    \caption{Ambiguous images in the MNIST and CIFAR10 datasets. (a) lies between a `5' and a `3', while (b) is difficult to classify at all, despite being labeled as a `5'. In CIFAR10, image (c) is ambiguous between a ship and an airplane, and image (d) is similarly hard to identify.}
	\label{fig:ambiguous images}
\end{figure}
Experimental results from prior work suggest that, in real-world datasets, $\eta^\ast$ is typically concentrated near 0 and 1. \citet{BhagojiCullinaMittalji2019lower} compute lower bounds on the adversarial classification risk for binary tasks, focusing on classifying digits `3' and `7' in MNIST under $\ell_2$ perturbations. Their lower bound remains close to 0 for $\epsilon \leq 3$ and increases to 0.2 at $\epsilon = 4$. Since $C^\ast(\eta^\ast)$ attains its maximum at $\eta^\ast = 1/2$, a small adversarial risk implies that the distribution places little mass in a neighborhood of $|\eta^\ast - 1/2| = 0$. Similar trends are observed on Fashion MNIST and CIFAR10. \Citet{DaiDingBhagoji2023characterizing} extend these bounds to the multiclass setting, though extending adversarial surrogate bounds beyond binary classification remains an open problem.

When the optimal adversarial risk is non-zero, the adversarial Bayes classifier may not be unique up to degeneracy. Even without adversarial perturbations, datasets like MNIST and CIFAR10 contain inherently ambiguous examples. \Citet{NorthcuttAthalyeMeuller2021labelerrors} identify such cases, four are depicted in \cref{fig:ambiguous images}. One would expect $\eta(\bx) = 1/2$ for such examples. \Citet{BartoldsonDiffenderKailkhura2024scaling} show that similar ambiguity arises in adversarial settings: under $\ell_\infty$ perturbations of size $8/255$, approximately 6\% of adversarial examples are ambiguous in the CIFAR10 dataset. In the binary scenario, one would thus expect $\eta^\ast(x) = 1/2$ for these inputs, and thus one must apply \cref{thm:non_consistent_bound_linear} or \cref{thm:non_consistent_bound_nonlinear}. Extending the concept of uniqueness of the adversarial Bayes classifier to multiclass settings remains an open problem.

\subsection*{Examples}

Below we present three examples illustrating the applicability of our main theorems. All examples involve one-dimensional distributions, and we denote the pdfs of $\PP_0$ and $\PP_1$ by $p_0$ and $p_1$.

To start, if $R_{*}^\e=0$ then $\eta^*\in\{0,1\}$ $\PP^*$-a.e. for any maximizers of $\dl$. Therefore, for any such distribution, the optimal attack satisfies Massart's noise condition with $\alpha=1/2$, see \cref{app:realizable_eta} for a proof.
\begin{lemma}\label{lemma:realizable_eta}
    Assume $R_*^\e=0$, let $(\PP_0^*,\PP_1^*)$ maximize $\dl$, and define $\PP^*=\PP_0^*+\PP_1^*$, $\eta^*=d\PP_1^*/d\PP^*$. Then $\PP^*(\eta^*\in\{0,1\})=1$.
\end{lemma}

Any distribution for which the supports of $\PP_0$, $\PP_1$ are more than $2\e$ apart must have zero risk. %Furthermore, if $\PP$ is absolutely continuous with respect to Lebesgue measure and the supports of $\PP_0$, $\PP_1$ are exactly $2\e$ apart, then the adversarial classification risk will be zero (see for instance \citep[Lemma~4]{AwasthiFrankMohri2021} or \citep[Lemma~4.3]{PydiJog2021}).

\begin{example}[When $R_*^\e=0$]\label{ex:realizable} Let 
\[p_0(x)=\begin{cases}
1&\text{if } x\in[-1-\delta,-\delta]\\
0&\text{otherwise}
\end{cases}\quad p_1(x)=\begin{cases} 1&\text{if }x\in [\delta,1+\delta]\\
0&\text{otherwise}
\end{cases}\]
for some $\delta>0$. See \cref{fig:realizable} for a depiction. This distribution satisfies $R^\e_*=0$ for all $\e\leq \delta$ and thus \cref{lemma:realizable_eta} implies that the surrogate bound of \cref{thm:adv_surrogate_massart} applies. 
    
\end{example}

\begin{figure}
     \centering
     \begin{subfigure}[b]{0.30\textwidth}
         \centering
         \includegraphics[width=\textwidth]{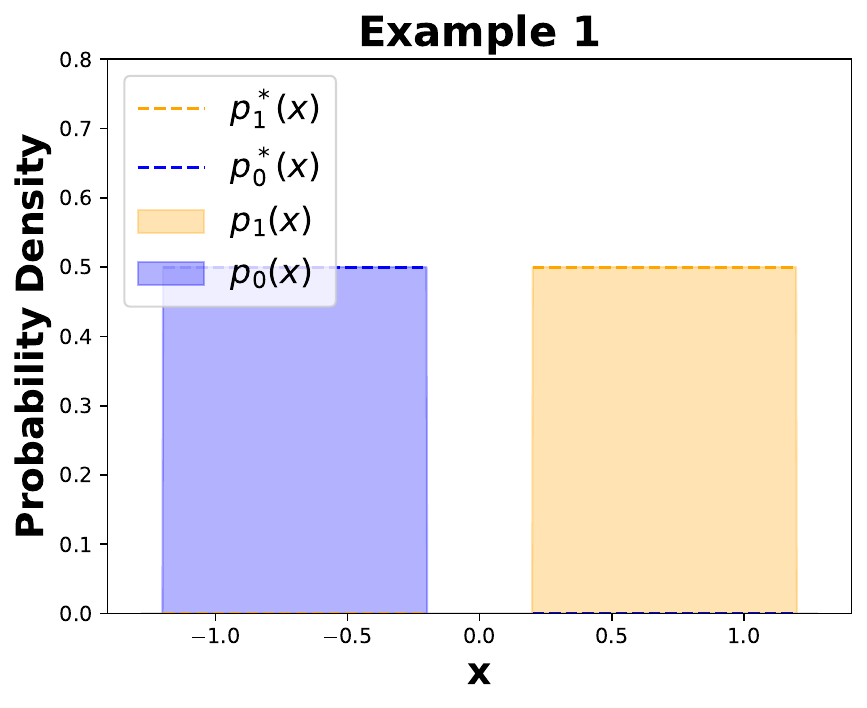}
         \caption{}
         \label{fig:realizable}
     \end{subfigure}
     \hfill
     \begin{subfigure}[b]{0.30\textwidth}
         \centering
         \includegraphics[width=\textwidth]{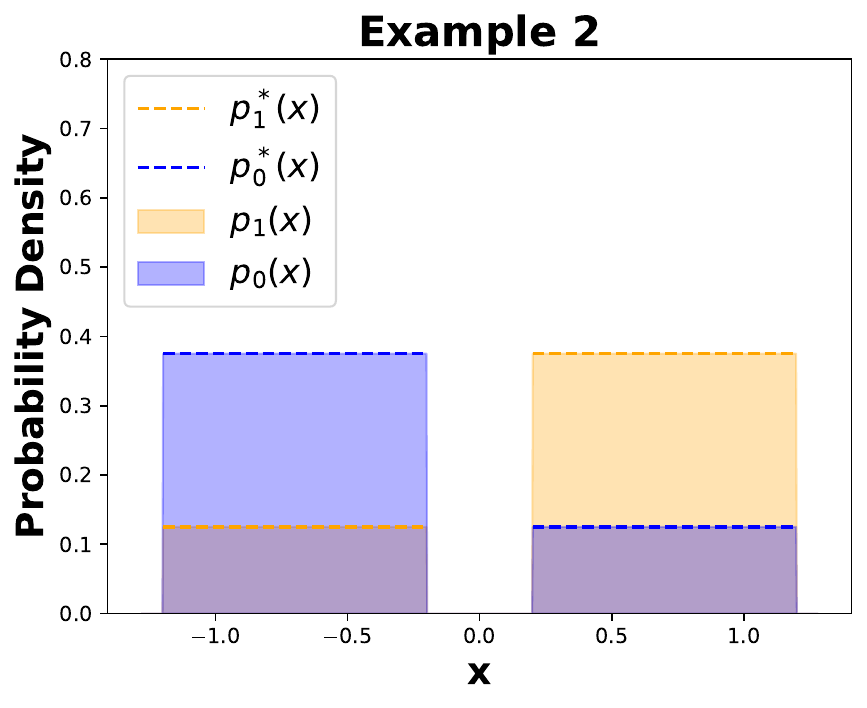}
         \caption{}
         \label{fig:massart}
     \end{subfigure}
      \hfill
     \begin{subfigure}[b]{0.30\textwidth}
         \centering
        \includegraphics[width=\textwidth]{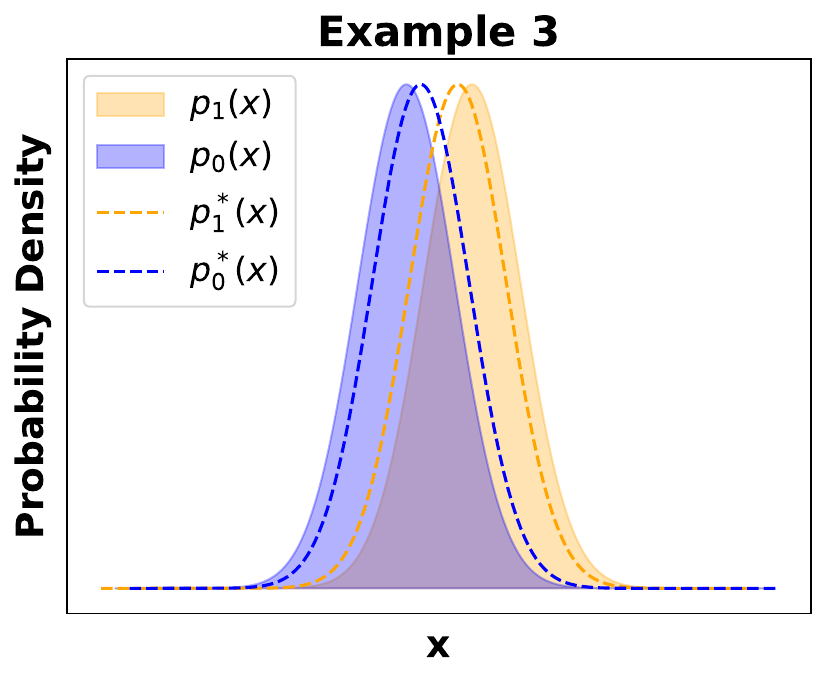}
         \caption{}
         \label{fig:gaussian}
     \end{subfigure}\caption{Distributions from \cref{ex:realizable,ex:massart,ex:gaussian} along with attacks that maximize the dual $\dl$.}
\end{figure}

% \begin{figure}
%     \centering
%     %\begin{subfigure}[b]{\linewidth}
%         \centering
%         \begin{minipage}{0.30\textwidth}
%             \centering
%             \includegraphics[width=\textwidth]{figures/example1_surrogate_bounds.pdf}
%             \caption{}
%             \label{fig:realizable}
%         \end{minipage}
%         \hfill
%         \begin{minipage}{0.30\textwidth}
%             \centering
%             \includegraphics[width=\textwidth]{figures/example2_surrogate_bounds.pdf}
%             \caption{}
%             \label{fig:massart}
%         \end{minipage}
%         \hfill
%         \begin{minipage}{0.30\textwidth}
%             \centering
%             \includegraphics[width=\textwidth]{figures/example3_surrogate_bounds.pdf}
%             \caption{}
%             \label{fig:gaussian}
%         \end{minipage}
%    % \end{subfigure}
%     \caption{Distributions from \cref{ex:realizable,ex:massart,ex:gaussian} along with attacks that maximize the dual $\dl$.}
% \end{figure}

\Cref{ex:massart,ex:gaussian} require computing maximizers to the dual $\dl$; See \cref{app:dual_attack,app:dual_attack_massart} for these calculations. The following example illustrates a distribution for which Massart's noise condition can be verified for a distribution of optimal attacks.
\begin{example}[Massart's noise condition]\label{ex:massart}
Let $\delta>0$ and let $p$ be the uniform density on $[-1-\delta, -\delta]\cup [\delta,1+\delta]$. Define $\eta$ by
\begin{equation}\label{eq:eta_massart_example}
    \eta(x)=\begin{cases}
    \frac 14& \text{if }x \in [-1-\delta,-\delta]\\
    \frac 34&\text{if }x\in [\delta,1+\delta]
\end{cases}    
\end{equation}

see \cref{fig:massart} for a depiction of $p_0$ and $p_1$. For this distribution and $\e\leq \delta$, the minimal surrogate and adversarial surrogate risks are always equal ($R_{\phi,*}=R^\e_{\phi,*}$). This fact together with \cref{th:strong_duality_surrogate} imply that optimal attacks on this distribution are $\PP_1^*=\PP_1$ and $\PP_0^*=\PP_0$, see \cref{app:dual_attack_massart} for details. Consequently: the distribution of optimal attacks $\PP_0^*$, $\PP_1^*$ satisfies Massart's noise condition with $\alpha=1/4$ and as a result the bounds of \cref{thm:adv_surrogate_massart} apply. When $\e \in (\delta,1+\delta)$, pdfs of the distributions that maximize the dual are $p_1^*(x)=p_1(x+\e)$, $p_0^*(x)=p_0(x-\e)$, where $p_1(x)=\eta(x)p(x)$ and $p_0(x)=(1-\eta(x))p(x)$. These distributions satisfy $\PP^*(\eta=1/2)=(\e-\delta)$ while $\PP^*(|\eta-1/2|\geq 1/4)=1-(\e-\delta)$. Thus \cref{thm:non_consistent_bound_linear} provides a surrogate bound. \end{example}

The final example presents a case in which Massart’s noise condition fails for the distribution of optimal adversarial attacks, yet the adversarial Bayes classifier remains unique up to degeneracy. \Cref{thm:convex_surrogate_bound} still yields an informative surrogate bound.

\begin{example}[Gaussian example] \label{ex:gaussian}
    Consider an equal-variance Gaussian mixture with $\mu_0+2\e < \mu_1<\mu_0+\sqrt 2 \sigma$:
    \[p_0(x)=\frac 12 \cdot\frac 1 {\sqrt{2\pi}\sigma} e^{-\frac{(x-\mu_0)^2}{2\sigma^2}},\quad p_1(x)=\frac 12\cdot \frac 1 {\sqrt{2\pi}\sigma}e^{-\frac{(x-\mu_1)^2}{2\sigma^2}},\]
    see \cref{fig:gaussian} for a depiction. The optimal attacks $\PP_0^*$, $\PP_1^*$ are gaussians centered at $\mu_0+\e$ and $\mu_1-\e$ respecively, with pdfs
    \begin{equation}
        \label{eq:dual_optimal_attacks}
        p_0^*(x)=\frac 12 \cdot\frac 1 {\sqrt{2\pi}\sigma} e^{-\frac{(x-(\mu_0+\e))^2}{2\sigma^2}},\quad p_1^*(x)=\frac 12\cdot \frac 1 {\sqrt{2\pi}\sigma}e^{-\frac{(x-(\mu_1-\e))^2}{2\sigma^2}}.
    \end{equation}
    
We verify that $\PP_0^*$ and $\PP_1^*$ are in fact optimal by finding a function $f^*$ for which $\prm(f^*)=\dl(\PP_0^*,\PP_1^*)$, the strong duality result in \cref{th:strong_duality_surrogate} will then imply that $\PP_0^*$ and $\PP_1^*$ must maximize the dual $\dl$, see \cref{app:dual_attack} for details.

    Further, when $ \mu_1-\mu_0\leq \sqrt 2 \sigma$, then the function $h(z)=\PP^*(|\eta^*-1/2|\leq z)$ is concave in $z$ and consequently $H=h$, see \cref{app:gaussian_linear_bound} for details. Although $h$ is unwieldy function, comparison to its linear approximation at zero gives the bound
    \begin{equation}
    \label{eq:concavity_bound}
        H(z)\leq \min\left( \frac {16 \sigma^2}{\mu_1-\mu_0-2\e}z,1\right).
    \end{equation}
    Again, see \cref{app:gaussian_linear_bound} for details.
\end{example}

When $\e\geq (\mu_1-\mu_0)/2$, \citep[Example~4.1]{frank2024uniquness} demonstrates that the adversarial Bayes classifier is not unique up to degeneracy. Notably, the bound in preceding example deteriorates as $(\mu_1-\mu_0)/2 \to \e$, and then fails entirely when $\e=(\mu_1-\mu_0)/2$.

\section{Proof of Linear Surrogate Bounds}\label{sec:linear}

\subsection{Proof of \cref{thm:adv_surrogate_massart}}

The proof of Theorem~9 relies on decomposing the excess adversarial classification and surrogate risks into non-negative terms, revealing their structural similarity and allowing for a pointwise comparison.

Let $\PP_0^*,\PP_1^*$ be any maximizers of $\dl$. These distributions also maximize $\cdl$ by \cref{th:maximizer_comparison}. Set $\PP^*=\PP_0^*+\PP_1^*$, $\eta^*=d\PP_1^*/d\PP^*$. Let $\gamma_0^*$, $\gamma_1^*$ be couplings between $\PP_0$, $\PP_0^*$ and $\PP_1$, $\PP_1^*$ achieving the $W_\infty$ distances \cref{eq:W_inf_def}. The excess classification risk can be decomposed as
\begin{align}
    &\cprm(f)-R_*^\e= \cprm(f)-\cdl(\PP_0^*,\PP_1^*)=\int i_1(f)d\gamma_1^*+\int i_0(f)d\gamma_0^*\label{eq:division_classification}
\end{align}
with 
\[i_1(f)=\big( S_\e(\one_{f\leq 0})(\bx)- \one_{f\leq 0}(\bx')  \big)+\big(  C(\eta^*,f)-C^*(\eta^*)\big)\]
\[i_0(f)=\big(S_\e(\one_{f>0})(\bx) - \one_{f>0}(\bx')\big)+\big( C(\eta^*,f)-C^*(\eta^*) \big). \]

The first term measures the discrepancy between the worst-case attack on $f$
 and the attack induced by $\PP_0^*,\PP_1^*$, the optimal attack for the distribution $\PP_0,\PP_1$. The second term measures the excess conditional risk under the optimal attack $\PP_0^*,\PP_1^*$. 
\Cref{lemma:S_e_and_W_inf} implies that $S_\e(\one_{f\leq 0})(\bx)- \one_{f\leq 0}(\bx')$ must be positive, while the definition of $C^*$ implies that $C(\eta^*,f)-C^*(\eta^*)\geq 0$.

Similarly, one can express the excess surrogate risk as 
\begin{equation}\label{eq:division_surrogate}
\begin{aligned}
    \prm(f)-R_{\phi,*}^\e&=\int i_1^\phi(f)d\gamma_1^*+\int i_0^\phi(f)d\gamma_0^*
\end{aligned}
\end{equation}
with 
\[i_1^\phi(f)=\big( S_\e(\phi(f))(\bx) -\phi(f)(\bx')\big)+\big( C_\phi(\eta^*,f)-C_\phi^*(\eta^*) \big)\]
\[i_0^\phi(f)=\big( S_\e(\phi(-f))(\bx)-\phi(-f)(\bx')\big)+\big( C_\phi(\eta^*,f)-C_\phi^*(\eta^*) \big)
\]

 The following lemma is the core inequality linking $i_k$ to $i_k^\phi$ under Massart's noise condition. It shows that each classification-risk term can be bounded by a constant multiple of its surrogate risk counterpart.

        \begin{lemma}\label{lemma:linear_adv_target}
        Define $i_0^\phi,i_1^\phi$ as in \eqref{eq:division_surrogate} and assume that the distribution of optimal adversarial attacks $\PP_0^*$, $\PP_1^*$ satisfies Massart's noise condition. Then

            \noindent\begin{minipage}{0.5\linewidth}
        \begin{equation}\label{eq:non-convex_I_0_bound}
        i_0(f)\leq \const \alpha i_0^\phi(f).
    \end{equation}
    \end{minipage}%
    \begin{minipage}{0.5\linewidth}
        \begin{equation}\label{eq:non-convex_I_1_bound}
        i_1(f)\leq \const \alpha i_1^\phi(f).
    \end{equation}
    \end{minipage}
    hold $\gamma_0^*$-a.e. and $\gamma_1^*$-a.e. respectively.
    \end{lemma}

    \Cref{lemma:linear_adv_target} directly implies \cref{thm:adv_surrogate_massart} by integration over couplings $\gamma_1^*$, $\gamma_0^*$.

    \begin{proof}[Proof of \cref{thm:adv_surrogate_massart}]
        Combine \cref{eq:division_classification}, \cref{lemma:linear_adv_target}, and \cref{eq:division_surrogate}.
    \end{proof}
    
    \subsection{Proof of \cref{lemma:linear_adv_target}}

    The proof proceeds by partitioning the domain $\Rset^d\times \Rset^d$ into regions where the supremum-based classification either matches ($D_k$) or exceeds ($E_k$) the decision under the optimal attack. On each region, we derive a separate bound relating $i_k$ and $i_k^\phi$. 
      Define the sets $D_k$, $E_k$, 
        \begin{align}
            &D_0=\{(\bx,\bx'): S_\e(\one_{f> 0})(\bx)-\one_{f(\bx')}=0\}\label{eq:D_0_orig}\\
            &E_0=\{(\bx,\bx'): S_\e(\one_{f> 0})(\bx)-\one_{f(\bx')> 0}=1\}\label{eq:E_0_orig}\\
            &D_1=\{(\bx,\bx'): S_\e(\one_{f\leq 0})(\bx)-\one_{f(\bx')\leq 0}=0\}\label{eq:D_1_orig}\\
            &E_1=\{(\bx,\bx'): S_\e(\one_{f\leq 0})(\bx)-\one_{f(\bx')\leq 0}=1\}\label{eq:E_1_orig}
        \end{align}
    
    By construction, $D_1\cup E_1=\Rset^d\times \Rset^d$ and $D_0\cup E_0=\Rset^d\times \Rset^d$.

    The following lemma records a simple but useful structural property of $E_0$ and $E_1$, which allows us to bound the surrrogate loss terms from below on these sets.
    \begin{lemma}\label{lemma:DEF_structure}
        Let $E_k$ be as in \cref{eq:E_1_orig,eq:E_0_orig}. Then $S_\e(\one_{f>0})(\bx)=\one_{f>0}(\bx')=1$ $\gamma_1^*$-a.e. on $E_1$ while $S_\e(\one_{f\leq 0})(\bx)=\one_{f\leq 0}(\bx')=1$ $\gamma_0^*$-a.e. on $E_0$.
    \end{lemma}
    \begin{proof}
        We'll prove the statement for $E_1$, the argument for $E_0$ is analogous. Specifically, we will show that one cannot simultaneously have $ S_\e(\one_{f\leq 0})(\bx)-\one_{f\leq 0}(\bx')=1$ and $S_\e(\one_{f> 0})(\bx)-\one_{f> 0}(\bx')=1$. 
        
        Consider $(\bx,\bx')\in E_1$: as both $S_\e(\one_{f\leq 0})(\bx)$ and $\one_{f\leq0}(\bx')$ are 0-1 valued, the relation $S_\e(\one_{f\leq 0})(\bx)-\one_{f(\bx')\leq 0}=1$ implies that $\one_{f(\bx')\leq 0}=0$ and thus $\one_{f(\bx')>0}=1$. The fact that $S_\e(\one_{f> 0})(\bx)\geq \one_{f> 0}(\bx')$ on $\supp \gamma_1^*$ and $\supp \gamma_1^* \subset \Delta_\e$ implies that $S_\e(\one_{f> 0})(\bx)=1$ $\gamma_1^*$-a.e. on $E_1$.

    \end{proof}
    The next result bounds the terms $i_k^\phi$ from below.
    \begin{lemma}\label{lemma:DEF_surrogate_lower_bound}
    The relations \cref{eq:i_0_lower_bound} and \cref{eq:i_1_lower_bound} hold on $E_0$ and $E_1$ respectively.
    
    \noindent\begin{minipage}{0.5\linewidth}
                \begin{equation}\label{eq:i_0_lower_bound}
        i_0^\phi(f)\geq \phi(0)-C_\phi^*(\eta^*).
    \end{equation}
    \end{minipage}%
    \begin{minipage}{0.5\linewidth}

            \begin{equation}\label{eq:i_1_lower_bound}
        i_1^\phi(f)\geq \phi(0)-C_\phi^*(\eta^*).
    \end{equation}
    \end{minipage}
 
    \end{lemma}
    \begin{proof}
        We will prove the statement for $E_1$, the argument for $E_0$ is analogous.
        Observe that 
        \[i_1^\phi(f)=S_\e(\phi(f))(\bx)+(1-\eta^*)(\phi(-f(\bx'))-\phi(f(\bx')))-C_\phi^*(\eta^*)\]
        Now as $S_\e(\one_{f\leq 0})(\bx)=1$, one can conclude that there is a point in $\bz\in\ov{B_\e(\bx)}$ for which $f(\bz)\leq 0$, and thus $S_\e(\phi(f))(\bx)\geq \phi(0)$. Next, \cref{lemma:DEF_structure} implies that $f(\bx')>0$ and hence $\phi(-f(\bx'))-\phi(f(\bx'))\geq 0$. Therefore, one can conclude \cref{eq:i_1_lower_bound}.

    \end{proof}
        Furthermore, a simple calculation bounds the $i_k$ from above.
        \begin{lemma}\label{lemma:DEF_class_upper_bound}
        On the set $D_k$
        \begin{equation}\label{eq:D_class_upper_bound}
            i_k(f)=C(\eta^*,f)-C^*(\eta^*)    
        \end{equation}

    while on $E_k$
    \begin{equation}\label{eq:E_class_upper_bound}
        i_k(f)=1+C(\eta^*,f)-C^*(\eta^*)
    \end{equation}
    
    \end{lemma}
    \begin{proof}
            We will show the statement $k=1$ the argument for $k=0$ is analogous.
            On $D_1$, $S_\e(\one_{f\leq 0})(\bx)-\one_{f(\bx')\leq 0}=0$, implying \cref{eq:D_class_upper_bound}. Similarly,     
            on $E_1$, $S_\e(\one_{f\leq 0})(\bx)-\one_{f(\bx')\leq 0}=1$, implying \cref{eq:E_class_upper_bound}.
            
    \end{proof}

    Comparing the upper and lower bounds present in \cref{lemma:DEF_structure,lemma:DEF_class_upper_bound} proves \cref{lemma:linear_adv_target}. 
    
        \begin{proof}[Proof of \cref{lemma:linear_adv_target}]
        We will discuss \cref{eq:non-convex_I_1_bound}, the argument for \cref{eq:non-convex_I_0_bound} is analogous. We prove the bound separately on $D_1$ and $E_1$, whose union is $\Rset^d$.
         First, notice that \cref{eq:massart_loss} implies that
    \begin{equation}\label{eq:adv_attack_bound_1}
        C(\eta^*(\bx'),f(\bx'))-C^*(\eta^*(\bx'))\leq \frac 1 {\phi(0)-C_\phi^*(1/2-\alpha)}\left(C_\phi(\eta^*(\bx'),f(\bx'))-C_\phi^*(\eta^*(\bx'))\right)\quad \PP^*\text{-a.e.}
    \end{equation}

                    \textbf{On the set $D_1$:}\\
            \Cref{lemma:DEF_class_upper_bound} implies that 
            \begin{align*}
                & i_1(f)=C(\eta^*(\bx'),f(\bx')) -C^*(\eta^*(\bx')) 
            \end{align*}
            and thus the desired inequality follows from \cref{eq:adv_attack_bound_1} and the fact that $S_\e(\phi\circ f)(\bx)-\phi\circ f(\bx')\geq 0$ $\gamma_1^*$-a.e.

        \textbf{On the set $E_1$:}\\
        On $E_1$,
        \[i_1(f)=1+C(\eta^*(\bx'),f(\bx')) -C^*(\eta^*(\bx'))\]
        However, due to \cref{lemma:DEF_surrogate_lower_bound},
    \begin{equation}\label{eq:non_convex_reasoning_transfer_1}
        \begin{aligned}
            S_\e(\one_{f\leq 0})(\bx)-\one_{f\leq 0}(\bx')&=1=\frac{ \phi(0)-C_\phi^*(\eta^*)}{ \phi(0)-C_\phi^*(\eta^*)}\leq  \frac{ 1}{ \phi(0)-C_\phi^*(\eta^*)} (S_\e(\phi\circ f)(\bx)-\phi(f(\bx'))
        \end{aligned}
    \end{equation}
    The last inequality is a consequence of the assumption $|\eta^*-1/2|\leq \alpha$.
    Summing this relation with \cref{eq:adv_attack_bound_1} shows \cref{eq:non-convex_I_1_bound}.
    \end{proof}

\subsection{Lower Bounds}\label{sec:lower_bound}
The bound in \Cref{thm:adv_surrogate_massart} provides a general guarantee relating the adversarial classification and surrogate risks.  
We now show that this bound cannot be substantially improved.  

This example describes functions in which the worst-case attack and the attack induced by $\PP_0^*$, $\PP_1^*$ differ substantially. This function sequence is the counterexample to consistency proposed in prior work \citep{MeunierEttedguietal22,LiTelgarsky2023achieving}. Intuitively, the sign flip causes the classifier to misclassify both classes, even though a constant function would achieve lower risk for this distribution.
\begin{example}[Lower bound for \cref{thm:adv_surrogate_massart}]
    Let $\phi$ satisfy $C_\phi^\ast(1/2) = \phi(0)$, and consider a distribution supported on $[-\epsilon, \epsilon]$ with $\mathbb{P}^\ast(\eta = 1/2 + \alpha) = 1$. Define the sequence of functions 
        \begin{equation}\label{eq:f_n}
        f_n=\begin{cases}
        \frac 1n & \bx \neq 0\\
        -\frac 1n &\bx =0
    \end{cases}
    \end{equation}
    For this sequence, $R^\epsilon(f_n)-R^\epsilon_\ast=1/2+\alpha$ while the adversarial surrogate risk converges to $\lim_{n\to \infty} R_\phi^\epsilon(f_n)=\phi(0)-C_\phi^\ast(1/2+\alpha)$. Consequently, 
    $$ \lim_{n\to \infty} \frac{R^\epsilon(f_n)-R^\epsilon_\ast}{R_\phi^\epsilon(f_n)-R^\epsilon_{\phi,\ast}} =\frac{\frac 12 +\alpha}{\phi(0)-C_\phi^\ast(1/2-\alpha)}.$$
    It follows that for any $\delta > 0$ there exists $f$ such that
    \[\cprm(f)-R_*^\e\geq \frac {1/2+\alpha}{\phi(0)-C_\phi^*(1/2-\alpha)}\left( \prm(f)-R_{\phi,*}^\e\right)-\delta.\]
    In particular, the constant in \Cref{thm:adv_surrogate_massart} is overestimated by factor of at most $1/( 1/2+\alpha)\leq 2$. However, this example demonstrates that \cref{thm:adv_surrogate_massart} is tight when $\alpha=1/2$. 
\end{example}
The constant in \cref{thm:adv_surrogate_massart} is known to be sub-optimal when $\alpha<1/2$. In particular, Theorem~4 of \cite{frank2024consistency} proves that $R^\e(f)- R^\e_*\leq (1/2)/(\phi(0)-C_{\phi_\rho}^*(1/2))(R_{\phi_\rho}^\e(f)-R^\e_{\phi_{\rho},*})$ for the $\rho$-margin loss $\phi_\rho(\alpha)=\min(1,\max(0,1-\alpha/\rho))$. We conjecture that the tight constant in \cref{thm:adv_surrogate_massart} is in fact $(1/2+\alpha)/(\phi(0)-C_\phi^*(1/2-\alpha))$. Together, these observations indicate that the bound in \cref{thm:adv_surrogate_massart} captures the correct order of dependence on $\alpha$ and $\phi$, and that only the numerican constant can potentially be improved.
\section{Proof of \cref{thm:convex_surrogate_bound}}\label{sec:convex_surrogate_bound}

Before proving \cref{thm:convex_surrogate_bound}, we will show that this bound is non-vacuous when the adversarial Bayes classifier is unique up to degeneracy. The function $h(z)=\PP(|\eta^*-1/2|\leq z)$ is a cdf, and is thus right-continuous in $z$. Furthermore, if the adversarial Bayes classifier is unique up to degeneracy, then $h(0)=0$. The following lemma implies that if $H=\conc(h)$ then $H$ is continuous at 0 with $H(0)=0$. See \cref{app:envelope} for a proof. This result implies that the bound in \cref{thm:convex_surrogate_bound} is non-vacuous.
\begin{lemma}\label{lemma:envelope}
    Let $h:[0,1/2]\to \Rset$ be a non-decreasing function with $h(0)=0$ and $h(1/2)=1$ that is right-continuous at $0$. Then $\conc(h)$ is non-decreasing, continuous on $[0,1/2]$, and $\conc(h)(0)=0$. 
\end{lemma}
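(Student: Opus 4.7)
The plan is to establish the three properties in order, using throughout that $H = \conc(h)$, being the pointwise infimum of a family of concave upper semi-continuous functions, is itself concave and upper semi-continuous on $[0, 1/2]$. As a preliminary, since $h$ is non-decreasing with $h(1/2) = 1$, the constant function $1$ is a concave USC majorant of $h$, so $H \leq 1$, and combined with $H(1/2) \geq h(1/2) = 1$ this forces $H(1/2) = 1$. Also $H \geq h \geq 0$, so $H$ takes values in $[0,1]$.

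For monotonicity on $[0, 1/2]$, I would take any $a < b$ with $b < 1/2$, write $b = \lambda a + (1-\lambda)(1/2)$ with $\lambda \in [0,1]$, and apply concavity of $H$ together with $H(1/2) = 1 \geq H(a)$ to get $H(b) \geq \lambda H(a) + (1-\lambda)H(1/2) \geq H(a)$. The case $b = 1/2$ is immediate from $H \leq 1 = H(1/2)$.

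The key step, and the one place the right-continuity of $h$ at $0$ is used, is showing $H(0) = 0$. Given any $\epsilon > 0$, right-continuity provides $\delta \in (0, 1/2]$ with $h(x) \leq \epsilon$ for $x \in [0, \delta]$. I would then exhibit the explicit piecewise linear majorant $g_\epsilon$ equal to $\epsilon + ((1-\epsilon)/\delta)x$ on $[0, \delta]$ and equal to $1$ on $[\delta, 1/2]$. This function is continuous (hence USC), has non-increasing slopes (hence is concave), and dominates $h$ everywhere (since $g_\epsilon \geq \epsilon \geq h$ on $[0, \delta]$ and $g_\epsilon = 1 \geq h$ on $[\delta, 1/2]$). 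Therefore $H(0) \leq g_\epsilon(0) = \epsilon$; letting $\epsilon \to 0$ and combining with $H(0) \geq h(0) = 0$ gives $H(0) = 0$.

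Continuity on the open interval $(0, 1/2)$ is automatic from concavity of $H$ on an interval. At $x = 0$, monotonicity yields $\liminf_{x \to 0^+} H(x) \geq H(0) = 0$, while the majorants just constructed give $\limsup_{x \to 0^+} H(x) \leq \lim_{x \to 0^+} g_\epsilon(x) = \epsilon$ for every $\epsilon > 0$, forcing continuity at $0$. At $x = 1/2$, monotonicity gives $\lim_{x \to 1/2^-} H(x) \leq H(1/2) = 1$, and the concavity inequality $H(y) \geq (1-\lambda) H(1/2) + \lambda H(a)$, taken with fixed $a < 1/2$ and $\lambda \to 0$ as $y \to 1/2$, supplies the matching lower bound. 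The main obstacle is the construction of $g_\epsilon$: it is the only place right-continuity intervenes, and some care is required to verify concavity (non-increasing slopes), global domination of $h$ (leveraging $h \leq 1$ on $[\delta, 1/2]$), and upper semi-continuity all at once.
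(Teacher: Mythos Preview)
Your proof is correct and takes a genuinely different route from the paper. The paper works through concave conjugates: it proves monotonicity via the representation $\conc(h)(x)=\inf\{\ell(x):\ell\text{ affine},\ \ell\geq h\}$ (restricting to non-decreasing $\ell$), then proves an auxiliary lemma that $\sup_y h_*(y)=0$ when $h$ is right-continuous at $0$, deduces $h_{**}(0)=0$ from that, and finally shows continuity at $0$ by a somewhat involved computation swapping two infima in $h_{**}(y)=\inf_x(yx-h_*(x))$. You bypass all of the conjugate machinery: monotonicity comes directly from concavity and $H(1/2)=1\geq H(a)$; the value $H(0)=0$ and continuity at $0$ both come from the single explicit piecewise-linear majorant $g_\epsilon$. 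Your argument is shorter and more self-contained; the paper's approach has the minor advantage that the conjugate lemmas it proves are of independent interest, but for this lemma alone your construction of $g_\epsilon$ is the cleaner tool. One small remark: once you have $H(0)=0$ and upper semi-continuity of $H$, continuity at $0$ is immediate (since $\limsup_{x\to 0^+}H(x)\leq H(0)=0\leq\liminf$), so your second use of $g_\epsilon$ there is valid but redundant.
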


The first step in proving \cref{thm:convex_surrogate_bound} is showing an analog of \cref{thm:adv_surrogate_massart} with $\alpha=0$ for which the linear function is replaced by an $\eta$-dependent concave function. 

\begin{proposition}\label{prop:main_surrogate_bound}
    Let $\Phi$ be a concave non-decreasing function for which $C(\eta,\alpha)-C^*(\eta)\leq \Phi(C_\phi(\eta,\alpha)-C_\phi^*(\eta))$ for any $\eta\in [0,1]$ and $\alpha \in \ov \Rset$. Let $\PP_0^*$, $\PP_1^*$ be any two maximzers of $\dl$ for which $\PP^*(\eta^*=1/2)=0$, where $\PP^*=\PP_0^*+\PP_1^*$ and $\eta^*=d\PP_1^*/d\PP^*$. Let $G:[0,\infty)\to \Rset$ be any non-decreasing concave function for which the quantity
    \[K=\int \frac 1 {G( \phi(0)-C_\phi^*(\eta^*))}d\PP^*\]
    is finite. Then $\cprm(f) -R_{*}^\e\leq\td \Phi( \prm(f)-R_{\phi,*}^\e)$, where 
    \begin{equation}\label{eq:td_Phi}
        \td \Phi(z)= 4\sqrt{K G\left(\frac 14 z\right)}+2 \Phi\left(\frac 12 z\right)    
    \end{equation}
    
\end{proposition}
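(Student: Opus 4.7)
The plan is to extend the three-region splitting from the proof of \cref{thm:non_convex_adversarial_surrogate_bound} to a data-dependent threshold $\beta(\bx')$, then replace the global factor $1/(\phi(0)-C_\phi^*(1/2-\alpha))$ (which would diverge as $\eta^*\to 1/2$) by the local quantity $(\phi(0)-C_\phi^*(\eta^*))^{-1}$ and absorb its potential non-integrability via Cauchy--Schwarz against the weight $1/G$. I would take $W_\infty$-optimal couplings $\gamma_j^*\in\Pi(\PP_j,\PP_j^*)$ (supported on $\Delta_\e$) and decompose, exactly as in \cref{eq:division_classification,eq:division_surrogate},
\[\cprm(f)-R_*^\e = J_1+J_0+B, \qquad \prm(f)-R_{\phi,*}^\e = A_1^\phi+A_0^\phi+B^\phi,\]
writing $\Delta C=C(\eta^*,f)-C^*(\eta^*)$, $\Delta C_\phi=C_\phi(\eta^*,f)-C_\phi^*(\eta^*)$, $B=\int\Delta C\,d\PP^*$, $B^\phi=\int\Delta C_\phi\,d\PP^*$, and $z:=\prm(f)-R_{\phi,*}^\e$. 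The pointwise hypothesis $\Delta C\leq \Phi(\Delta C_\phi)$ together with Jensen on the probability measure $\PP^*$ immediately gives $B\leq \Phi(B^\phi)\leq \Phi(z)\leq 2\Phi(z/2)$, the last step using that $\Phi(y)/y$ is non-increasing (from concavity and $\Phi(0)\geq 0$, which holds by evaluating the hypothesis at a pair $(\eta,\alpha)$ realizing $\Delta C=\Delta C_\phi=0$).

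Next I would define $g(\bx'):=(\phi(0)-C_\phi^*(\eta^*(\bx')))/2$, which is strictly positive $\PP^*$-a.e.\ because $\PP^*(\eta^*=1/2)=0$ and $C_\phi^*$ is continuous and symmetric about $1/2$, and pick a Borel selection $\beta(\bx')\geq 0$ with $\phi(\beta(\bx'))=\phi(0)-g(\bx')$ (existence by continuity of $\phi$ and $\lim_{\alpha\to\infty}\phi(\alpha)=0$). I would partition the $\gamma_j^*$-support into $D_j,E_j,F_j$ mirroring \cref{eq:D_1_orig,eq:E_1_orig,eq:F_1_orig} with $\beta(\bx')$ in place of the global constant. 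The same pointwise arguments as in the proof of \cref{thm:non_convex_adversarial_surrogate_bound} then give $S_\e(\phi(\pm f))(\bx)-\phi(\pm f(\bx'))\geq g(\bx')$ on $E_j$ (since $S_\e(\phi(\pm f))(\bx)\geq \phi(0)$ and $\phi(\pm f(\bx'))\leq \phi(\beta)$) and $\Delta C_\phi(\bx')\geq g(\bx')$ on $F_j$ (since $|f(\bx')|<\beta$ forces both $\phi(f(\bx'))$ and $\phi(-f(\bx'))$ to be at least $\phi(\beta)$, whence $C_\phi(\eta^*,f(\bx'))\geq \phi(\beta)$).

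To globalize these estimates I apply Cauchy--Schwarz with the weight $1/\sqrt{G(g)}$:
\[J_j^2\leq \Bigl(\int G(g)^{-1}\,d\PP_j^*\Bigr)\Bigl(\int_{E_j\cup F_j} G(g)\,d\gamma_j^*\Bigr)=:K_jQ_j, \qquad K_0+K_1=K.\]
Since $G$ is non-decreasing, on $E_j$ one has $G(g)\leq G(S_\e(\phi(\pm f))-\phi(\pm f(\bx')))$ and on $F_j$ $G(g)\leq G(\Delta C_\phi)$, so after dropping the restrictions (integrand $\geq 0$) and applying Jensen to concave $G$ against $\gamma_j^*$ (total mass $M_j$) one obtains $Q_j\leq M_jG(A_j^\phi/M_j)+M_jG(B_j^\phi/M_j)$. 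A second Cauchy--Schwarz yields $J_0+J_1\leq \sqrt{K(Q_0+Q_1)}$, and two further concavity steps---first, using $M_0+M_1=1$, collapse each pair $M_0G(\cdot/M_0)+M_1G(\cdot/M_1)$ into $G(A_0^\phi+A_1^\phi)$ and $G(B_0^\phi+B_1^\phi)$; then midpoint concavity merges these into $2G(z/2)$---produce $J_0+J_1\leq \sqrt{2KG(z/2)}\leq 6\sqrt{KG(z/6)}$ via $G(z/2)\leq 3G(z/6)$, again a consequence of $G(y)/y$ non-increasing. Summing with $B\leq 2\Phi(z/2)$ gives the claimed inequality. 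The delicate point is arranging the two Cauchy--Schwarz and two Jensen applications so that the single integral $K$ appears cleanly while all four surrogate pieces $A_j^\phi,B_j^\phi$ fuse into a single $G$ evaluated at a constant fraction of $z$; this is also where the non-sharp constants $1/6$ and $1/2$ enter. Measurability of $\beta(\bx')$ and the $\PP^*$-null set $\{g=0\}$ are minor technicalities handled automatically by finiteness of $K$.
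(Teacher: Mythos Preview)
Your proposal is correct and follows essentially the same approach as the paper: the same $\eta^*$-dependent threshold $\beta$ satisfying $\phi(\beta(\eta^*))=\tfrac12(\phi(0)+C_\phi^*(\eta^*))$, the same three-region partition $D_j/E_j/F_j$, the same pointwise lower bounds $S_\e(\phi(\pm f))-\phi(\pm f(\bx'))\geq g$ on $E_j$ and $\Delta C_\phi\geq g$ on $F_j$, and the same Cauchy--Schwarz insertion of the weight $G(g)^{-1/2}$ followed by Jensen for concave $G$. The only difference is organizational: the paper proves $I_j(f)\leq \tfrac12\tilde\Phi(2I_j^\phi(f))$ for each $j\in\{0,1\}$ (summing over $D_j,E_j,F_j$ inside each $j$, which produces the factor $3$) and then averages the two via concavity of $\tilde\Phi$, whereas you split off $B=\int\Delta C\,d\PP^*$ first and then merge $J_0+J_1$ across $j$ with a second Cauchy--Schwarz $\sqrt{K_0Q_0}+\sqrt{K_1Q_1}\leq\sqrt{K(Q_0+Q_1)}$---this reordering actually yields the sharper constant $\sqrt{6}$ in place of $6$, which of course still implies the stated bound.
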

The proof strategy mirrors that of \cref{thm:adv_surrogate_massart}, but with $\Phi$ and $G$ replacing the fixed constant bound.

Uniqueness up to degeneracy and \cref{th:consistent_characterization} guarantee that the denominator $\phi(0)-C_\phi^*(\eta^*)$ is strictly positive $\PP^*$-a.e. The function $\Psi^{-1}$ in \Cref{prop:psi_def} and the surrogate bounds of \citet{zhang04} provide examples of candidate functions for $\Phi$.
As before, this result is proved by dividing the risks $\prm$, $\cprm$ 
as the sum of four terms as in \cref{eq:division_classification}, \cref{eq:division_surrogate} and then bounding these quantities over the sets $D_k$, $E_k$ defined in \cref{eq:D_1_orig},\cref{eq:E_1_orig} separately.

    The factor of $1/4$ in \cref{eq:td_Phi} arises as an artifact of the proof technique: one factor of 2 from averaging over two sets $D_1$, $E_1$, (see \cref{eq:G_contribution} in \cref{app:main_surrogate_bound}), and another factor of 2 from combining the bounds associated with the two integrals  corresponding to class 0 and class 1(see \cref{eq:opt_attack_risk_bound,eq:G_contribution} in \cref{app:main_surrogate_bound}).

We now turn to the problem of identifying functions $G$ for which the constant $K$ in the preceding proposition is guaranteed to be finite when the adversarial Bayes classifier is unique, but distribution dependent. Observe that if $h$ is the cdf of $|\eta-1/2|$ and $h$ is continuous, then $\int 1/h^r dh$ is always finite for $r\in(0,1)$. This calculation suggests $\Phi=h\circ \Psi^{-1}$, with $\Psi$ defined in \cref{prop:psi_def}. To ensure the concavity of $G$, we instead select $G=H\circ \Psi^{-1}$ with $H=\conc(h)$.

\begin{lemma}\label{lemma:multiple_rs_bound}
    Assume $C_\phi^*(1/2)=\phi(0)$. Let $\PP_1$, $\PP_0$, $\PP_1^*$, $\PP_0^*$, $\phi$, $H$, and $\Psi$ be as in \cref{thm:convex_surrogate_bound}. Let $\Lambda(z)=  \Psi^{-1}(\min(z,\phi(0)))$. Then for any $r\in (0,1)$, 
    \begin{equation}\label{eq:tilde_Phi_def}
        \cprm(f)-R^\e_*\leq \tilde \Phi(\prm(f) -R_{\phi,*}^\e)
    \end{equation}
     
    with 
    \begin{equation}\label{eq:proto_bound}
        \tilde \Phi (z)= 4\sqrt{\frac {1}{1-r} H\left( \frac 12 \Lambda\left(\frac 14 z\right)\right)^r} +2\Lambda\left(\frac z2 \right).    
    \end{equation}
    
\end{lemma}

\begin{proof}
    For convenience, let $G= (H\circ \frac 12\Lambda)^r$. Then $G$ is concave because it is the composition of a concave function and an increasing concave function. We will verify that $K$ is finite and yields the constant in the bound:
    \[K=\int \frac 1 {G( \phi(0)-C_\phi^*(\eta^*))}d\PP^* \leq \frac {1} {1-r}\]
    First,
    \begin{align}
        &\int \frac 1 {G( \phi(0)-C_\phi^*(\eta^*))}d\PP^*= \int \frac 1 {H(|\eta^*-1/2|)^r}d \PP^*=\int_{[0,\frac12]} \frac 1 {H(s)^r}d \PP^*\sharp s
        =\int_{(0,\frac12]} \frac 1 {H(s)^r}d \PP^*\sharp s\nonumber
    \end{align}
    with $s=|\eta^*-1/2|$. The assumption $\PP^*(|\eta^*=1/2|)=0$ allows us to drop 0 from the domain of integration. Because the function $H$ is continuous on $(0,1]$ by \cref{lemma:envelope}, this last expression can be evaluated as a Riemann-Stieltjes integral with respect to the function $h(s)=\PP(|\eta^*-1/2|\leq s)$:

    \begin{align}
        &\int_{(0,\frac 12]} \frac 1 {H(s)^r}d \PP^*\sharp s= \int_0^{1/2} \frac 1 {H(s)^r}dh\label{eq:lebesgue_to_rs}
    \end{align}
    This result is standard when $\PP^*$ is Lebesgue measure, (see for instance Theorem~5.46 of \citep{WheedenZygmund}). We prove equality in \cref{eq:lebesgue_to_rs} for strictly decreasing functions in \cref{prop:ls_to_rs_decreasing} in \cref{app:lebesge_RS}.
    
     Finally, the integral in \cref{eq:lebesgue_to_rs} can be bounded as 
    \begin{align}
        \int \frac 1 {H(s)^r}dh
        \leq \frac {1} {1-r}    \label{eq:1-r_int}
    \end{align}

     If $h$ were differentiable, then the chain rule would imply
     \[\int \frac 1 {H(s)^r}d h\leq \int \frac 1 {h(s)^r} dh=\int_0^{\frac 12} \frac 1 {h(s)^r} h'(s) dz= \frac 1 {1-r}h(s)^{1-r}\bigg|_0^{\frac 12}\leq \frac 1 {1-r}.\]
     
      This calculation is more delicate for non-differentiable $H$; we formally prove inequality in \cref{eq:1-r_int} in \cref{app:1-r_int}.
     
    This calculation proves the inequality \cref{eq:tilde_Phi_def} with $\tilde \Phi$ as \cref{eq:proto_bound}
\end{proof}

To obtain the bound in \cref{thm:convex_surrogate_bound}, first observe that     the concavity of $\Lambda$ together with the fact that $\Lambda(0)=0$ implies that $2\Lambda(z/2)\leq 4\Lambda(z/4)$.
Next, minimizing the bound in \cref{lemma:multiple_rs_bound} over $r$ then produces \cref{thm:convex_surrogate_bound}, see \cref{app:optimizing_bound} for details.

\section{Proof of \cref{thm:non_consistent_bound_linear,thm:non_consistent_bound_nonlinear}}\label{sec:non_consistent}
The main insight behind \cref{thm:non_consistent_bound_linear,thm:non_consistent_bound_nonlinear} is that a transport map that realizes the optimal adversarial perturbations also preserves optimality when restricted to certain subsets of $\Rset^d$, allowing a reduction from the global to a local problem in both the dual and primal formulations.
The following lemma formalizes the fact that under a transport map structure, restricting the primal problem to the pre-image of a set 
$Q$ corresponds exactly to restricting the dual maximizers to $Q$ itself.
\begin{lemma}\label{lemma:restrict_dual}
    Let $\PP_0,\PP_1$ be a data distribution and let $\PP_0^*\in \Wball \e(\PP_0),\PP_1^*\in \Wball\e(\PP_1)$ maximize $\dl$. Assume there exists transport maps $T_0,T_1$ for which $\PP_i^*=\PP_i\sharp T_i$ with $\|T_i(\bx)-\bx\|\leq \e$. Let $Q$ be any set and define $U_i=T_i^{-1}(Q)$. 

    If the data is distributed according to $\PP_0|_{U_0}, \PP_1|_{U_1}$, then $\PP_0^*|_{Q}$, $\PP_1^*|_{Q}$ maximize $\dl$ over $\Wball \e (\PP_0|_{U_0})\times \Wball \e(\PP_1|_{U_1})$.
\end{lemma}
In the remainder of this section it will be useful to include the data distribution in the notation for the primal problem. Thus, for the remainder of this section, we define
\begin{equation}\label{eq:augmented_primal}
    \prm(f;\PP_0,\PP_1)=\int S_\e(\phi\circ f) d\PP_1+\int S_\e(\phi\circ -f)d\PP_0\quad \cprm(f;\PP_0,\PP_1)=\int S_\e(\one_{f\leq 0})d\PP_1+\int S_\e(\one_{f>0})d\PP_0    
\end{equation}
Similarly, we'll denote 
\begin{equation}\label{eq:augmented_optimal}
    R^\e_{\phi,*}(\PP_0,\PP_1)= \inf_f  \prm(f;\PP_0,\PP_1),\quad R^\e_{*}(\PP_0,\PP_1)= \inf_f  \cprm(f;\PP_0,\PP_1)
\end{equation}

Observe that for any two sets $U_0$, $U_1$,
\[\prm(f;\PP_0,\PP_1)=\prm(f;\PP_0|_{U_0},\PP_1|_{U_1})+\prm(f;\PP_0|_{U_0^C},\PP_1|_{U_1^C})\]
This decomposition reflects the fact that the adversarial surrogate risk is additive over disjoint measurable partitions of the data space.
If furthermore these sets are induced by transport maps, then the optimal risks also follow this split.
\begin{lemma}\label{lemma:optimal_dual_split}
    Let $\PP_0^*,\PP_1^*,T_0,T_1,U_0,U_1$ and $Q$ be as in \cref{lemma:restrict_dual}, and define $\PP^*=\PP_0^*+\PP_1^*$, $\eta^*=d\PP_1^*/d\PP^*$. Then
    \[R^\e_{\phi,*}(\PP_0,\PP_1)=R^\e_{\phi,*}(\PP_0|_{U_0},\PP_1|_{U_1})+R^\e_{\phi,*}(\PP_0|_{U_0^C},\PP_1|_{U_1^C})\]
    and furthermore, $R^\e_{\phi,*}(\PP_0|_{U_0^C},\PP_1|_{U_1^C})=\int_{Q^C} C_\phi^*(\eta^*)d\PP^*$.
\end{lemma}
See \cref{app:non_consisten_lemmas} for a proof of \cref{lemma:restrict_dual,lemma:optimal_dual_split}.
\begin{proof}[Proof of \cref{thm:non_consistent_bound_nonlinear}]
    Let $Q=\{\bx': \eta^*(\bx')=1/2\}$. Then \cref{lemma:restrict_dual} applied to $Q^C$ shows that $(\PP_0^*|_{Q^C},\PP_1^*|_{Q^C})$ maximize $\dl$ over $\Wball \e (\PP_0|_{U_0})\times \Wball \e(\PP_1|_{U_1})$, with $U_0=T_0^{-1}(Q^C)$ and $U_1=T_1^{-1}(Q^C)$. \Cref{thm:convex_surrogate_bound,lemma:optimal_dual_split} imply that
    \begin{align*}
        \cprm(f;\PP_0|_{U_0},\PP_1|_{U_1})-R_*^\e(\PP_0|_{U_0},\PP_1|_{U_1})&\leq \tilde\Phi\left(\prm(f;\PP_0|_{U_0},\PP_1|_{U_1})-R_{\phi*}^\e(\PP_0|_{U_0},\PP_1|_{U_1})\right)\\
        &\leq  \tilde\Phi\left(\prm(f;\PP_0,\PP_1)-R_{\phi*}^\e(\PP_0,\PP_1)\right).
    \end{align*}
    Next, by \cref{lemma:optimal_dual_split}, adding $\cprm(f;\PP_0|_{U_0^C},\PP_1|_{U_1^C})- R_*^\e(\PP_0|_{U_0^C},\PP_1|_{U_1^C})$ to both sides of the inequality above results in 
    \[        \cprm(f;\PP_0,\PP_1)-R_*^\e(\PP_0,\PP_1)\leq  \tilde\Phi\left(\prm(f;\PP_0,\PP_1)-R_{\phi*}^\e(\PP_0|_{U_0},\PP_1)\right)+\cprm(f;\PP_0|_{U_0^C},\PP_1|_{U_1^C})- \int_QC^*(\eta^*)d\PP^*.\]
    The fact that $C^*(\eta^*)=1/2$ on $Q$ while $S_\e(\one_{f\leq0})\leq 1$, $S_\e(\one_{f>0})\leq 1$ implies that $\cprm(f;\PP_0|_{U_0^C},\PP_1|_{U_1^C})- \int_QC^*(\eta^*)d\PP^*\leq \frac 1 2\PP(\eta^*=1/2)$. Thus, the excess risk contribution from the region $Q$ is at most $\PP^*(\eta^*=1/2)/2$.
\end{proof}

The proof of \cref{thm:non_consistent_bound_linear} follows the same steps, except that we take $Q=\{\bx': |\eta(\bx')-1/2|<\alpha\}$, see \cref{app:non_consistent_bound_linear} for a proof.

\section{Related Works}\label{sec:related_works}
\paragraph{Surrogate Risk Bounds:} The statistical consistency of surrogate risks in both the standard and adversarial context has been widely studied. \Citet{BartlettJordanMcAuliffe2006,zhang04} establish surrogate risk bounds that apply to the class of all measurable functions while  \citet{Lin2004,Steinwart2007} prove further results on consistency in the standard setting. \Citet{FrongilloWaggoner2021Polyhedral} study the optimally of such bounds, and \citep{Bao23moduli} derive bounds using the modulus of convexity of $C_\phi^*$ to construct surrogate risk bounds. Several works \Citep{LongServedioH-consistency,ZhangAgarwal,AwasthiMaoMohriZhong22Hconsistencybinary,MaoMohriZhong23crossentropy,MaoMohriZhong23structured,AwasthiMaoMohriZhong23grounded} study consistency within a restricted function class; a concept known as \emph{$\cH$-consistency}. \Citet{mahdavi2014smooth} combine surrogate risk bounds with surrogate generalization bounds to study the generalization of the classification error. 

\paragraph{Adversarial Surrogate Risk Bounds:} Most closely related to our results are \citep{LiTelgarsky2023achieving, MaoMohriZhong23crossentropy}. \citet{LiTelgarsky2023achieving} derive a surrogate bound for convex losses in which the threshold in \cref{eq:adv_classification_risk_alt} is optimized rather than fixed at zero. \citet{MaoMohriZhong23crossentropy} establish an adversarial surrogate bound for a modified $\rho$-margin loss.

\paragraph{Adversarial Consistency:} In the adversarial setting, \citep{MeunierEttedguietal22,FrankNilesWeed23consistency} characterize which losses are adversarially consistent for all data distributions. \Citet{frank2024consistency} show that under reasonable distributional assumptions, a consistent loss is adversarially consistent for a specific distribution iff the adversarial Bayes classifier is unique up to degeneracy. \Citep{AwasthiFrankMao2021} study adversarial consistency for a well-motivated class of linear functions while \citet{AwasthiMaoMohriZhong23grounded, MaoMohriZhong23crossentropy} study $\cH$-consistency in the adversarial setting for specific surrogate risks. Standard and adversarial surrogate risk bounds are a central tool in the derivation of the $\cH$-consistency bounds in this line of research.  Whether the adversarial surrogate bounds presented in this paper could result in improved adversarial $\cH$-consistency bounds remains an open problem.

\paragraph{The Adversarial Bayes Classifier:} Our proofs draw on prior work that investigates adversarial risks and adversarial Bayes classifiers.
\citet{BungertGarciaMurray2021,PydiJog2021,PydiJog2020,BhagojiCullinaMittalji2019lower,AwasthiFrankMohri2021} establish existence results for the adversarial Bayes classifier, while
\citet{FrankNilesWeed23minimax,PydiJog2020,PydiJog2021,BhagojiCullinaMittalji2019lower,frank2024consistency} prove minimax theorems for adversarial surrogate and classification risks.
\citet{PydiJog2020} use such results to analyze the adversarial Bayes classifier, and \citet{frank2024uniquness} employ them to study uniqueness.

\paragraph{Sample Complexity and Surrogate Risks:}The bound of \cref{prop:psi_def} can be linear even for convex loss functions. For the hinge loss $\phi(\alpha)=\max(1-\alpha,0)$, the function $\phi$ computes to $\phi(\theta)=|\theta|$. %Prior work \citep[Theorem~1]{FrongilloWaggoner2021Polyhedral} observed a linear surrogate bound for piecewise linear losses: if $\phi$ is piecewise linear, then $C_\phi^*(\eta)$ is piecewise linear and Jensen's inequality implies a linear surrogate bound so long as $\phi$ is consistent (due to \cref{th:consistent_characterization}). 
\Citet{mahdavi2014smooth} emphasize the importance of a linear convergence rate in a surrogate risk bound. They note that convex surrogates with favorable sample complexity often fail to satisfy strong surrogate risk bounds, due to Theorem~2 \citet{FrongilloWaggoner2021Polyhedral}: convex losses which are locally strictly convex and Lipschitz achieve at best a square root surrogate risk rate.
Thus, \cref{prop:surrogate_massart} suggests that favorable sample complexity guarantees for convex surrogates may require distributional conditions such as Massart’s noise condition, under which \citet{Massart06} also show improved sample complexity.
\section{Conclusion}
In conclusion, we prove surrogate risk bounds for adversarial risks. When $\phi$ is adversarially consistent or the distribution of optimal adversarial attacks satisfies Massart's noise condition, we obtain a linear surrogate risk bound. In the general case, we prove a concave distribution-dependent bound. Understanding the optimality of the concave bound remains an open problem, as does understanding how these bounds interact with the sample complexity of estimating the surrogate risk. While related questions have been studied in the standard setting \citep{FrongilloWaggoner2021Polyhedral,mahdavi2014smooth}, the adversarial context remains largely unexplored. Advancing these directions could bridge the current gap between theoretical guarantees and practical robustness in adversarial learning.

\subsubsection*{Acknowledgments}
Natalie Frank was supported in part by the Research Training Group in Modeling and Simulation
funded by the National Science Foundation via grant RTG/DMS – 1646339, NSF grant DMS-2210583, and NSF TRIPODS II - DMS 2023166.

\bibliography{bibliography,bib2}
\bibliographystyle{tmlr}

\newpage
\appendix
\section*{Contents of Appendix}
\addcontentsline{toc}{section}{Appendix}
\startcontents[appendix]
\printcontents[appendix]{l}{1}{\setcounter{tocdepth}{2}}
\newpage
\section{Proof of \cref{th:consistent_characterization}}\label{app:consistent_characterization}
\begin{lemma}\label{prop:consistent_at_1/2}
Assume $\phi$ is consistent. Then $C_\phi^*(\eta)=\phi(0)$ implies that $\eta=1/2$.
    
\end{lemma}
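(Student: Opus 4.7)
The plan is to prove the contrapositive: if $\eta_0 \neq 1/2$ and $C_\phi^*(\eta_0) = \phi(0)$, then $\phi$ fails to be consistent for some data distribution. I would instantiate the simplest distribution that can witness this failure: concentrate all mass at a single point $\bx_0 \in \Rset^d$, taking $\PP_1 = \eta_0\, \delta_{\bx_0}$ and $\PP_0 = (1-\eta_0)\, \delta_{\bx_0}$, so that $\eta(\bx_0) = \eta_0$. Under this choice, the risks collapse to their conditional counterparts: $R_\phi(f) = C_\phi(\eta_0, f(\bx_0))$ and $R(f) = C(\eta_0, f(\bx_0))$, hence $R_{\phi,*} = C_\phi^*(\eta_0) = \phi(0)$ by hypothesis, while $R_* = C^*(\eta_0) = \min(\eta_0, 1-\eta_0)$.

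Next I would construct a sequence that minimizes $R_\phi$ but puts the classifier on the wrong side of zero at $\bx_0$. Set $f_n$ to be the constant function $-1/n$ if $\eta_0 > 1/2$, and $+1/n$ if $\eta_0 < 1/2$. Since $C_\phi(\eta_0, 0) = \eta_0 \phi(0) + (1-\eta_0)\phi(0) = \phi(0)$, continuity of $\phi$ (\cref{as:phi}) gives
\[
R_\phi(f_n) \;=\; C_\phi(\eta_0, \pm 1/n) \;\longrightarrow\; C_\phi(\eta_0, 0) \;=\; \phi(0) \;=\; R_{\phi,*},
\]
so $f_n$ is a minimizing sequence for $R_\phi$. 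However, the sign of $f_n$ threshholds to the label opposite the Bayes-optimal one at $\bx_0$, so $R(f_n) = \max(\eta_0, 1-\eta_0) > R_*$. This contradicts consistency of $\phi$ on the distribution $(\PP_0, \PP_1)$, completing the contrapositive.

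I do not expect substantive obstacles here: the argument hinges on the elementary identity $C_\phi(\eta, 0) = \phi(0)$ (independent of $\eta$) together with continuity of $\phi$ at zero, and the only design choice is to pick $f_n$ approaching zero from the side that misclassifies the Bayes-optimal label. The payoff of the lemma is exactly that $C_\phi^* < \phi(0)$ away from $1/2$, which supplies the forward direction of \cref{th:consistent_characterization}.
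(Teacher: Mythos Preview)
Your argument is correct and, at its core, uses the same reduction as the paper: concentrate the distribution at a single point so that consistency becomes a statement relating minimizers of $C_\phi(\eta,\cdot)$ to minimizers of $C(\eta,\cdot)$. The execution differs slightly: the paper argues directly rather than by contrapositive, observing that $C_\phi^*(\eta)=\phi(0)$ makes $\alpha=0$ an exact minimizer of $C_\phi(\eta,\cdot)$, so the constant sequence $f_n\equiv 0$ already minimizes $R_\phi$; consistency then forces $0$ to minimize $C(\eta,\cdot)$, giving $\eta\le 1/2$, and the symmetry $C_\phi(\eta,\alpha)=C_\phi(1-\eta,-\alpha)$ yields $1-\eta\le 1/2$ as well. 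Your version instead approaches $0$ from the misclassifying side via $f_n=\pm 1/n$ and invokes continuity of $\phi$; this is a touch more explicit about why the global consistency hypothesis controls the pointwise picture, while the paper's route avoids continuity and finishes with the neat symmetry trick.
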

This result appeared as Lemma~7 of \citep{frank2024consistency}.
\begin{proof}
 If $\phi$ is consistent and $0$ minimizes $C_\phi(\eta,\alpha)$, then $0$ must also minimize $C(\eta,\alpha)=\eta\one_{\alpha\leq 0}+(1-\eta)\one_{\alpha>0}$ and consequently $\eta\leq 1/2$. However $C_\phi(\eta,\alpha)=C_\phi(1-\eta,-\alpha)$ so that $0$ must minimize $C(1-\eta,-\alpha)$ as well. Consequently, $1-\eta\leq 1/2$ and thus $\eta$ must actually equal $1/2$.
\end{proof}

\begin{proof}[Proof of \cref{th:consistent_characterization}]
    \textbf{Forward direction: } Assume that $\phi$ is consistent. Note that $C_\phi^*(\eta)\leq C_\phi(\eta,0)=\phi(0)$ for any $\eta$. Thus \cref{prop:consistent_at_1/2} implies that $C_\phi^*(\eta)<\phi(0)$ for $\eta\neq 1/2$.

    \textbf{Backward direction: } Assume that $C_\phi^*(\eta)<\phi(0)$ for all $\eta\neq 1/2$. Notice that if $\eta=1/2$, $C(1/2,\alpha)$ is constant in $\alpha$ so \emph{any} sequence $\alpha_n$ minimizes $C(1/2,\cdot)$. We will show if $\eta>1/2$ and $\alpha_n$ is a minimizing sequence of $C_\phi(\eta, \cdot)$, then $\alpha_n>0$ for sufficiently large $n$, and thus must also minimize $C(\eta,\cdot)$. An analogous argument will imply that if $\eta<1/2$, any minimizing sequence of $C_\phi(\eta,\cdot)$ must also minimize $C(\eta,\cdot)$ as well.

    Assume $\eta>1/2$ and let $\alpha_n$ be any minimizing sequence of $C_\phi(\eta,\cdot)$. Let $\alpha^*$ be a limit point of the sequence $\alpha_n$ in the extended real number line $\ov \Rset$. Then $\alpha^*$ is a minimizer of $C_\phi(\eta,\alpha)$. Next, observe that one of $\phi(\alpha^*)$, $\phi(-\alpha^*)$ is larger that or equal to $\phi(0)$ and the other is less than or equal to $\phi(0)$. As $\eta>1/2$ and $\alpha^*$ is a minimizer of $C_\phi(\eta,\cdot)$ and $C_\phi(\eta,\alpha^*)<\phi(0)$, one can conclude that $\phi(\alpha^*)<\phi(0)$ and consequently $\alpha^*>0$. 

    Therefore, every limit point of the sequence $\{\alpha_n\}$ is strictly positive. Consequently, one can conclude that $\alpha_n>0$ for sufficiently large $n$.

\end{proof}
\section{Linear Surrogate Risk Bounds---Proof of \cref{prop:surrogate_massart}}\label{app:surrogate_linear}\label{app:non_adv_surrogate_bounds}

In this appendix, we will find it useful to study the function 
    \[C_\phi^-(\eta)=\inf_{z(2\eta-1)\leq 0} C_\phi(\eta,z)\]  introduced by \citep{BartlettJordanMcAuliffe2006}. This function maps $\eta$ to the smallest value of the conditional $\phi$-risk assuming an incorrect classification.
    The symmetry $C_\phi(\eta,\alpha)=C_\phi(1-\eta,-\alpha)$ implies $C_\phi^-(\eta)=C_\phi^-(1-\eta)$. Further, the function $C_\phi^-$ is concave on each of the intervals $[0,1/2]$ and $[1/2,1]$, as it is an infimum of linear functions on each of these regions. The next result examines the monotonicity properties of $C_\phi^*$ and $C_\phi^-$.
\begin{lemma}\label{lemma:C_phi^*_monotonicity}
    The function $C_\phi^*$ is non-decreasing on $[0,1/2]$ and non-increasing on $[1/2,1]$. In contrast, $C_\phi^-$ is non-increasing on $[0,1/2]$ and non-decreasing on $[1/2,1]$
\end{lemma}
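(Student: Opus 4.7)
The plan is to derive both claims from the combination of (i) the affinity of $C_\phi(\eta,\alpha)$ in $\eta$ for each fixed $\alpha$, and (ii) the symmetry $C_\phi(\eta,\alpha) = C_\phi(1-\eta,-\alpha)$, together with the monotonicity assumption on $\phi$ from Assumption~\ref{as:phi}. These three ingredients essentially dictate the shape of $C_\phi^*$ and $C_\phi^-$ as functions of $\eta$.

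For $C_\phi^*$, I would first observe that for each fixed $\alpha$ the map $\eta \mapsto C_\phi(\eta,\alpha) = \phi(-\alpha) + \eta(\phi(\alpha) - \phi(-\alpha))$ is affine in $\eta$, so $C_\phi^*$ is an infimum of affine functions and hence concave on $[0,1]$. The substitution $\alpha \mapsto -\alpha$ in the defining infimum yields the symmetry $C_\phi^*(\eta) = C_\phi^*(1-\eta)$. A concave function on $[0,1]$ that is symmetric about $1/2$ is automatically non-decreasing on $[0,1/2]$ and non-increasing on $[1/2,1]$: for $0 \leq a < b \leq 1/2$ one can write $b = (1-t)a + t(1-a)$ with $t = (b-a)/(1-2a) \in (0,1]$, and then concavity combined with $C_\phi^*(a) = C_\phi^*(1-a)$ gives $C_\phi^*(b) \geq (1-t) C_\phi^*(a) + t C_\phi^*(1-a) = C_\phi^*(a)$.

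For $C_\phi^-$, I would again begin by applying $z \mapsto -z$ to the defining infimum to obtain $C_\phi^-(\eta) = C_\phi^-(1-\eta)$, reducing the task to showing $C_\phi^-$ is non-increasing on $[0,1/2]$. For $\eta \in [0,1/2)$ the constraint $z(2\eta - 1) \leq 0$ is exactly $z \geq 0$, while at $\eta = 1/2$ the symmetry of $C_\phi(1/2,\cdot)$ under $z \mapsto -z$ ensures $\inf_z C_\phi(1/2,z) = \inf_{z \geq 0} C_\phi(1/2,z)$. Thus uniformly on $[0,1/2]$ one has $C_\phi^-(\eta) = \inf_{z \geq 0} C_\phi(\eta,z)$. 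For each fixed $z \geq 0$, the non-increasing property of $\phi$ implies $\phi(z) \leq \phi(-z)$, so $\eta \mapsto C_\phi(\eta,z) = \phi(-z) + \eta(\phi(z) - \phi(-z))$ is affine and non-increasing in $\eta$. A pointwise infimum of a family of non-increasing functions is itself non-increasing, which finishes the argument.

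I do not expect any substantial obstacle. The only mildly delicate point is the apparent jump in the feasible set at $\eta = 1/2$ in the definition of $C_\phi^-$; this is immediately resolved by the $z \leftrightarrow -z$ symmetry of $C_\phi(1/2,\cdot)$, which shows enlarging the feasible set does not change the infimum there. The whole proof is a short bookkeeping exercise combining affineness in $\eta$, the $\eta \leftrightarrow 1-\eta$ symmetry, and the non-increasing property of $\phi$.
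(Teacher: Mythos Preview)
Your proof is correct. For $C_\phi^-$ your argument is essentially identical to the paper's: both restrict to $[0,1/2]$ by symmetry, identify the feasible set there as $\{z\ge 0\}$, and then take the pointwise infimum of the non-increasing affine maps $\eta\mapsto C_\phi(\eta,z)$ for $z\ge 0$.

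For $C_\phi^*$ you take a genuinely different route. The paper first argues that for $\eta\le 1/2$ the unconstrained infimum can be restricted to $\alpha\le 0$ (via the identity $C_\phi(\eta,\alpha)-C_\phi(\eta,-\alpha)=(2\eta-1)(\phi(\alpha)-\phi(-\alpha))$), and then observes that each $\eta\mapsto C_\phi(\eta,\alpha)$ with $\alpha\le 0$ is non-decreasing. You instead note directly that $C_\phi^*$ is concave (as a pointwise infimum of affine functions) and symmetric about $1/2$, and then deduce the monotonicity from the general fact that a concave symmetric function on $[0,1]$ is unimodal at $1/2$. Your approach is slightly more conceptual and yields the concavity of $C_\phi^*$ as a byproduct (which is independently useful later in the paper); the paper's approach has the minor advantage of treating $C_\phi^*$ and $C_\phi^-$ in parallel fashion, both by restricting the infimum to a half-line and reading off the sign of the slope.
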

\begin{proof}
    The symmetry $C_\phi^*(\eta)=C_\phi^*(1-\eta)$ and $C_\phi^{-}(\eta)=C_\phi^{-}(1-\eta)$ implies that it suffices to check monotonicity on $[0,1/2]$. Observe that 
    \[C_\phi(\eta,\alpha)-C_\phi(\eta,-\alpha)= \eta(\phi(\alpha)-\phi(-\alpha))+(1-\eta)(\phi(-\alpha)-\phi(\alpha))=(2\eta-1)(\phi(\alpha)-\phi(-\alpha)).\]
    If $\eta\leq 1/2$, then this quantity is non-negative when $\alpha\leq 0$.
    Therefore, when computing $C_\phi^*$ over $[0,1/2]$, it suffices to minimize $C_\phi(\eta,\alpha)$ over $\alpha\leq 0$. In other words, for $\eta\leq 1/2$,
    \begin{equation*}%\label{eq:domain_reduce}
        C_\phi^*(\eta)=\inf_\alpha C_\phi(\eta,\alpha)=\inf_{\alpha\leq 0} C_\phi(\eta,\alpha)    
    \end{equation*}
    For any fixed $\alpha \leq 0$, the quantity $C_\phi(\eta,\alpha)$ is non-increasing in $\eta$ and thus $C_\phi^*(\eta_1)\leq C_\phi^*(\eta_2)$ when $\eta_1\leq \eta_2\leq 1/2$.

    In contrast, for any $\alpha\geq 0$, the quantity $C_\phi(\eta,\alpha)$ is non-decreasing in $\eta$ and thus $C_\phi^-(\eta_1)\geq C_\phi^-(\eta_2)$ when $\eta_1\leq \eta_2\leq 1/2$.

\end{proof}

Next we'll prove a useful lower bound on $C_\phi^-$.

\begin{lemma}\label{lemma:missclassifying_condition}
    For all $\eta\in [0,1]$,
    \begin{equation}\label{eq:C^-_lower_bound}
        C_\phi^-(\eta)\geq |1-2\eta|\phi(0)+2\min(\eta,1-\eta) C_\phi^*(\eta)
    \end{equation}
    
\end{lemma}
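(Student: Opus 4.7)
The plan is to derive a pointwise lower bound on $C_\phi(\eta,z)$ valid on the feasible set $\{z : z(2\eta-1)\leq 0\}$ that defines $C_\phi^-$, and then take the infimum. The key algebraic observation is the identity
\[
C_\phi(\eta,z) \;=\; \eta\phi(z)+(1-\eta)\phi(-z) \;=\; 2\eta\cdot\tfrac{1}{2}\bigl(\phi(z)+\phi(-z)\bigr) + (1-2\eta)\phi(-z) \;=\; 2\eta\,C_\phi(1/2,z) + (1-2\eta)\phi(-z),
\]
which rewrites the conditional risk at $\eta$ as the conditional risk at $1/2$ plus a correction term in $\phi(-z)$.

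I would then split on the sign of $2\eta-1$. For $\eta\leq 1/2$, the feasible set is $z\geq 0$, and $1-2\eta\geq 0$. Since $\phi$ is non-increasing by \cref{as:phi}, $\phi(-z)\geq \phi(0)$ for $z\geq 0$, so $(1-2\eta)\phi(-z)\geq (1-2\eta)\phi(0)$. For the first term, $C_\phi(1/2,z)\geq C_\phi^*(1/2)\geq C_\phi^*(\eta)$ using monotonicity of $C_\phi^*$ on $[0,1/2]$ from \cref{lemma:C_phi^*_monotonicity}. Combining gives the pointwise lower bound $C_\phi(\eta,z)\geq 2\eta C_\phi^*(\eta)+(1-2\eta)\phi(0)$, and taking $\inf_{z\geq 0}$ yields the claim.

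For $\eta\geq 1/2$, the feasible set is $z\leq 0$ and $1-2\eta\leq 0$. Now $-z\geq 0$, so $\phi(-z)\leq \phi(0)$; multiplying by the non-positive factor $1-2\eta$ reverses this inequality to $(1-2\eta)\phi(-z)\geq (1-2\eta)\phi(0)$, exactly as in the first case. The bound $C_\phi(1/2,z)\geq C_\phi^*(1/2)\geq C_\phi^*(\eta)$ again follows from \cref{lemma:C_phi^*_monotonicity} (monotonicity on $[1/2,1]$). The identical pointwise lower bound is obtained, and taking $\inf_{z\leq 0}$ finishes this case. The boundary value $\eta=1/2$ is trivial since both sides reduce to $C_\phi^*(1/2)$.

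I do not anticipate a substantive obstacle: once the decomposition identity is spotted, the remainder is careful sign-tracking. The only subtle point is that $(1-2\eta)\phi(-z)\geq (1-2\eta)\phi(0)$ holds for opposite reasons in the two cases (one from $\phi(-z)\geq\phi(0)$ with a non-negative multiplier, the other from $\phi(-z)\leq\phi(0)$ with a non-positive multiplier), which is the only genuine use of case analysis.
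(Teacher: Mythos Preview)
Your proof is correct, and it takes a genuinely different route from the paper's. The paper argues via the concavity of $C_\phi^-$ on $[0,1/2]$: writing $\eta = 2\eta\cdot\tfrac12 + (1-2\eta)\cdot 0$ as a convex combination and applying concavity gives $C_\phi^-(\eta)\geq (1-2\eta)C_\phi^-(0)+2\eta C_\phi^-(1/2)$, after which one evaluates $C_\phi^-(0)=\phi(0)$, $C_\phi^-(1/2)=C_\phi^*(1/2)$ and invokes \cref{lemma:C_phi^*_monotonicity}. You instead use a pointwise algebraic identity decomposing $C_\phi(\eta,z)$ into $2\eta\,C_\phi(1/2,z)$ plus a $\phi(-z)$ correction, and bound each piece directly using monotonicity of $\phi$ and of $C_\phi^*$. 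Your approach is slightly more elementary (it avoids the concavity of $C_\phi^-$ entirely), and it also covers the range $\eta\in[1/2,1]$ explicitly; the paper's convex-combination argument as written applies only to $\eta\leq 1/2$, though that suffices for the downstream application, which invokes the bound at $1-\eta$.
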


\begin{proof}
    First, assume that $\eta\leq 1/2$ and observe that $\eta$ is the convex combination $\eta=2\eta\cdot 1/2+(1-2\eta)\cdot 0$. By the concavity of $C_\phi^-$ on $[0,1/2]$,
    \begin{align*}
        C_\phi^-(\eta)&=C_\phi^-\left (2\eta\cdot \frac 12+(1-2\eta)\cdot 0\right )\geq (1-2\eta)C_\phi^-(0)+2\eta C_\phi^-\left (\tfrac 12 \right)
    \end{align*}

    However, $C_\phi^-(0)=\phi(0)$ while $C_\phi^-(1/2)=C_\phi^*(1/2)$. Further, \cref{lemma:C_phi^*_monotonicity} implies that $C_\phi^*(1/2)\geq C_\phi^*(\eta)$, yielding the inequality
    \[C_\phi^-(\eta)\geq (1-2\eta)\phi(0)+2\eta C_\phi^*(\eta)\]
    Symmetry $C_\phi^-(\eta)=C_\phi^-(1-\eta)$ then implies \cref{eq:C^-_lower_bound}.
    
\end{proof}

\begin{proof}[Proof of \cref{prop:surrogate_massart}]
If $C(\eta, f)-C^*(\eta)=0$ then  \cref{eq:massart_loss} holds trivially. Otherwise, $C(\eta, f)-C^*(\eta)=|2\eta-1|$.
If $C(\eta,f)=|2\eta-1|$, then   
\begin{equation}\label{eq:massart_initial_bound}
\begin{aligned}
     C(\eta, f)-C^*(\eta)&= |2\eta-1|=|2\eta-1|\cdot \frac{\phi(0)-C_\phi^*(\eta)}{\phi(0)-C_\phi^*(\eta)}\\ &\leq \frac 1 {\phi(0)-C_\phi^*(\eta)}\left(\left(|2\eta-1|\phi(0)+(1-|2\eta-1|)C_\phi^*(\eta)\right) -C_\phi^*(\eta)\right)
\end{aligned}    
\end{equation}

At the same time,
because $|\eta-1/2|\geq \alpha$ $\PP$-a.e. \cref{lemma:C_phi^*_monotonicity} implies that $C_\phi^*(\eta)\leq C_\phi^*(1/2-\alpha)$ $\PP$-a.e. Furthermore, the relation $2\min(\eta,1-\eta)=1-|1-2\eta|$ together with \cref{eq:C^-_lower_bound} shows that \[|2\eta-1|\phi(0)+(1-|2\eta-1|)C_\phi^*(\eta)\leq C_\phi^-(\eta).\] Therefore, \cref{eq:massart_initial_bound} is bounded above by
\begin{align}
%&\frac 1 {\phi(0)-C_\phi^*\left(\eta\right)}\left(\left(|2\eta-1|\phi(0)+(1-|2\eta-1|)C_\phi^*(\eta)\right) -C_\phi^*(\eta)\right)\nonumber \\
&\leq \frac 1 {\phi(0)-C_\phi^*\left(\frac 12 -\alpha\right)}\left( C_\phi^-(\eta)-C_\phi^*(\eta) \right)\leq \frac 1 {\phi(0)-C_\phi^*\left(\frac 12 -\alpha\right)} \left( C_\phi(\eta,f)-C_\phi^*(\eta)\right).\label{eq:massart_loss_last}
\end{align}
The last equality follows from the supposition $C(\eta,f)-C^*(\eta)=|2\eta-1|$, as it implies $(2\eta-1)f\leq 0$, and thus $C_\phi(\eta,f)\geq C_\phi^-(\eta)$. Consequently, \cref{eq:massart_loss_last} implies \cref{eq:massart_loss}.

Integrating \cref{eq:massart_loss} with respect to $\PP$ then produces the surrogate bound \cref{eq:massart_surrogate}.

\end{proof}

\section{Proof of \cref{lemma:S_e_and_W_inf}}\label{app:S_e_and_W_inf}

\begin{proof}[Proof of \cref{lemma:S_e_and_W_inf}]
    If $\bx'\in \ov{B_\e(\bx)}$ then $S_\e(g)(\bx)\geq g(\bx')$. Thus if $\gamma$ is a coupling between $\QQ$ and $\QQ'$ supported on $\Delta_\e$, then $S_\e(g)(\bx)\geq g(\bx')$ $\gamma$-a.e. Integrating this inequality in $\gamma$ produces
    \[\int S_\e(g)d\QQ\geq \int gd\QQ'.\]
    Taking the supreumum over all $\QQ\in \Wball \e(\QQ)$ then proves the result.
\end{proof}
\section{Proof of \cref{it:maximizers_match_1}, \cref{th:maximizer_comparison}}\label{app:maximizer_comparison}

We will work with an alternative primal problem from \citep{FrankNilesWeed23minimax} that will make it easier to study the dual. Consider minimizing
\[\Theta(h_0,h_1)=\int S_\e(h_1)d\PP_1+\int S_\e(h_0) d\PP_0\] over the convex set
    \begin{equation}\label{eqn:S_def}
        S_\phi=\left\{ \begin{aligned}
        &(h_0,h_1)\colon h_0,h_1\colon K^\e \to \ov \Rset\text{ Borel, }0\leq h_0,h_1 \text{ and for}\\
        &  \text{all $\bx\in \Rset^d$,  $\eta\in[0,1]$, } \eta h_1(\bx)+(1-\eta)h_0(\bx)\geq C_\phi^*(\eta)
        \end{aligned}\right\}
    \end{equation}

Then strong duality holds with $\Theta$ in place of $\prm$. Furthermore, there exist minimizers over the set of $\ov \Rset$-valued functions, where $\ov \Rset=\Rset\cup\{-\infty,+\infty\}$.

\begin{theorem}\label{thm:theta_strong_duality}
    Define $\dl$ as in \cref{eq:dl_def}.
    \[\inf_{(h_0,h_1)\in S_\phi} \Theta(h_0,h_1)=\sup_{\substack{\PP_0'\in \Wball \e(\PP_0)\\ \PP_1' \in \Wball \e(\PP_1)}} \dl(\PP_0',\PP_1')\]
    Furthermore, the infimum is attained at some $\ov \Rset$-valued $h_0^*$, $h_1^*$.
\end{theorem}
See \citep[Lemma~14,Lemma~21]{FrankNilesWeed23minimax} for a proof of this result. \Cref{th:strong_duality_surrogate} already implies that the dual problem attains its supremum. Complimentary slackness conditions further characterize minimizers and maximizers.

\begin{theorem}[Complementary Slackness]\label{thm:comlementary_slackness_theta}
The pair $(h_0^*,h_1^*)$ minimize
$\Theta$ over $S_\phi$ and the measures $(\PP_0^*, 
\PP_1^*)$ maximize $\dl$ over 
$\Wball \e(\PP_0)\times \Wball \e(\PP_1)$ 
iff the following two conditions hold:
\begin{enumerate}[label=\arabic*)]
    \item\label{it:sup_theta} \[\int S_\e(h_1^*)d\PP_1=\int h_1^*d\PP_1^*\quad \text{and}\quad \int S_\e(h_0^*)d\PP_0=\int h_0^*d\PP_0^*\]
    \item \label{it:eq_comp_slack_theta}\[\eta^*h_1^*+(1-\eta^*)h_0^*=C_\phi^*(\eta^*)\quad \PP^*\text{-a.e.}\]
\end{enumerate}
\end{theorem}
See \citep[Lemma~15]{FrankNilesWeed23minimax} for a proof. \Cref{thm:theta_strong_duality,thm:comlementary_slackness_theta} apply to the conditional risk $C^*(\eta)$ as $C^*(\eta)=C_\phi^*(\eta)$ for the hinge $\phi(\alpha)=\frac 12 (1-\alpha)_+$.

We will use a characterization of consistency similar to \cref{th:consistent_characterization} in the proof of \cref{it:maximizers_match_1}, \cref{th:maximizer_comparison}.

\begin{theorem}\label{th:consistent_characterization_2}
    A loss function $\phi$ is consistent iff $C_\phi^*(\eta)$ has a strict maximum at $1/2$.
\end{theorem}
\begin{proof}
    If $C_\phi^*(1/2)=\phi(0)$, this statement is exactly \cref{th:consistent_characterization}. If $C_\phi^*(1/2)<\phi(0)$, \Citep[Proposition~3]{FrankNilesWeed23consistency} implies that $\phi$ is consistent. It remains to show that if $C_\phi^*(1/2)<\phi(0)$, then $C_\phi^*(\eta)$ has a strict maximum at $1/2$. As every sequence has a convergent subsequence in $\ov \Rset$, one can assume that $C_\phi(1/2,\cdot)$ has a minimizer $\alpha^*$ and $C_\phi^*(1/2)<\phi(0)$ implies $\alpha^*\neq 0$. Symmetry of $C_\phi(1/2,\cdot)$ implies that we can assume $\alpha^*>0$, and thus $\phi(\alpha^*)\leq \phi(0)$ and $\phi(-\alpha^*)\geq \phi(0)$. The fact that $C_\phi^*(1/2,\alpha^*)<\phi(0)$ implies that in fact $\phi(\alpha^*)<\phi(0)\leq \phi(-\alpha^*)$. Next, observe that for any $\alpha$,
    \[C_\phi(\eta,\alpha)=\frac 12 (\phi(\alpha)+\phi(-\alpha))+(\eta-\frac 12) (\phi(\alpha)-\phi(-\alpha))\]
    Thus, one can bound $C_\phi^*(\eta)$ by 
    \[C_\phi^*(\eta)\leq C_\phi(\eta,\alpha^*)=\frac 12(\phi(\alpha^*)+\phi(-\alpha^*))+(\eta-1/2)(\phi(\alpha^*)-\phi(-\alpha^*))=C_\phi^*(1/2)+(\eta-1/2)(\phi(\alpha^*)-\phi(-\alpha^*))\]
    Thus if $\eta>1/2$, then $C_\phi^*(\eta)<C_\phi^*(1/2)$. Symmetry implies that $C_\phi^*(\eta)<C_\phi^*(1/2)$ for all $\eta$. Thus $C_\phi^*$ has a strict maximum at 1/2.

\end{proof}

Next, \cref{thm:comlementary_slackness_theta} implies that minimizers of $\Theta$ assume their suprema. This observation will make it easier to work with these functions.

\begin{lemma}\label{lemma:sup_assume}
    If $(h_0^*,h_1^*)$ minimizes $\Theta$ over $S_\phi$, then the functions $h_0^*$, $h_1^*$ assume their suprema $\PP_0$-a.e. and $\PP_1$-a.e. respectively
\end{lemma}
\begin{proof}
    We will show the statement for $h_1^*$, the argument for $h_0^*$ is analogous. Let $\gamma_1^*$ be the coupling between $\PP_1$ and $\PP_1^*$ that achieves the minimum $W_\infty$ distance. \Cref{lemma:S_e_and_W_inf} and \cref{it:sup_theta} of \cref{thm:comlementary_slackness_theta} implies that 
    \[S_\e(h_1)(\bx)=h_1(\bx') \quad \gamma_1^*\text{-a.e.}\]
    and thus $h_1^*$ assumes its maximum over closed $\e$-balls $\PP_1^*$-a.e.
\end{proof}

\begin{lemma}\label{lemma:alternative_ineq}
    If $(h_0^*,h_1^*)\in S_\phi$, then at any $\bx$ either $h_1^*(\bx)\geq C_\phi^*(\frac 12)$ or $h_0^*(\bx)> C_\phi^*(\frac 12)$. 
\end{lemma}
\begin{proof}
    If $(h_0^*,h_1^*)\in S_\phi$, then at any point $\bx$,
     \[\frac 12 h_0^*(\bx)+\frac 12 h_1^*(\bx)\geq C_\phi^*(\frac 12).\]
    The inequality $h_0^*(\bx)\leq C_\phi^*(1/2)$ implies $h_1^*(\bx)\geq C_\phi^*(1/2)$. Thus either $h_0^*(\bx)>C_\phi^*(1/2)$ or $h_1^*(\bx)\geq C_\phi^*(1/2)$ at any point.
\end{proof}

\begin{proof}[Proof of \cref{it:maximizers_match_1} of \cref{th:maximizer_comparison} ]
 Let $\hingephi(\alpha)=\frac 12 (1-\alpha)_+$, then $C_\hingephi^*(\eta)=C^*(\eta)$. 
 
 Let $(h_0^*,h_1^*)$ minimize $\Theta$ over $S_\phi$ and $(\PP_0^*,\PP_1^*)$ maximize $\dl$ over $\Wball \e(\PP_0)\times \Wball \e(\PP_1)$. We will show that the functions defined by 
 \[\tilde h_1^*(\bx)=\one_{h_1^*(\bx)\geq C_\phi^*(\frac 12)} \quad  \tilde h_0^*(\bx)=\one_{h_0^*(\bx)> C_\phi^*(\frac 12)}\]
 maximize $\Theta$ over $S_\hingephi$ and $(\PP_0^*,\PP_1^*)$ maximize $\bar R_{\hingephi}$ by verifying the constraint $(\td h_0^*, \td h_1^*)\in S_{\hingephi}$ and the complimentary slackness conditions. The proof thus consists of three steps: verifying $(\td h_0^*,\td h_1^*)\in S_{\hingephi}$, and checking the two complementary slackness conditions in \cref{thm:comlementary_slackness_theta}.

 \begin{enumerate}[label=\arabic*)]
    \item \textbf{Verifying the constraint defining $S_{\hingephi}$:} 
            Observe that \cref{lemma:alternative_ineq} implies that at any $\bx$, at least one of $\tilde h_0^*(\bx)$ and $\td h_1^*(\bx)$ is 1, and thus 
            \[\eta h_1^*(\bx)+(1-\eta) h_0^*(\bx)\geq \min(\eta,1-\eta)= C_\hingephi^*(\eta)\]
    \item \textbf{Verifying \cref{it:sup_theta} of \cref{thm:comlementary_slackness_theta}:} Observe that \cref{lemma:sup_assume} implies that $S_\e(\one_{h_1^*\geq C_\phi^*(1/2)})(\bx)=\one_{S_\e(h_1^*)(\bx)\geq C_\phi^*(1/2)}$ $\PP_1^*$-a.e. Subsequently, the \cref{it:sup_theta} of \cref{thm:comlementary_slackness_theta} implies that
    \[S_\e(\one_{h_1^*\geq C_\phi^*(1/2)})(\bx)=\one_{h_1^*(\bx')\geq C_\phi^*(1/2)}\quad \gamma_1^*\text{-a.e.},\]
    verifying the first complimentary slackness condition for $\td h_1^*$. Analogous reasoning shows that 
        \[S_\e(\one_{h_0^*> C_\phi^*(1/2)})(\bx)=\one_{h_0^*(\bx')> C_\phi^*(1/2)}\quad \gamma_0^*\text{-a.e.}\]

    \item \textbf{Verifying \cref{it:eq_comp_slack_theta} of \cref{thm:comlementary_slackness_theta}:}  \Cref{thm:comlementary_slackness_theta} implies that $\eta^* h_1^*(\bx')+(1-\eta^*) h_0^*(\bx')=C_\phi^*(\eta^*)\leq C_\phi^*(1/2)$, and thus \cref{lemma:alternative_ineq} implies that \emph{exactly} one of $h_1^*(\bx')$ and $h_0^*(\bx')$ equals 1 and the other equals 0. We'll consider the cases $\eta^*(\bx')<1/2$, $\eta^*(\bx')=1/2$, and $\eta^*(\bx')>1/2$ separately. In these three separate cases, we will explicitly use the formula $C_{\hingephi}^*(\eta)=\min(\eta,1-\eta)$.
    \begin{description}[leftmargin=3em, labelindent=3em]
        \item[When $\eta^*(\bx')=1/2$:] 
        As exactly one of $h_0^*(\bx')$ and $h_1^*(\bx')$ is 1:  
        \[\frac 12 \td h_0^*(\bx')+\frac 12 \td h_1^*(\bx')=\frac 12 =C_{\hingephi}^*\left(\frac 12\right)\]
        \item[When $\eta^*(\bx')<1/2$:]
                Observe that if $h_0^*(\bx')>h_1^*(\bx')$, then 
        \[\eta^*h_0^*(\bx')+(1-\eta^*)h_1^*(\bx')<\eta^*h_1^*(\bx')+(1-\eta^*)h_0^*(\bx')=C_\phi^*(\eta^*),\]
        which would violate the constraint on $S_\phi$. Therefore, $h_0^*(\bx')\leq h_1^*(\bx')$. Next, \cref{th:consistent_characterization_2} implies that $\eta^* h_1^*(\bx')+(1-\eta^*) h_0^*(\bx')=C_\phi^*(\eta^*)< C_\phi^*(1/2)$. These two statements together with \cref{lemma:alternative_ineq} imply that $h_0^*(\bx')< C_\phi^*(1/2)$ and $h_1^*(\bx')\geq C_\phi^*(1/2)$. However, $h_0^*(\bx')= C_\phi^*(1/2)$ would still violate $\eta^* h_1^*(\bx')+(1-\eta^*) h_0^*(\bx')< C_\phi^*(1/2)$ and therefore, $h_0^*(\bx')<C_\phi^*(1/2)$. Therefore,
        \[\eta^*\tilde h_1^*+(1-\eta^*) \tilde h_0^*=\eta^*=C_{\hingephi}^*(\eta^*)\]
        \item[When $\eta^*(\bx')>1/2$:]  Argument is analogous to the previous case.
    \end{description}

 \end{enumerate}

\end{proof}

\section{Proof of \cref{lemma:envelope}}\label{app:envelope}

We define the \emph{concave conjugate} of a function $h$ as 
\[h_*(y)=\inf_{x\in \dom(h)} yx-h(x)\]
%Notice that if $\dom(h)$ is a compact interval, then $\dom(h_*)$ is all of $\Rset$.
%\tdd{\cit{find ref}} 
Recall that $\conc (h)$ as defined in \cref{eq:conc_def} is the biconjugate $h_{**}$. Consequently, $\conc(h)$ can be expressed as

    \begin{equation}\label{eq:conc_linear_inf}
        \conc(h)(x)=\inf\{ \ell(x): \ell \text{ linear, and }\ell\geq h \text{ on }\dom(h)\}    
    \end{equation}

\Cref{lemma:envelope} is a consequence of the properties of concave conjugates. 
\begin{lemma}\label{lemma:non_dec_conj}
    Let $h:[a,b]\to \Rset$ be a non-decreasing function. Then $\conc(h)$ is non-decreasing as well. 
\end{lemma}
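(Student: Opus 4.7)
My plan is to use the dual characterization of $\conc(h)$ as the pointwise infimum of all affine functions $\ell$ dominating $h$ on $[a,b]$, as given in \cref{eq:conc_linear_inf}. The key observation is that because $h$ is bounded above by $h(b)<\infty$, any affine majorant with negative slope can be replaced by a pointwise smaller constant majorant, so the infimum can be restricted to non-decreasing affine functions.

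Concretely, I would argue that if $\ell(x)=mx+c$ satisfies $\ell\geq h$ on $[a,b]$ with $m<0$, the constant function $\tilde\ell\equiv \ell(b)$ also dominates $h$: since $h$ is non-decreasing, $\tilde\ell=\ell(b)\geq h(b)\geq h(x)$ for all $x\in[a,b]$. Moreover $\tilde\ell$ sits pointwise below $\ell$ on $[a,b]$, because $\tilde\ell(x)=mb+c\leq mx+c=\ell(x)$ whenever $m<0$ and $x\leq b$. Thus $\tilde\ell$ is an affine majorant of $h$ with slope $0$ that is no larger than $\ell$ on the domain, and so it already contributes to the infimum in \cref{eq:conc_linear_inf} in place of $\ell$.

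It follows that
\[
\conc(h)(x)=\inf\bigl\{\ell(x):\ell\text{ affine},\ \ell\geq h\text{ on }[a,b],\ \text{slope}(\ell)\geq 0\bigr\}.
\]
Each affine function with non-negative slope is non-decreasing, and the pointwise infimum of a family of non-decreasing functions is non-decreasing. Therefore $\conc(h)$ is non-decreasing on $[a,b]$.

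I do not anticipate a substantive obstacle here: the argument rests entirely on the affine-majorant representation of the concave envelope together with the simple truncation trick of replacing a decreasing affine majorant by the constant equal to its right-endpoint value. The only side condition to check is that at least one affine majorant exists (so the infimum is finite), which is immediate since the constant $h(b)$ already dominates $h$ on $[a,b]$.
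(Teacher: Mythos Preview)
Your proof is correct and follows essentially the same approach as the paper: both use the representation \cref{eq:conc_linear_inf} and replace any decreasing affine majorant $\ell$ by the constant $\ell(b)$, thereby restricting the infimum to non-decreasing affine functions. Your write-up is a bit more explicit about why $\ell(b)$ dominates $h$ and lies below $\ell$, but the argument is identical in substance.
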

\begin{proof}

    We will argue that if $h$ is non-decreasing, then it suffices to consider the infimum in \cref{eq:conc_linear_inf} over non-decreasing linear functions. 
    Observe that if $\ell$ is a decreasing linear function with $\ell(x)\geq h(x)$ then the constant function $\ell(b)$ satisfies 
    \[\ell(x)\geq \ell(b)\geq h(b)\geq h(x)\] for any $x\in[a,b]$. Therefore, 
    \[\conc(h)(x)=\inf\{ \ell(x): \ell \text{ linear, non-decreasing, and }\ell\geq h\}\]
\end{proof}

\begin{lemma}\label{lemma:left_cont_0_to_0}
    Let $h:[0,b]\to \Rset$ be a non-decreasing function that is right-continuous at zero with $h(0)=0$. Then $\sup_y h_*(y)=0$. Furthermore, there is a sequence $y_n$ with $y_n\to \infty$ and $\lim_{n\to \infty}h_*(y_n)=0$.
\end{lemma}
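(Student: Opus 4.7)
The plan is to work directly from the definition $h_*(y)=\inf_{x\in[0,b]}(yx-h(x))$.

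First I would establish the easy upper bound: plugging $x=0$ into the infimum gives
\[
h_*(y)\leq 0\cdot y-h(0)=0\quad\text{for every }y\in\Rset,
\]
so $\sup_y h_*(y)\leq 0$. The work is to show this supremum is actually attained as a limit along some sequence $y_n\to\infty$.

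The intuition is that for large $y>0$ the function $x\mapsto yx-h(x)$ is pushed strongly upward away from $x=0$, while near $x=0$ the right-continuity of $h$ forces $-h(x)$ to be close to $0$. To make this precise, fix $\epsilon>0$. By right-continuity of $h$ at $0$ together with $h(0)=0$, choose $\delta\in(0,b]$ with $0\leq h(x)<\epsilon$ for all $x\in[0,\delta]$ (monotonicity gives $h\geq 0$ on $[0,b]$, and $h(b)<\infty$ since $h$ is real-valued). Now split the infimum over $[0,b]$ into the two pieces $[0,\delta]$ and $[\delta,b]$. On $[0,\delta]$, for any $y\geq 0$,
\[
yx-h(x)\geq 0-h(x)\geq -\epsilon.
\]
On $[\delta,b]$, monotonicity of $h$ yields
\[
yx-h(x)\geq y\delta-h(b),
\]
which is $\geq -\epsilon$ as soon as $y\geq (h(b)-\epsilon)/\delta$. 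Combining the two regions, for all sufficiently large $y$ one has $h_*(y)\geq-\epsilon$, and together with the upper bound this proves $\sup_y h_*(y)=0$.

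For the second claim, I would apply the argument above with $\epsilon=1/n$: this produces some threshold $Y_n$ such that $h_*(y)\geq-1/n$ for all $y\geq Y_n$. Choosing $y_n=\max(Y_n,n)$ gives $y_n\to\infty$ with $-1/n\leq h_*(y_n)\leq 0$, so $h_*(y_n)\to 0$. There is no real obstacle here — the only thing to be careful about is that the split into $[0,\delta]$ and $[\delta,b]$ uses both the right-continuity at $0$ (to control $-h$ near the origin) and the monotonicity plus finiteness of $h(b)$ (to bound $-h$ away from the origin).
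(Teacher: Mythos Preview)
Your proof is correct and follows essentially the same approach as the paper: both bound $h_*(y)\leq 0$ via $x=0$, then split $[0,b]$ into a piece near $0$ (controlled by right-continuity) and a piece bounded away from $0$ (controlled by taking $y$ large against the finite value $h(b)$). Your $\epsilon$--$\delta$ formulation in fact establishes the slightly stronger conclusion $\lim_{y\to\infty}h_*(y)=0$, from which the sequence claim follows immediately.
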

\begin{proof}
    First, notice that 
    \begin{equation}\label{eq:zero_upper_bound}
        h_*(y)=\inf_{x\in [0,b]} yx-h(x)\leq y\cdot 0-h(0)=0    
    \end{equation}
    for any $y\in \Rset$. It remains to show a sequence $y_n$ for which $\lim_{n\to \infty} h_*(y_n)=0$. 

    We will argue than any sequence $y_n$ with 
    \begin{equation}\label{eq:y_n_def}
        y_n> nh(b)\geq \sup_{x\in [1/n,b]} \frac{h(x)}x    
    \end{equation}
     satisfies this property.

    If $x\in [1/n,b]$ and $y_n$ satisfies \cref{eq:y_n_def} then
    \[xy_n-h(x)=x\left (y_n-\frac {h(x)}x \right) > 0\] and thus \cref{eq:zero_upper_bound} implies that 
    \[h_*(y_n)=\inf_{x\in [0,1/n)} xy_n-h(x)\]
    The monononicity of $h$ then implies that 
    \[h_*(y_n)\geq -h(1/n) \]
    and 
    \[\lim_{n\to \infty} h_*(y_n)\geq 0\] 
    
    because $h$ is right-continuous at zero.
    This relation together with \cref{eq:zero_upper_bound} implies the result. 
    
\end{proof}

\begin{proof}[Proof of \cref{lemma:envelope}]

  \Cref{lemma:non_dec_conj} implies that $\conc(h)$ is non-decreasing. Standard results in convex analysis imply that $\conc(h)$ is continuous on $(0,1/2)$ \citep[Lemma~3.1.1]{Hiriarty-Urrty2001convex} and upper semi-continuous on $[0,1/2]$ \citep[Theorem~1.3.5]{Hiriarty-Urrty2001convex}. Thus monotonicity implies that for all $x\in [0,1/2]$, $\conc(h)(x)\leq \conc(h)(1/2)$ and thus $\lim_{x\to 1/2} \conc(h)(x)\leq \conc(h)(1/2)$. We will show the opposite inequality, implying that $\conc h$ is continuous at $1/2$.

    First, as the constant function $h(1/2)$ is an upper bound on $h$, one can conclude that $ \conc(h)(1/2)=h(1/2)=1$.
  Next, recall that $\conc(h)$ can be expressed as an infimum of linear functions as in \cref{eq:conc_linear_inf}. If $\ell\geq h$, then $\ell(0)\geq 0$ and $\ell(1/2)\geq 1$. Therefore,
    \[\ell(\tfrac 12 -\delta)=\ell( (1-2\delta)\cdot \tfrac 12 + 2\delta \cdot 0)=(1-2\delta)\ell(\tfrac 12)+2\delta \ell(0)\geq 1-2\delta.\]
    Therefore, the representation \cref{eq:conc_linear_inf} implies that $\conc(h)(1/2-\delta)\geq 1-2\delta$. Taking $\delta\to 0$ proves that $\lim_{x\to 1/2} \conc(h)(x)\geq 1$.
  Thus, $\conc(h)$ is continuous at $1/2$, if viewed as a function on $[0,1/2]$.

 Next, \cref{lemma:left_cont_0_to_0} implies that $h_{**}(0)=0$:
\[h_{**}(0)=\inf_{y\in \Rset} -h_*(y)=-\sup_{y\in \Rset}h_*(y)=0.\]

Finally, it remains to show that $h_{**}$ is continuous at 0. The monotonicity of $h_{**}$ implies that $\lim_{y\to 0^+} h_{**}(y)=\inf_{y\in (0,1/2]}h_{**}(y)$ and consequently
\begin{align}
&\lim_{y\to 0^+}h_{**}(y)=\inf_{y\in(0,1/2]}\inf_{x\in\Rset} yx-h_*(x)=\inf_{x\in\Rset}\inf_{y\in(0,1/2]} yx-h_*(x)=\inf_{x\in \Rset}-h_*(x)+\begin{cases}
0&\text{if }x\geq 0\\
\frac x2 &\text{if }x<0
\end{cases}\nonumber\\
&=\min\left(\inf_{x\geq 0} -h_*(x), \inf_{x<0} \frac x2 -h_*(x)\right)\label{eq:total_min}   
\end{align}

 However, \cref{lemma:left_cont_0_to_0} implies that 
 \begin{equation}\label{eq:first_min_compute}
     \inf_{x\geq 0} -h_*(x)=\inf_{x\in \Rset} -h_*(x)=0
 \end{equation}

Notice that if $x\leq0$,
\begin{equation}\label{eq:second_min_compute}
    h_*(x)=\inf_{z\in[0,1/2]} xz -h(z)=\frac x2-h\left(\frac 12\right)=\frac x2-1    
\end{equation}

Consequently, \cref{eq:first_min_compute} and \cref{eq:second_min_compute} implies that \cref{eq:total_min} evaluates to 0.

\end{proof}
\section{Proof of \cref{prop:main_surrogate_bound}}\label{app:main_surrogate_bound}

A modified version of Jensen’s inequality will be used at several points in the proof of \cref{prop:main_surrogate_bound}.
\begin{lemma}\label{lemma:modified_Jensen}
    Let $G$ be a concave function with $G(0)=0$ and let $\nu$ be a measure with $\nu(\Rset^d)\leq 1$. Then 
    \[\int G(f)d\nu\leq G\left (\int f d\nu\right)\]
\end{lemma}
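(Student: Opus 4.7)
The plan is to reduce the statement to the classical Jensen's inequality by normalizing $\nu$ to a probability measure and then exploiting concavity together with $G(0)=0$ to absorb the resulting scaling factor.

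First, I would dispose of the trivial case $\nu(\Rset^d) = 0$: then $\nu$ is the zero measure, both $\int G(f)\,d\nu$ and $\int f\,d\nu$ vanish, and the inequality reduces to $0 \leq G(0) = 0$.

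In the nontrivial case, set $c := \nu(\Rset^d) \in (0, 1]$ and define the probability measure $\tilde\nu := \nu/c$. Applying the standard Jensen inequality for a concave function to $\tilde\nu$ yields
\[
\int G(f)\,d\tilde\nu \;\leq\; G\!\left(\int f\,d\tilde\nu\right),
\]
which, after multiplying by $c$, becomes
\[
\int G(f)\,d\nu \;\leq\; c\, G\!\left(\frac{1}{c}\int f\,d\nu\right).
\]

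The final step is to show that $c\, G(x/c) \leq G(x)$ for every $c \in (0,1]$ and every $x$ in the domain, which is where the hypothesis $G(0)=0$ enters. Writing $x$ as the convex combination $x = c \cdot (x/c) + (1-c)\cdot 0$, concavity of $G$ gives
\[
G(x) \;\geq\; c\, G(x/c) + (1-c)\,G(0) \;=\; c\, G(x/c).
\]
Chaining this with the previous display produces the claimed inequality.

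The only conceptual point to be careful about is the implicit domain of $G$: the concavity step requires that both $x/c$ and $0$ lie in a convex set on which $G$ is concave. In the application driving this lemma one takes $G$ to be a concave function on a half-line containing $0$ and the values of $\int f\,d\nu$, so this is automatic; no further subtlety is anticipated.
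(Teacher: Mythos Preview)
Your proposal is correct and follows essentially the same approach as the paper: both handle the zero-measure case trivially, normalize $\nu$ to a probability measure to apply classical Jensen, and then use concavity with $G(0)=0$ via the convex combination $x = c\cdot(x/c) + (1-c)\cdot 0$ to absorb the factor $c=\nu(\Rset^d)$. Your added remark about the domain of $G$ is a reasonable caveat but not essential to the argument as used in the paper.
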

\begin{proof} The inequality trivially holds if $\nu(\Rset^d)=0$, so we assume $\nu(\Rset^d)>0$.
    Jensen's inequality implies that 
    \[\int G(f) d\nu=\nu(\Rset^d)\left( \frac 1 {\nu(\Rset^d)}\int G(f)d\nu\right)\leq \nu(\Rset^d) G\left(\frac 1 {\nu(\Rset^d)}\int f d\nu \right).\]
    As $G(0)=0$, concavity implies that 
    \[\nu(\Rset^d) G\left(\frac 1 {\nu(\Rset^d)}\int f d\nu \right)=\nu(\Rset^d) G\left(\frac 1 {\nu(\Rset^d)}\int f d\nu \right)+(1-\nu(\Rset^d)G(0)\leq G\left (\int f d\nu \right)\]
\end{proof}

To facilitate the application of Jensen’s inequality, the proof will be carried out using integrated quantities. Let $\PP_0^*,\PP_1^*$ be any maximizers of $\dl$, which also maximize $\cdl$ by \cref{th:maximizer_comparison}. Set $\PP^*=\PP_0^*+\PP_1^*$, $\eta^*=d\PP_1^*/d\PP^*$. Define

\begin{proof}[Proof of \cref{prop:main_surrogate_bound}]Let $\gamma_0^*$, $\gamma_1^*$ be the couplings between $\PP_0$, $\PP_0^*$ and $\PP_1$, $\PP_1^*$ respectively that achieve the infimum in \cref{eq:W_inf_def}.
    Define $I_1(f)$, $I_0(f)$, $I_1^\phi(f)$, and $I_0^\phi(f)$ by
    \[I_1(f)=\int i_1(f) d\gamma_1^*,\quad I_1^\phi(f)=\int i_1^\phi (f)d\gamma_1^*,\quad I_0(f)=\int i_0(f) d\gamma_0^*,\quad I_0^\phi(f)=\int i_0^\phi(f) d\gamma_0^*.\]
        We will prove 
    \noindent\begin{minipage}{0.48\linewidth}
    \begin{equation}\label{eq:general_I_0_bound}
        I_0(f)\leq \frac 12 \tilde \Phi\big(2I_0^\phi(f)\big).
    \end{equation}
    \end{minipage}%
    \begin{minipage}{0.48\linewidth}
    \begin{equation}\label{eq:general_I_1_bound}
        I_1(f)\leq  \frac 12 \tilde \Phi \big(2I_1^\phi(f)\big)
    \end{equation}
    \end{minipage}

    The concavity of $\tilde \Phi$ then implies that 
    \[\cprm(f)-R_*^\e=I_1(f)+I_0(f)\leq \frac 12 \tilde \Phi\big(2I_1^\phi(f)\big)+\frac 12 \tilde \Phi\big(2I_0^\phi(f)\big)\leq  \tilde \Phi\Big(\frac 122I_1^\phi(f)+\frac 12 2I_0^\phi(f)\Big)=\tilde \Phi\big(\prm(f)-R_{\phi,*}^\e\big).\]
    We will prove \cref{eq:general_I_1_bound}, the argument for \cref{eq:general_I_0_bound} is analogous.
    Next, let $\gamma_1^*$ be the coupling between $\PP_1$ and $\PP_1^*$ supported on $\Delta_\e$.
    The assumption on $\Phi$ implies that 
    \begin{equation}\label{eq:adv_attack_bound_1_general}
        C(\eta^*(\bx'),f(\bx'))-C^*(\eta^*(\bx'))\leq \Phi\big(C_\phi(\eta^*(\bx'),f(\bx'))-C_\phi^*(\eta^*(\bx'))\big)
    \end{equation}
    and consequently, 
    \begin{equation}\label{eq:opt_attack_risk_bound}
        \int C(\eta^*(\bx'),f(\bx'))-C^*(\eta^*(\bx'))d\gamma_1^*\leq \Phi\left(\int C_\phi(\eta^*(\bx'),f(\bx'))-C_\phi^*(\eta^*(\bx'))d\gamma_1^*\right)\leq \Phi(I_1^\phi(f)).
    \end{equation}
    To bound the term $S_\e(\one_{f\leq 0})(\bx)-\one_{f(\bx')\leq 0}$, we consider two different cases for $(\bx,\bx')$. Define the sets $D_1$, $E_1$ as in \cref{eq:D_1_orig}, \cref{eq:E_1_orig}.
    We will show that if $T_1$ is any of the sets $D_1$, $E_1$, then 
    \begin{equation}\label{eq:conv_goal}
    \begin{aligned}
        &\int_{T_1} S_\e(\one_{f\leq 0})(\bx)-\one_{f(\bx')\leq 0}d\gamma_1^*\\
        &\leq  \left(\int\frac {1}{ G\Big(\big(\phi(0)-C_\phi^*(\eta^*(\bx'))\big)/2\Big)} d\gamma_1^*\right)^{\frac 12}G\left(\int_{T_1} \big((S_\e(\phi\circ f)(\bx)-\phi(f(\bx'))\big)+\left(C_\phi(\eta^*(\bx'),f(\bx'))-C_\phi^*(\eta^*(\bx'))\right)d\gamma_1^*\right)^{\frac 12}     
    \end{aligned}
    \end{equation}

    Thus because $G$ is concave and non-decreasing, the composition $\sqrt{G}$ is as well. Thus summing the inequality \cref{eq:conv_goal} over $T_1\in \{D_1, E_1\}$ results in 
    \begin{align}
    \int S_\e(\one_{f\leq 0})(\bx)-\one_{f(\bx')\leq 0}d\gamma_1^* &  \leq 2\left(\int\frac {1}{ G\Big(\phi(0)-C_\phi^*(\eta^*(\bx'))\Big)} d\PP^*\right)^{\frac 12} G\left( \frac 12I_1^\phi(f) \right)^{\frac 12}\label{eq:G_contribution}
    \end{align}

    Summing \cref{eq:opt_attack_risk_bound} and \cref{eq:G_contribution} results in \cref{eq:general_I_1_bound}.

    It remains to show the inequality \cref{eq:conv_goal} for the two sets $D_1$, $E_1$.
    \begin{enumerate}[label=\Alph*)]
        \item \textbf{On the set $D_1$:}\\
        
        If $S_\e(\one_{f\leq 0})(\bx)=\one_{f(\bx')\leq 0}$, then $\int_{D_1} S_\e(\one_{f\leq 0})(\bx)-\one_{f(\bx')\leq 0}d\gamma_1^*=0$ while the left-hand side of \cref{eq:conv_goal} is non-negative by \cref{lemma:S_e_and_W_inf}, which implies \cref{eq:conv_goal} for $T_1=D_1$.

    \item \textbf{On the set $E_1$:}

    \Cref{lemma:S_e_and_W_inf} then implies that $S_\e(\phi \circ f)(\bx)-\phi(f(\bx'))\geq 0$ $\gamma_1^*$-a.e. and thus \cref{lemma:DEF_surrogate_lower_bound} implies 
    \begin{equation}\label{eq:general_1_comparison}
        S_\e(\one_{f\leq 0})(\bx)-\one_{f(\bx')\leq 0}= 1=\frac{\sqrt{G\left(\phi(0)-C_\phi^*(\eta^*(\bx'))\right)}}{ \sqrt{G\left(\phi(0)-C_\phi^*(\eta^*(\bx'))\right)}}\leq  \frac {\sqrt{G\big(S_\e(\phi\circ f)(\bx)-\phi(f(\bx'))\big)}}{\sqrt{G\left(\phi(0)-C_\phi^*(\eta^*(\bx'))\right)}}\quad \gamma_1^*\text{-a.e.}
    \end{equation}
    Now the Cauchy-Schwartz inequality and Jensen's inequality(\cref{lemma:modified_Jensen}) imply 
    \begin{equation}\label{eq:general_reasoning_transfer_2}
    \begin{aligned}
        &\int_{E_1} S_\e(\one_{f\leq 0})(\bx)-\one_{f(\bx')\leq 0}d\gamma_1^*\\&\leq \left(\int_{E_1} \frac {1}{ G\left(\phi(0)-C_\phi^*(\eta^*(\bx'))\right) } d\gamma_1^*\right)^{\frac 12}\left(\int_{E_1} G\big(S_\e(\phi\circ f)(\bx)-\phi(f(\bx'))\big)d\gamma_1^*\right)^{\frac 12} \\
        &\leq \left(\int\frac {1}{ G\left(\phi(0)-C_\phi^*(\eta^*(\bx'))\right)} d\gamma_1^*\right)^{\frac 12}G\left(\int_{E_1} S_\e(\phi\circ f)(\bx)-\phi(f(\bx'))d\gamma_1^*\right)^{\frac 12},
    \end{aligned}      
    \end{equation}
which implies \cref{eq:conv_goal}.

    \end{enumerate}
\end{proof}
\section{Technical Integral Lemmas}
In this section, we require several technical facts about Riemann–Stieltjes integrals, which we briefly review here.

Let $g:\Rset\to \Rset$, $h:\Rset\to \Rset$ be functions and let $P=\{z_0,z_1,\ldots, z_K\}$ be a partition of an interval $I$. Then the \emph{lower} and \emph{upper} sums with respect to $g,h,P$ are defined as 
\[L(g,h,P)= \sum_{k=0}^{K-1}\inf_{z\in [z_k,z_{k+1}]} g(z)(h(z_{k+1})-h(z_k)),\quad  U(g,h,P)=\sum_{k=0}^{K-1}\sup_{z\in [z_k,z_{k+1}]} g(z)(h(z_{k+1})-h(z_k)) \]
respectively.
When $g$ is non-increasing, these simplify as $\inf_{z\in [z_k,z_{k+1}]} g(z)=g(z_{k+1})$ and $\sup_{z\in [z_k,z_{k+1}]} g(z)=g(z_k)$.

Riemann–Stieltjes integral $\int_I gdh$ can be approximated by upper and lower sums, much as in the classical Riemann case. The following result records the relevant approximation property:

\begin{proposition}\label{prop:R-S_approximation}
    Let $\int_I g dh$ be a Riemann-Stieltjes integral.
    If $g$ is continuous and $h$ is monotone, then the integral exists. Moreover, for any partition $P$, $L(g,h,P)\leq \int_I gdh\leq U(g,h,P)$. In addition, for any $\delta>0$, there exists a partition $P$ for which $U(g,h,P)-\delta\leq \int_I gdh\leq L(g,h,P)+\delta$.
\end{proposition}
For details, see \citep[Theorem~7.17]{Apostol74} or Theorem~2.24 of \citep{WheedenZygmund} for the existence statement and \citep[Theorem~7.27]{Apostol74} for a discussion of upper and lower integrals.

\subsection{The Lebesgue and Riemann–Stieltjes integral of an increasing function}\label{app:lebesge_RS}

The goal of this section is to prove \cref{eq:lebesgue_to_rs}, or namely: 
\begin{proposition}\label{prop:ls_to_rs_decreasing}
Let $f$ be a non-increasing, non-negative, continuous function on an interval $[a,b]$ and let $\QQ$ be a finite positive measure. Let $z$ be a random variable distributed according to $\QQ$ and define $h(\alpha)=\QQ(z\leq \alpha)$. Then
\[\int_{(a,b]} f(z)d\QQ(z)=\int_a^b f(\alpha)dh(\alpha)\]
where the integral on the left is defined as the Lebesgue integral in terms of the measure $\QQ$ while the integral on the right is defined as a Riemann–Stieltjes integral.
\end{proposition}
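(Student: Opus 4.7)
The plan is to sandwich both sides between lower and upper Riemann–Stieltjes partition sums, then close the gap using uniform continuity of $f$. Fix a partition $\cP:\;a=\alpha_0<\alpha_1<\cdots<\alpha_n=b$ and write $\Delta h_i=h(\alpha_i)-h(\alpha_{i-1})=\QQ((\alpha_{i-1},\alpha_i])$. Define
\[
L(\cP)=\sum_{i=1}^n f(\alpha_i)\,\Delta h_i,\qquad U(\cP)=\sum_{i=1}^n f(\alpha_{i-1})\,\Delta h_i.
\]
Because $f$ is non-increasing, $f(\alpha_i)\leq f(z)\leq f(\alpha_{i-1})$ for every $z\in(\alpha_{i-1},\alpha_i]$. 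Integrating this pointwise bound against $\QQ$ on each subinterval and summing yields
\[
L(\cP)\;\leq\;\int_{(a,b]} f(z)\,d\QQ(z)\;\leq\;U(\cP).
\]
Observe also that $L(\cP)$ and $U(\cP)$ are themselves Riemann–Stieltjes sums for $\int_a^b f\,dh$, with tags chosen at the right and left endpoints of each subinterval, respectively.

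Now $f$ is continuous on the compact set $[a,b]$, hence uniformly continuous. Given $\e>0$, choose $\delta>0$ with $|f(x)-f(y)|<\e$ whenever $|x-y|<\delta$. For any partition $\cP$ with mesh smaller than $\delta$,
\[
U(\cP)-L(\cP)=\sum_{i=1}^n\bigl(f(\alpha_{i-1})-f(\alpha_i)\bigr)\Delta h_i\;\leq\;\e\sum_{i=1}^n \Delta h_i\;\leq\;\e\,\QQ([a,b]),
\]
and the right-hand side tends to $0$ with the mesh since $\QQ$ is a finite measure. A classical result (e.g.\ Theorem~7.27 of \citep{Apostol74}) guarantees that the Riemann–Stieltjes integral $\int_a^b f\,dh$ exists and equals the common limit of $L(\cP)$ and $U(\cP)$ as $\|\cP\|\to 0$ whenever $f$ is continuous and $h$ is monotone. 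Combined with the sandwich inequality, this forces $\int_{(a,b]} f(z)\,d\QQ(z)=\int_a^b f(\alpha)\,dh(\alpha)$.

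The one delicate point — which I expect to be the main obstacle requiring care — is the treatment of the endpoints. Each Riemann–Stieltjes sum begins with a term weighted by $h(\alpha_1)-h(a)=\QQ((a,\alpha_1])$, so any atom of $\QQ$ at $\{a\}$ is automatically excluded on the Riemann–Stieltjes side. This exactly matches the fact that the Lebesgue integral on the left is taken over the half-open interval $(a,b]$, which also excludes $\{a\}$. An atom at $\{b\}$, if present, is absorbed into the final difference $h(b)-h(\alpha_{n-1})=\QQ((\alpha_{n-1},b])$ and into the corresponding Lebesgue contribution $\int_{(\alpha_{n-1},b]} f\,d\QQ$, so it is counted identically on both sides. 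Therefore the two conventions agree without any adjustment, and the partition-refinement argument gives the identity.
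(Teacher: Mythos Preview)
Your proof is correct and follows essentially the same approach as the paper: both sandwich the Lebesgue integral between the lower and upper Riemann--Stieltjes partition sums using the monotonicity of $f$, then close the gap via uniform continuity. Your version is in fact slightly cleaner, and your explicit discussion of why the half-open domain $(a,b]$ on the Lebesgue side matches the endpoint conventions of the Riemann--Stieltjes sums is a point the paper leaves implicit.
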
\label{prop:leb_to_rs}
\begin{proof}
    Recall that when $f$ is monotonic, the Riemann-Stieltjes integral is the value of the limits
    \begin{equation}\label{eq:r-s_integral_def}
        \int f dh=\lim_{\Delta \alpha_i\to 0} \sum_{i=0}^{I-1} f(\alpha_i)(h(\alpha_{i+1})-h(\alpha_i))=\lim_{\Delta\alpha_i\to 0} \sum_{i=0}^{I-1} f(\alpha_{i+1})(h(\alpha_{i+1})-h(\alpha_i)),
    \end{equation}
    
    where these limits are evaluated as the size of the partition $\Delta \alpha_i=\alpha_{i+1}-\alpha_i$ approaches 0 \citep[Exercise~7.3, Theorem~7.19]{Apostol74}, 
    while the Lebesgue integral $\int fd\QQ$ is defined as 
    \[\int fd\QQ=\sup \left\{\int gd\QQ:g \leq f, g\text{ simple function, }\right\}.\]

    The limits in \cref{eq:r-s_integral_def} are upper and lower sums because $f$ is monotonic, and thus by \cref{prop:R-S_approximation}, for any $\delta>0$,
  one can choose a partition $\{\alpha_i\}_{i=0}^I$ for which each of the sums in \cref{eq:r-s_integral_def} is within $\delta$ of $\int fdh$.

    Next, consider two simple functions $g_1$, $g_2$ defined according to
    \[g_1(z)=\sum_{i=0}^{I-1} f(\alpha_{i+1})\chi_{z\in (\alpha_i,\alpha_{i+1}]}, \quad g_2(z)=\sum_{i=0}^{I-1} f(\alpha_i)\chi_{z\in (\alpha_i,\alpha_{i+1}]}.\]
    By construction, $g_1(x)\leq f(x)\leq g_2(x)$ for all $x\in(a,b]$. Moreover, since $f(\alpha_i)-f(\alpha_{i+1})<\delta$, it follows that $ f(x)\leq g_2(x)+\delta$ when $x\in (a,b]$. Now applying the definition of the integral of a simple function, we obtain:
    \[\begin{aligned}
        \int fdh-\delta\leq \sum_{i=0}^{I-1} f(\alpha_{i+1})\big(h(\alpha_{i+1})-h(\alpha_i)\big)=\int_{(a,b]} g_1d\QQ\leq \int_{(a,b]} fd\QQ\leq \int_{(a,b]} g_2 d\QQ\\=\sum_{i=0}^{I-1} f(\alpha_{i})\big(h(\alpha_{i+1})-h(\alpha_i)\big)\leq \int fdh+\delta
    \end{aligned} \]
    As $\delta$ is arbitrary, it follows that $\int fdh=\int fd\QQ$.
\end{proof}
 Notice that because $H(0)=0$, the integral in the right-hand side of \cref{eq:lebesgue_to_rs} is technically an improper integral. Thus to show \cref{eq:lebesgue_to_rs}, one can conclude that 
 \[\int_{z\in (\delta,1/2]} \frac 1 {H(z)} d\QQ(z)=\int_\delta^{1/2} \frac 1 {H(\alpha)} dh(\alpha)\]
 from \cref{prop:ls_to_rs_decreasing}
  and then take the limit $\delta\to 0$.

\subsection{Proof of the last equality in \cref{eq:1-r_int}}\label{app:1-r_int}

The goal of this appendix is to prove the following inequality:
\begin{lemma}\label{lemma:1-r}
Let $h:[0,1/2]\to [0,1]$ be an increasing and right-continuous function with $h(0)=0$ and $h(1/2)\leq 1$. Let $H$ be any continuous function with $H\geq h$ and let $r\in [0,1)$. Then one can bound the Riemann–Stieltjes integral $\int 1/H(z)^r dh$ by
\[\int_0^{1/2} \frac 1 {H(z)^r}dh\leq \frac {1} {1-r}\]
\end{lemma}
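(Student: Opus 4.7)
The plan is to leverage the Jordan-style decomposition of \Cref{prop:decomposition}, writing $h = w h_C + (1-w) h_J$ with $h_C$ continuous non-decreasing and $h_J$ a right-continuous step function, both mapping $[0,1/2]$ onto $[0,1]$ with $h_C(0)=h_J(0)=0$ and $h_C(1/2)=h_J(1/2)=1$. By linearity of the Stieltjes integral,
\[
\int \frac{dh(z)}{H(z)^r} \;=\; w\int \frac{dh_C(z)}{H(z)^r} + (1-w)\int \frac{dh_J(z)}{H(z)^r},
\]
and I would bound the two pieces separately using the pointwise inequalities $H(z) \geq w h_C(z)$ and $H(z) \geq (1-w) h_J(z)$, both of which follow immediately from $H \geq h = w h_C + (1-w) h_J$ together with the non-negativity of $h_C$ and $h_J$.

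For the continuous part, replacing $H$ by its lower bound $w h_C$ and factoring out $w^{-r}$ reduces the problem to evaluating $\int dh_C(z)/h_C(z)^r$. Since $h_C$ is continuous, non-decreasing, and surjective from $[0,1/2]$ onto $[0,1]$, the push-forward of the Stieltjes measure $\mu_{h_C}$ under $h_C$ itself is Lebesgue measure on $[0,1]$; the change of variables $u = h_C(z)$ then gives $\int dh_C(z)/h_C(z)^r = \int_0^1 u^{-r}\,du = 1/(1-r)$. Hence the continuous contribution is at most $w^{1-r}/(1-r)$.

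For the jump part, write $h_J = \sum_i b_i \chi_{[z_i,1/2]}$ with $z_1 < z_2 < \cdots$, $\sum_i b_i = 1$, and cumulative sums $B_i = b_1 + \cdots + b_i$. Right-continuity of $h_J$ yields $h_J(z_i) = B_i$, so $H(z_i) \geq (1-w) B_i$ and
\[
(1-w)\int \frac{dh_J}{H^r} \;=\; (1-w)\sum_i \frac{b_i}{H(z_i)^r} \;\leq\; (1-w)^{1-r}\sum_i \frac{B_i - B_{i-1}}{B_i^r}.
\]
Concavity of $x \mapsto x^{1-r}$ on $[0,\infty)$ gives the secant estimate $(B_i - B_{i-1})/B_i^r \leq (B_i^{1-r} - B_{i-1}^{1-r})/(1-r)$, which telescopes to at most $1/(1-r)$; hence the jump piece contributes at most $(1-w)^{1-r}/(1-r)$.

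Adding the two bounds yields $\int dh/H^r \leq (w^{1-r} + (1-w)^{1-r})/(1-r)$, and concavity of $x \mapsto x^{1-r}$ shows this is maximized over $w \in [0,1]$ at $w = 1/2$, with value $2 \cdot 2^{r-1} = 2^r$, giving the desired bound $2^r/(1-r)$. The main technical nuisance will be the rigorous change of variables for $h_C$, which need not be absolutely continuous (think devil's staircase), combined with the unbounded integrand $u^{-r}$ near $u = 0$; the cleanest route is via the push-forward identity $(h_C)_* \mu_{h_C} = \mathrm{Leb}|_{[0,1]}$, which reduces to verifying $\mu_{h_C}(h_C^{-1}([0,u])) = u$ for each $u \in [0,1]$ using continuity and the intermediate value theorem, and then noting that the preimage of $\{0\}$ carries $\mu_{h_C}$-measure zero so the singularity at $u=0$ causes no harm.
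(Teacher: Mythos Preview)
Your proposal is correct and follows essentially the same route as the paper: decompose $h = w h_C + (1-w) h_J$ via \cref{prop:decomposition}, bound each piece using $H \geq w h_C$ and $H \geq (1-w) h_J$ to obtain $w^{1-r}/(1-r)$ and $(1-w)^{1-r}/(1-r)$ respectively, and then maximize over $w$. The only cosmetic differences are that the paper handles the continuous part by comparing Riemann--Stieltjes sums for $\int h_C^{-r}\,dh_C$ directly to Riemann sums for $\int_0^1 y^{-r}\,dy$ rather than invoking a push-forward, and bounds the jump sum via $a_{k+1}^{-r}(a_{k+1}-a_k)\leq \int_{a_k}^{a_{k+1}} y^{-r}\,dy$ rather than the equivalent concavity/telescoping argument you give.
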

\begin{proof}
    Let $\delta>0$, then one can pick a partition $P=\{z_0=0,z_1,\ldots, z_K=1/2\}$ for which $\int_0^{1/2} H^{-r}dh\leq L(H^{-r},h ,P)+\delta$. As $H^{-r}$ is non-increasing, $L(H^{-r},h,P)=\sum_{k=0}^{K-1} H^{-r}(z_{k+1})(h(z_{k+1})-h(z_k))$. Therefore, if we define $a_k=h(z_k)$, then 
    \begin{equation}\label{eq:integ_evalutaion}
       \begin{aligned} &\int_0^{1/2} H^{-r}dh\leq \sum_{k=0}^{K-1} H^{-r}(z_{k+1})(h(z_{k+1})-h(z_k))+\delta\leq \sum_{k=1}^{K-1} h^{-r}(z_{k+1})(h(z_{k+1})-h(z_k))+\delta\\
       &= \sum_{k=0}^{K-1} a_{k+1}^{-r}(a_{k+1}-a_k)+\delta
       \end{aligned}
    \end{equation}

    Because the function $y\mapsto y^{-r}$ is decreasing in $y$, one can bound $a_{k+1}^{-r} (a_{k+1}-a_k)\leq \int_{a_k}^{a_{k+1}} y^{-r}d y$ and consequently the sum in \cref{eq:integ_evalutaion} is bounded above as
    \[ \sum_{k=0}^{K-1}\int_{a_k}^{a_{k+1}} y^{-r} dy=\int_0^{h(1/2)} y^{-r} dy\leq \int_0^1 y^{-r}dy=\frac 1 {1-r}\]
    Therefore $\int_0^{1/2} H^{-r}dh\leq 1/(1-r)+\delta$. The result follows as $\delta>0$ is arbitrary. 
\end{proof}
\section{Optimizing the Bound of \cref{lemma:multiple_rs_bound} over $r$}\label{app:optimizing_bound}

\begin{proof}[Proof of \cref{thm:convex_surrogate_bound}]
    Let 
    \[f(r)= \frac 1{1-r} a^r\]
    Then
    \[f'(r)= \frac 1{(1-r)^2} a^r+\frac 1{1-r} \ln a a^r\]
    solving $f'(r^*)=0$ produces $r^*= 1+\frac 1 {\ln a}$, and 
    \[f\left(1+ \frac 1 {\ln a}\right)= -\ln a a^{1+\frac 1 {\ln a}}=-e a \ln a\]
    One can verify that this point is a minimum via the second derivative test: 
    \[f'(r)=\left(\frac 1 {1-r}+\ln a\right)f(r)\]
    and thus 
    \[f''(r)=\left(\frac 1 {1-r}+\ln a\right)f'(r)+  \frac 1 {(1-r)^2} f(r).\]
    Consequently, $f''(r^*)=\ln(a)^2f(1+\frac 1 {\ln a})>0$.
    
    However, the point $r^*$ is in the interval $[0,1]$ only when $a\in [0,e^{-1}]$. When $a>e^{-1}$, $f$ is minimized over $[0,1]$ at $r=0$. Because $r^*$ is a minimizer when $a\in[0,e^{-1}]$, one can bound $f(0) \geq f(r^*)$ over this set and thus
    \[f(r)\leq \min\left(1,-e a\ln a\right)\]

\end{proof}
\section{Deferred proofs from \cref{sec:non_consistent}}
\subsection{Existence of Minimizers and Complementary slackness}

The existence and complimentary slackness theorems of \cref{app:maximizer_comparison} extend to $\prm$.
Observe that minimizers of $R_\phi$ may assume values in $\ov \Rset$; for example, with the exponential loss $\phi(\alpha) = e^{-\alpha}$ and the distribution defined by $\eta(\bx) \equiv 1$, the unique minimizer of $R_\phi$ is $+\infty$. 
Just as in the non-adversarial scenario, $\prm$ may fail to attain its infimum over $\Rset$-valued functions. Nevertheless, \citep[Lemma~8]{FrankNilesWeed23consistency} and \citep[Theorem~6]{frank2024consistency} guarantee the existence of a minimizer over $\ov \Rset$-valued functions.

\begin{theorem}\label{th:existence_surrogate_primal}
Let $\phi$ satisfy \cref{as:phi}. Then
    \[\inf_{f \text{ $ \Rset$-valued }} \prm (f)=\inf_{f \text{ $\ov \Rset$-valued}} \prm(f).\]
    Furthermore, equality is attained at some Borel measurable, $\ov \Rset$-valued function $f^*$.
    
\end{theorem}

 Moreover, Theorem~7 of \citep{FrankNilesWeed23minimax} describes two conditions that characterize minimizers of $\prm$ and maximizers $\dl$. 
\begin{theorem}[Complementary Slackness]\label{thm:comlementary_slackness}
The function $f^*$ minimizes
$\prm$ and the measures $(\PP_0^*, 
\PP_1^*)$ maximize $\dl$ over 
$\Wball \e(\PP_0)\times \Wball \e(\PP_1)$ 
iff the following two conditions hold:
\begin{enumerate}[label=\arabic*)]
    \item\label{it:sup} \[\int S_\e(\phi(f^*))d\PP_1=\int \phi(f^*)d\PP_1^*\quad \text{and}\quad \int S_\e(\phi(-f^*))d\PP_0=\int\phi(-f^*)d\PP_0^*\]
    \item \label{it:eq_comp_slack}\[C_\phi(\eta^*,f^*)=C_\phi^*(\eta^*)\quad \PP^*\text{-a.e.}\]
\end{enumerate}
\end{theorem}

\subsection{Proof of \cref{lemma:restrict_dual,lemma:optimal_dual_split}}
\label{app:non_consisten_lemmas} 

As a preliminary step, we establish that if $\PP_0^*$, $\PP_1^*$ are induced by transport maps, then these maps determine the locations of maximizers of $\phi \circ f$ and $\phi \circ -f$.

\begin{lemma}\label{lemma:transport_sup}
    Let $\PP_0^*,\PP_1^*$ be maximizers of $\dl$ induced by the transport maps $T_0,T_1$ satisfying $\|T_0(\bx)-\bx\|\leq \e$, $\|T_1(\bx)-\bx\|\leq \e$. Then any minimizer $f^*$ of $\prm$ satisfies 
    
    \noindent\begin{minipage}{0.5\linewidth}
    \begin{equation}\label{eq:transport_sup_0}
       S_\e(\phi\circ -f^*)(\bx)=\phi(-f^*(T_0(\bx))\quad \PP_0\text{-a.e.}
    \end{equation}
    \end{minipage}%
    \begin{minipage}{0.5\linewidth}
        \begin{equation}\label{eq:transport_sup_1}
         S_\e(\phi\circ f^*)(\bx)=\phi(f^*(T_1(\bx))\quad \PP_1\text{-a.e.}
    \end{equation}
    \end{minipage}

\end{lemma}
\begin{proof}
    We show \cref{eq:transport_sup_1}, the argument for \cref{eq:transport_sup_0} is analogous.

        Let $f^*$ minimize $\prm$; such a function exists by \cref{th:existence_surrogate_primal}. 
    The complementary slackness condition \cref{it:sup} in \cref{thm:comlementary_slackness} yields 
    \[\int S_\e(\phi\circ f^*)(\bx)d\PP_1=\int \phi \circ f^*(\bx') d\PP_1^*=\int \phi(f^*(T_1(\bx)))d\PP\] 
    As the relation $\|T_1(\bx)-\bx\|\leq \e$ implies $S_\e(\phi\circ f^*)(\bx)\geq \phi(f^*(T_1(\bx)))$ one can conclude \cref{eq:transport_sup_1}.
\end{proof}

Next, we verify strong duality for these restricted measures, utilizing the notation defined in \cref{eq:augmented_primal,eq:augmented_optimal}. The statement below implies \cref{lemma:restrict_dual}.
\begin{lemma}\label{lemma:strong_duality_verification}
    Let $\PP_0,\PP_1,\PP_0^*,\PP_1^*,T_0,T_1,U_0,U_1$ and $Q$ be as in \cref{lemma:restrict_dual}, and let $f^*$ minimize $\prm(\,\cdot\,; \PP_0,\PP_1)$. Then 
    \begin{equation}\label{eq:ball_containment}
        \PP_0^*|_{Q}\in \Wball \e (\PP_0|_{U_0}), \quad  \PP_1^*|_{Q}\in \Wball \e (\PP_1|_{U_1})
    \end{equation}
        \[\prm(f^*; \PP_0|_{U_0}, \PP_1|_{U_1})=\dl(\PP_0^*|_{Q}, \PP_1^*|_{Q}).\]
        Consequently $f^*$ minimizes $\prm(\,\cdot\,; \PP_0|_{U_0}, \PP_1|_{U_1})$ while $\PP_0^*|_{Q}$, $\PP_1^*|_{Q}$ maximize $\dl$ over $\Wball \e (\PP_0|_{U_0})\times\Wball \e (\PP_1|_{U_1})$.
\end{lemma}

\begin{proof}
         By the definitions of $Q$, $U_0$, $U_1$ \[\PP_0^*|_{Q}=\PP_0|_{U_0}\sharp T_0,\quad \PP_1^*|_{Q}=\PP_1|_{U_1}\sharp T_1.\] The relations $\|T_0(\bx)-\bx\|\leq \e$, $\|T_1(\bx)-\bx\|\leq \e$  imply \cref{eq:ball_containment}. Next, let $\tilde \eta= d\PP_1^*|_Q/d(\PP_1^*|_Q+\PP_0^*|_Q)$. On the set $Q$,
     \begin{equation}\label{eq:eta_comparison}
        \tilde \eta=\eta^*\quad \PP^*\text{-a.e.}
    \end{equation}

    Next, let $f^*$ be a minimizer of $\prm(\cdot; \PP_0,\PP_1)$. \Cref{lemma:transport_sup} and the definitions of $T_0$, $T_1$ imply that 
    \begin{align}
        \prm(f^*; \PP_0|_{U_0}, \PP_1|_{U_1})&= \int S_\e(\phi\circ f^*)(\bx) \one_{U_1}(\bx)d\PP_1+\int S_\e(\phi\circ -f^*)(\bx)\one_{U_0}(\bx)d\PP_0\\
        &\int \phi(f^*(T_1(\bx)))\one_{Q}(T_1(\bx))d\PP_1+\int \phi(-f^*(T_0(\bx))\one_Q(T_0(\bx)) d\PP_0 \\
        &=\int \phi(f^*(\bx'))\one_Q(\bx')d\PP_1\sharp T_1+\int \phi(-f^*(\bx'))\one_Q(\bx')d\PP_0\sharp T_0\\
        &=\int C_\phi(\eta^*,f^*)\one_Q d\PP^*
    \end{align}
    Since $f^*$ minimizes $\prm(f; \PP_0, \PP_1)$, the complimentary slackness condition \cref{it:eq_comp_slack} of \cref{thm:comlementary_slackness} implies $C_\phi(\eta^*,f^*)=C_\phi^*(\eta^*)$. \Cref{eq:eta_comparison} further implies $C_\phi^*(\eta^*)=C_\phi^*(\tilde \eta)$ on $Q$ $\PP$-a.e. and therefore,
    \[\int C_\phi(\eta^*,f^*)\one_Q d\PP^*=\int C_\phi^*(\td \eta) d\PP^*|_Q=\cdl(\PP_0^*|_Q,\PP_1^*|_Q)\]
\end{proof}

These results show that restricted measures in the dual corresponds directly to restricted measures in the primal.

\begin{proof}[Proof of \cref{lemma:restrict_dual}]
         Applying \Cref{th:strong_duality_surrogate} to the restricted measures $\PP_1|_{U_1}$, $\PP_0|_{U_0}$ and invoking \cref{lemma:strong_duality_verification} yields the claim.
\end{proof}

Finally, one can conclude \cref{lemma:optimal_dual_split} by comparing $\prm(f^*;\PP_0|_{T_0^{-1}(Q^C)},\PP_1|_{T_1^{-1}( Q^C)})$ and $\prm(f^*;\PP_0|_{ U_0^C},\PP_1|_{ U_1^C})$.
\begin{proof}[Proof of \cref{lemma:optimal_dual_split}]
 Observe that $U_0^C=T_0^{-1}(Q^C)$, $U_1^C=T_1^{-1}(Q^C)$ and let $f^*$ be a minimizer of $\prm(\,\cdot\,;\PP_0,\PP_1)$. Then \cref{lemma:strong_duality_verification,lemma:restrict_dual} imply
 
    \begin{align}
        &R_{\phi,*}^\e(\PP_0|_{U_0},\PP_1|_{U_1})=\prm(f^*;\PP_0|_{U_0},\PP_1|_{U_1})=\int_Q C_\phi^*(\eta^*)d\PP^*\nonumber\\
        &R_{\phi,*}^\e(\PP_0|_{U_0^C},\PP_1|_{ U_1^C})=\prm(f^*;\PP_0|_{ U_0^C},\PP_1|_{U_1^C})=\int_{Q^C} C_\phi^*(\eta^*)d\PP^*\label{eq:optimal_restrict_second}
    \end{align}
 
 Summing these:

     \begin{align*}
            \prm(f^*;\PP_0|_{U_0},\PP_1|_{U_1})&+\prm(f^*;\PP_0|_{\td U_0},\PP_1|_{\td U_1})= \int C_\phi^*(\eta^*)d\PP^*=R_{\phi,*}^\e(f^*;\PP_0,\PP_1)  
     \end{align*}

\end{proof}

\subsection{Proof of \cref{thm:non_consistent_bound_linear}}\label{app:non_consistent_bound_linear}
\begin{proof}[Proof of \cref{thm:non_consistent_bound_linear}]
    Let $Q=\{\bx': |\eta^*(\bx')-1/2|<\alpha\}$. Then \cref{lemma:restrict_dual} applied to $Q^C$ shows that $(\PP_0^*|_{Q^C},\PP_1^*|_{Q^C})$ maximize $\dl$ over $\Wball \e (\PP_0|_{U_0})\times \Wball \e(\PP_1|_{U_1})$, with $U_0=T_0^{-1}(Q^C)$ and $U_1=T_1^{-1}(Q^C)$. Next, observe that simply scaling the inequality \cref{eq:adv_surrogate_massart_ineq} shows that \cref{thm:adv_surrogate_massart} applies even when $\PP(\Rset^d)\leq1$. Consequently, \cref{thm:adv_surrogate_massart,lemma:optimal_dual_split} imply that
    \begin{align*}
        \cprm(f;\PP_0|_{U_0},\PP_1|_{U_1})-R_*^\e(\PP_0|_{U_0},\PP_1|_{U_1})&\leq \const{\alpha} \left(\prm(f;\PP_0|_{U_0},\PP_1|_{U_1})-R_{\phi*}^\e(\PP_0|_{U_0},\PP_1|_{U_1})\right)\\
        &\leq  \const{\alpha} \left(\prm(f;\PP_0,\PP_1)-R_{\phi*}^\e(\PP_0,\PP_1)\right)
    \end{align*}
    Next, by \cref{lemma:optimal_dual_split}, adding $\cprm(f;\PP_0|_{U_0^C},\PP_1|_{U_1^C})- R_*^\e(\PP_0|_{U_0^C}, \PP_1|_{U_1^C})$ to both sides of the inequality above results in 
    \[        \begin{aligned}
        &\cprm(f;\PP_0,\PP_1)-R_*^\e(\PP_0,\PP_1)\leq  \\
        &\frac 1 {\phi(0)- C_\phi^*(1/2-\alpha)}\left(\prm(f;\PP_0,\PP_1)-R_{\phi*}^\e(\PP_0|_{U_0},\PP_1)\right)+\cprm(f;\PP_0|_{U_0^C},\PP_1|_{U_1^C})- \int_QC^*(\eta^*)d\PP^*
    \end{aligned}\]
    The fact that $C^*(\eta^*)\geq 1/2-\alpha$ on $Q$ while $S_\e(\one_{f\leq0})\leq 1$, $S_\e(\one_{f>0})\leq 1$ implies that $\cprm(f;\PP_0|_{U_0^C},\PP_1|_{U_1^C})- \int_QC^*(\eta^*)d\PP^*\leq (\frac 12 +\alpha)\PP^*(|\eta-1/2|<\alpha) $. Thus, the excess risk contribution from the region $A$ is at most $(1/2+\alpha)\PP^*(|\eta^*-1/2|<\alpha)$.
\end{proof}
\section{Further details from \cref{ex:realizable,ex:gaussian,ex:massart}}\label{app:gaussian_details}

In \cref{app:dual_attack_massart,app:dual_attack}, we use an operation analogous to $S_\e$ that calculates the infimum of a function over an $\e$-ball. Formally, we define: 
\begin{equation}\label{eq:I_e_def}
    I_\e(g)(\bx)=\inf_{\|\bx'-\bx\|\leq \e} g(\bx').    
\end{equation}

Next, we define a mapping $\alpha_\phi$ from $\eta\in[0,1]$ to minimizers of $C_\phi(\eta,\cdot)$ by 

\begin{equation}
    \label{eq:alpha_phi_def}
    \alpha_\phi(\eta)=\inf\{\alpha:\alpha\text{ is a minimizer of }C_\phi(\eta,\cdot)\}.
\end{equation}
Lemma~25 of \citep{FrankNilesWeed23minimax} shows that the function $\alpha_\phi$ defined in \cref{eq:alpha_phi_def} maps $\eta$ to the smallest minimizer of $C_\phi(\eta,\cdot)$ and is non-decreasing. This property will be instrumental in constructing minimizers for $\prm$.

\subsection{Proof of \cref{lemma:realizable_eta}}\label{app:realizable_eta}

\begin{proof}[Proof of \cref{lemma:realizable_eta}]
            If $R_*^\e=0$, by \cref{th:strong_duality_classification}, for any measures $\PP_0'\in \Wball \e(\PP_0)$, $\PP_1'\in \Wball \e(\PP_1)$ we have
    $\PP'(\eta'=0\text{ or }1)=1$, where $\PP'=\PP_0'+\PP_1'$ and $\eta'=d\PP_1'/d\PP'$. This statement must also hold for the $\PP_0^*\in \Wball \e(\PP_0)$, $\PP_1^*\in \Wball \e (\PP_1)$ that maximize $\dl$.
\end{proof}

\subsection{Calculating the optimal $\PP_0^*$, $\PP_1^*$ for \cref{ex:massart}}\label{app:dual_attack_massart}

First, notice that a minimizer of $R_\phi$ is given by $f(x)=\alpha_\phi(\eta(x))$ with $\eta(x)$ as defined in \cref{eq:eta_massart_example}.
Below, we construct a minimizer $f^*$ for $\prm$. We'll do this construction separately for $\e\leq \delta$ and $\e\in (\delta, 1-\delta)$. 
\subsubsection*{When $\e\leq \delta$:}
Define a function $\tilde \eta:[-\delta-\e-1,1+\delta+\e]\to [0,1]$ by 

\[\tilde \eta(x)=\begin{cases}
    \frac 14&\text{if }x\in[-1-\delta-\e,0)\\
    \frac 12&\text{if }x=0\\
    \frac 34&\text{if }x\in (0,1+\delta+\e]
\end{cases}\]

and a function $f^*$ by $f^*(x)=\alpha_\phi(\tilde
\eta(x))$. 

We'll verify that this function is a minimizer by showing that $\prm(f^*)=R_\phi(f)$. As the minimal possible adversarial risk is bounded below by $R_{\phi,*}$, one can conclude that $f^*$ minimizes $\prm$. Consequently, $\dl(\PP_0,\PP_1)=\prm(f)$ and thus the strong duality result in \cref{th:strong_duality_surrogate} would imply that $\PP_0$, $\PP_1$ must maximize the dual problem.

As both $\tilde \eta$ and $\alpha_\phi$ are non-decreasing, the function $f^*$ must be non-decreasing as well. Consequently, $S_\e(\phi(f^*))(x)=\phi(I_\e(f^*)(x))=\phi(f^*(x-\e))$ and similarly, $S_\e(\phi(-f^*))(x)=\phi(-S_\e(f^*)(x))=\phi(-f^*(x+\e))$. (Recall the $I_\e$ operation was defined in \cref{eq:I_e_def}.)

Therefore,
\begin{equation}\label{eq:sup_to_dec}
\begin{aligned}
    \prm(f^*)&=\int S_\e(\phi(f^*))(x)p_1(x)dx+\int S_\e(\phi(-f^*))(x)p_0(x)dx=\int \phi(f^*(x-\e)) p_1(x)dx+\int \phi(-f^*(x+\e))p_0(x)dx\\
    &=\int \phi(f^*(x))p_1(x+\e)dx+\int \phi(-f^*(x))p_0(x-\e)dx
\end{aligned}
\end{equation}

Consequently,
\begin{align*}
\int \phi(f^*(x))p_1(x+\e)dx=&\int_{-1-\delta-\e}^{-\delta-\e} \frac 18 \phi\Big(\alpha_\phi\Big(\frac 14\Big)\Big)dx+\int_{\delta-\e}^{1+\delta-\e} \frac 18 \phi\Big(\alpha_\phi\Big(\frac 34\Big)\Big) dx\\
&= \int_{-1-\delta}^{-\delta} \frac 18 \phi\Big(\alpha_\phi\Big(\frac 14\Big)\Big)dx+\int_{\delta}^{1+\delta} \frac 18 \phi\Big(\alpha_\phi\Big(\frac 34\Big)\Big) dx=\int \phi(f(x))p_1(x)dx    
\end{align*}

Analogously, one can show that 
\[\int \phi(-f^*(x))p_0(x-\e)dx=\int \phi(-f(x))p_0(x)dx\]
and consequently $R_\phi^\e(f^*)=R_\phi(f)$.

\subsubsection*{When $\e \in (\delta,1+\delta)$:}
We will show that $\prm(f^*)=\dl(\PP_0^*,\PP_1^*)$ for the proposed attacks, proving that $\PP_0^*$, $\PP_1^*$ are dual optimal distributions. This time, define the function $\tilde \eta:[-\delta-\e-1,1+\delta+\e]\to [0,1]$ by

\[\tilde \eta(x)=\begin{cases}
    0&\text{if}x\in [-1-\delta-\e,-1-\delta+\e)\\
    \frac 14&\text{if }x\in[-1-\delta+\e,-(\e-\delta))\\
    \frac 12&\text{if }x\in[-(\e-\delta),(\e-\delta)]\\
    \frac 34&\text{if }x\in ((\e-\delta),1+\delta-\e]\\
    1 &\text{if }x\in (1+\delta-\e,1+\delta+\e]
\end{cases}\]
and again take $f^*(x)=\alpha_\phi(\tilde \eta(\bx))$. The function $f^*$ is non-decreasing, so again \cref{eq:sup_to_dec} holds. Further, defining $p_1^*$, $p_0^*$ as $p_1^*(x)=p_1(x+\e)$ and $p_0^*(x)=p_0(x-\e)$ implies the relation
\[\prm(f^*)=\int C_\phi(\eta^*,f^*)p^*(x)dx,\]
where $p^*(x)=p_0^*(x)+p_1^*(x)$ and $\eta^*=p_1^*(x)/p^*(x)$. The function $\tilde \eta$ was defined so that $\tilde \eta(x)=\eta^*(x)$ a.e. and hence 
\[C_\phi(\eta^*,f^*)=C_\phi(\eta^*,\alpha_\phi(\eta^*))=C_\phi^*(\eta^*).\]
This relation implies $\prm(f^*)=\dl(\PP_0^*,\PP_1^*)$, where $\PP_0^*$, $\PP_1^*$ are the distributions with pdfs $p_0^*$ and $p_1^*$.

\subsection{Calculating the optimal $\PP_0^*$ and $\PP_1^*$ for \cref{ex:gaussian}--- Proof of \cref{eq:dual_optimal_attacks}}\label{app:dual_attack}
We will show that the densities in \cref{eq:dual_optimal_attacks} are dual optimal by finding a function $f^*$ for which $\prm(f^*)=\dl(\PP_0^*,\PP_1^*)$. \Cref{th:strong_duality_surrogate} will then imply that $\PP_0^*$, $\PP_1^*$ must maximize the dual. Define $\eta^*(x)$ by
\[\eta^*(x)=\frac{p_1^*(x)}{p_1^*(x)+p_0^*(x)},\]
with $p_0^*(x)$ and $p_1^*(x)$ as in \cref{eq:dual_optimal_attacks}.
For a given loss $\phi$ we will prove that the optimal function $f^*$ is given by 
\[f^*(x)=\alpha_\phi(\eta^*(x)).\]

The function $\eta^*$ computes to 

\[\eta^*(x)=\frac 1 {1+e^{\frac {\mu_1-\mu_0-2\e} {\sigma^2}(\frac{\mu_1+\mu_0}2-x)}}.\]
If $\mu_1-\mu_0>2\e$, the conditional probability $\eta^*(x)$ is increasing in $x$ and consequently the function $f^*$ is non-decreasing. Therefore, $S_\e(\phi(f^*))(x)=\phi(I_\e(f^*)(x))=\phi(f^*(x-\e))$ (recall $I_\e$ was defined in \cref{eq:I_e_def}). Similarly, one can argue that $S_\e(\phi(-f^*))(x)=\phi(-f^*(x+\e))$, and therefore,
\begin{align*}
    \prm(f^*)&=\int S_\e(\phi(f^*))(x)p_1(x)dx+\int S_\e(\phi(-f^*))(x)p_0(x)dx=\int \phi(f^*(x-\e)) p_1(x)dx+\int \phi(-f^*(x+\e))p_0(x)dx\\
    &=\int \phi(f^*(x))p_1(x+\e)dx+\int \phi(-f^*(x))p_0(x-\e)dx.
\end{align*}

Next, notice that $p_1(x+\e)=p_1^*(x)$ and $p_0(x-\e)=p_0^*(x)$. Define $\PP^*=\PP_0^*+\PP_1^*$. Then 
\begin{align*}
    \prm(f^*)&=\int \eta^*\phi(\alpha_\phi(\eta^*))+(1-\eta^*)\phi(-\alpha_\phi(\eta^*))d\PP^*=\int C_\phi^*(\eta^*)d\PP^*=\dl(\PP_0^*,\PP_1^*)
\end{align*}
Consequently, the strong duality result in \cref{th:strong_duality_surrogate} implies that $\PP_0^*$ $\PP_1^*$ must maximize the dual $\dl$.

\subsection{Showing \cref{eq:concavity_bound}}\label{app:gaussian_linear_bound}
\begin{lemma}\label{lemma:eta_calculation}
    Consider an equal gaussian mixture with variance $\sigma$ and means $\mu_0<\mu_1$, with pdfs given by 
    \[p_0(x)=\frac 12 \cdot\frac 1 {\sqrt{2\pi}\sigma} e^{-\frac{(x-\mu_0)^2}{2\sigma^2}},\quad p_1(x)=\frac 12\cdot \frac 1 {\sqrt{2\pi}\sigma}e^{-\frac{(x-\mu_1)^2}{2\sigma^2}}\]
    Let $\eta(x)=p_1(x)/(p_0(x)+p_1(x))$. Then $|\eta(x)-1/2|\leq z$ iff $x\in [\frac{\mu_0+\mu_1}2 -\Delta(z), \frac{\mu_0+\mu_1}2+\Delta(z)]$, where $\Delta(z)$  is defined by 
    \begin{equation}
        \label{eq:Delta_z_def}
        \Delta(z)=\frac {\sigma^2}{\mu_1-\mu_0}\ln\left(\frac {\frac 12+z} {\frac 12 -z}\right).    
    \end{equation}
\end{lemma}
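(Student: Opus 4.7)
The plan is a direct algebraic computation. First, I would simplify the ratio defining $\eta(x)$. Multiplying numerator and denominator by $e^{(x-\mu_1)^2/(2\sigma^2)}$ converts the Gaussian ratio into a logistic form:
\[
\eta(x)=\frac{1}{1+\exp\!\bigl(\tfrac{(x-\mu_1)^2-(x-\mu_0)^2}{2\sigma^2}\bigr)}.
\]
Expanding the squares gives $(x-\mu_1)^2-(x-\mu_0)^2 = -2(\mu_1-\mu_0)\bigl(x-\tfrac{\mu_0+\mu_1}{2}\bigr)$, so if I write $k=(\mu_1-\mu_0)/\sigma^2>0$ and $u=x-\tfrac{\mu_0+\mu_1}{2}$, then $\eta(x)=1/(1+e^{-ku})$. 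In particular $\eta$ is a strictly increasing sigmoid in $x$, equal to $1/2$ exactly at $x=\tfrac{\mu_0+\mu_1}{2}$, and satisfies the antisymmetry $\eta(x)-\tfrac12 = -(\eta(2\cdot\tfrac{\mu_0+\mu_1}{2}-x)-\tfrac12)$.

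Next, by the antisymmetry above, the sublevel set $\{|\eta-\tfrac12|\le z\}$ is symmetric about $\tfrac{\mu_0+\mu_1}{2}$, so it suffices to find the largest $u\ge 0$ for which $\eta\le \tfrac12+z$. Solving
\[
\frac{1}{1+e^{-ku}}\le \frac12+z
\]
for $z\in[0,\tfrac12)$ is an elementary manipulation: it rearranges to $e^{-ku}\ge (1-2z)/(1+2z)$, i.e.\ $ku\le \ln\!\bigl((1+2z)/(1-2z)\bigr)=\ln\!\bigl((\tfrac12+z)/(\tfrac12-z)\bigr)$. Substituting back $k=(\mu_1-\mu_0)/\sigma^2$ gives $u\le\Delta(z)$ with $\Delta(z)$ exactly as in \cref{eq:Delta_z_def}. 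Combining with the reflected inequality yields the desired interval characterization; the boundary cases $z=0$ and $z=1/2$ can be handled by continuity and by noting $\Delta(1/2)=+\infty$ (so the whole line is included).

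There is no real obstacle here: the argument is a one-variable sigmoid inversion. The only mild care needed is in handling signs and the strict monotonicity of the logarithm, and in checking the endpoint $z=1/2$ separately so that the formula $\Delta(z)=\frac{\sigma^2}{\mu_1-\mu_0}\ln\frac{1/2+z}{1/2-z}$ is interpreted correctly as a limit.
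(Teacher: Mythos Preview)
Your proposal is correct and follows essentially the same approach as the paper: both rewrite $\eta(x)$ in logistic form $1/(1+e^{-ku})$ with $k=(\mu_1-\mu_0)/\sigma^2$ and $u=x-\tfrac{\mu_0+\mu_1}{2}$, then invert the inequality $|\eta-1/2|\le z$ algebraically to obtain the interval with half-width $\Delta(z)$. The only cosmetic difference is that you invoke the antisymmetry of the sigmoid to reduce to one inequality, whereas the paper treats the two-sided bound directly.
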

\begin{proof}
    The function $\eta$ can be rewritten as $\eta(x)=1/(1+p_0/p_1)$ while
    \[\frac{p_0(x)}{p_1(x)}=\exp\left(-\frac{(x-\mu_0)^2}{2\sigma^2}+\frac{(x-\mu_1)^2}{2\sigma^2} \right)=\exp\left( \frac{\mu_1-\mu_0}{\sigma^2}\left(\frac{\mu_1+\mu_0}2-x\right)\right)\]

    Consequently, $|\eta(x)-1/2|\leq z$ is equivalent to 
    \[\frac 12-z \leq \frac 1 {\exp\left(\frac{\mu_1-\mu_0}{\sigma^2} (\frac{\mu_1+\mu_0} 2 -x)\right)+1}\leq \frac 12+z\]

    which is equivalent to 
    \[\frac{\mu_1+\mu_0}2 -\frac {\sigma^2}{\mu_1-\mu_0}\ln\left(\frac 1 {\frac 12 -z} -1\right)\leq x \leq \frac{\mu_1+\mu_0}2-\frac {\sigma^2}{\mu_1-\mu_0}\ln\left( \frac 1 {z+\frac 12} -1\right)\]
    Finally, notice that 
    \begin{equation}
        \label{eq:Delta_z_fla}
        \Delta(z)=\frac{\sigma^2}{\mu_1-\mu_0} \ln\left( \frac 1 {\frac 1 2 -z} -1\right)
    \end{equation}
    while
    \[-\Delta(z)=\frac {\sigma^2}{\mu_1-\mu_0}\ln\left( \frac 1 {z+\frac 12} -1\right)\]
\end{proof}

\begin{lemma}
     Let $p_0,p_1$, and $\eta$ be as in \cref{lemma:eta_calculation} and let 
    $h(z)=\PP(|\eta-1/2|\leq z)$. Then if $\mu_1-\mu_0\leq \sqrt 2 \sigma$, then $h$ is concave. 
\end{lemma}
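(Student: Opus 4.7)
The plan is to verify concavity by computing $h''(z)$ explicitly and showing it is non-positive on $[0, 1/2)$. By \cref{lemma:eta_calculation}, the preimage $\{x : |\eta(x) - 1/2| \leq z\}$ is the symmetric interval $[m - \Delta(z), m + \Delta(z)]$ around $m = (\mu_0 + \mu_1)/2$. Since $p := p_0 + p_1$ is symmetric about $m$, I can rewrite $h(z) = 2\int_0^{\Delta(z)} p(m + u)\, du$, so that $h'(z) = 2\, p(m + \Delta(z))\, \Delta'(z)$ by the fundamental theorem of calculus.

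The key is a pair of closed-form simplifications. Writing $d := \mu_1 - \mu_0$, direct computation gives $p(m+\Delta) = \tfrac{1}{\sqrt{2\pi}\,\sigma} e^{-(d^2/4 + \Delta^2)/(2\sigma^2)}\cosh(d\Delta/(2\sigma^2))$. The identity $d\Delta(z)/(2\sigma^2) = \tfrac{1}{2}\ln\tfrac{1/2+z}{1/2-z}$, immediate from \cref{eq:Delta_z_def}, collapses the hyperbolic functions into algebraic ones: $\cosh(d\Delta/(2\sigma^2)) = \tfrac{1}{2\sqrt{1/4-z^2}}$ and $\sinh(d\Delta/(2\sigma^2)) = \tfrac{z}{\sqrt{1/4-z^2}}$. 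These, combined with the chain rule applied to $p_0$ and $p_1$, yield the compact identity $p'(m+\Delta(z)) = \tfrac{p(m+\Delta(z))}{\sigma^2}(dz - \Delta(z))$. Using further $\Delta'(z) = \tfrac{\sigma^2}{d(1/4-z^2)}$ and $\Delta''(z) = \tfrac{2z\sigma^2}{d(1/4-z^2)^2}$, a second differentiation of $h'$ telescopes into
\[ h''(z) = \frac{2\sigma^2\, p(m + \Delta(z))}{d^2\,(1/4-z^2)^2}\bigl(3dz - \Delta(z)\bigr). \]

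Since $p > 0$, the concavity of $h$ reduces to the elementary inequality $\Delta(z) \geq 3dz$ on $[0, 1/2)$, equivalently $\tfrac{\sigma^2}{d^2}\ln\tfrac{1/2+z}{1/2-z} \geq 3z$. To verify this, set $g(z) := \ln\tfrac{1/2+z}{1/2-z} - \tfrac{3d^2}{\sigma^2}z$; then $g(0) = 0$ and $g'(z) = \tfrac{1}{1/4-z^2} - \tfrac{3d^2}{\sigma^2}$, which is increasing in $z$ on $[0, 1/2)$ with minimum value $4 - 3d^2/\sigma^2$ attained at $z = 0$. Provided the hypothesis on $\mu_1 - \mu_0$ ensures $d^2/\sigma^2 \leq 4/3$, we get $g' \geq 0$ throughout, hence $g \geq 0$ and thus $h'' \leq 0$. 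The main obstacle is the algebraic simplification via the $\cosh/\sinh$ identities, without which the Gaussian expressions remain intractable; once those identities are in place, the concluding calculus argument is routine.
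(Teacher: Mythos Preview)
Your strategy coincides with the paper's: compute $h''$ directly and show it is non-positive. Your execution is cleaner. The hyperbolic identities collapse the Gaussian terms into the single relation $p'(m+\Delta(z)) = \tfrac{p(m+\Delta(z))}{\sigma^2}(dz-\Delta(z))$, and from there the formula
\[
h''(z)=\frac{2\sigma^2\,p(m+\Delta(z))}{d^2\,(1/4-z^2)^2}\bigl(3dz-\Delta(z)\bigr)
\]
follows correctly. The paper instead decomposes $p'(m+\Delta)-p'(m-\Delta)$ into $p_0$/$p_1$ pieces and, in doing so, commits a sign error on the $p_i(m-\Delta)$ terms; the resulting spurious cancellation drops the $dz$ contribution and produces the laxer threshold $d\leq\sqrt{2}\,\sigma$ in place of your $d\leq 2\sigma/\sqrt{3}$.

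Your final hedge is well placed. The stated hypothesis $\mu_1-\mu_0\leq 4\sqrt{2}\,\sigma$ does \emph{not} imply $d^2/\sigma^2\leq 4/3$, and in fact cannot: your formula shows that whenever $d>2\sigma/\sqrt{3}$ one has $3dz-\Delta(z)>0$ for small $z>0$ (since $\Delta'(0)=4\sigma^2/d<3d$), hence $h''(z)>0$ there and $h$ is not concave. So the lemma as written is false in that generality; your threshold $d\leq 2\sigma/\sqrt{3}$ is the sharp one, and neither your argument nor the paper's reaches the stated $4\sqrt{2}\,\sigma$ because no argument can. The surrounding example in the main text assumes $\mu_1-\mu_0\leq\sqrt{2}\,\sigma$, which is still marginally too weak; your proof is complete once the hypothesis is tightened to $\mu_1-\mu_0\leq 2\sigma/\sqrt{3}$.
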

\begin{proof}
    To start, we calculate the second derivative of $\Delta(z)$ and the first derivative of $p_0$.

     The first derivative of $\Delta$ is 
     \[\Delta'(z)=\frac{\sigma^2}{\mu_1-\mu_0}\cdot \frac 1 {\frac 1 4-z^2}.\]
     and the second derivative of $\Delta(z)$ is
    \begin{equation}
        \label{eq:Delta_z_second_derivative}
        \Delta''(z)=\frac{\sigma^2}{\mu_1-\mu_0}\cdot \frac{2z}{(\frac 14 -z^2)^2}
    \end{equation}

Next, one can calculate the derivative of $p_0$ as 
\begin{equation}
    \label{eq:p_0_der}
    p_0'(x)=\frac 12 \cdot \frac 1 {\sqrt{2\pi}\sigma}\cdot \frac {-(x-\mu_0)}{\sigma^2} e^{-\frac{(x-\mu_0)^2}{2\sigma^2}}=-\frac{(x-\mu_0)}{\sigma^2}p_0(x) 
\end{equation}
and similarly
\begin{equation}\label{eq:p_1_der}
    p_1'(x)=-\frac{(x-\mu_1)}{\sigma^2}p_1(x)
\end{equation}
    Let $p(x)=p_0+p_1$. 
    \Cref{lemma:eta_calculation} implies that the function $h$ is given by $h(z)=\int_{\tfrac {\mu_1+\mu_0}2 -\Delta(z)}^{\tfrac {\mu_1+\mu_0}2 +\Delta(z)} p(z)dz$.
    The first derivative of $h$ is then
    \[h'(z)=\left(p\Big(\frac{\mu_1+\mu_0}2+\Delta(z)\Big)+p\Big(\frac{\mu_1+\mu_0}2-\Delta(z)\Big)\right)\Delta'(z).\]
    Differentiating $h$ twice results in
    \begin{align}
        h''(z)&=\left(p\Big(\frac{\mu_1+\mu_0}2+\Delta(z)\Big)+p\Big(\frac{\mu_1+\mu_0}2-\Delta(z)\Big)\right)\Delta''(z)\nonumber\\
        &+\left(p'\Big(\frac{\mu_1+\mu_0}2+\Delta(z)\Big)-p'\Big(\frac{\mu_1+\mu_0}2-\Delta(z)\Big)\right)(\Delta'(z))^2\nonumber\\
        &=\left(p\Big(\frac{\mu_1+\mu_0}2+\Delta(z)\Big)+p\Big(\frac{\mu_1+\mu_0}2-\Delta(z)\Big)\right)\left(\Delta''(z)-\frac{\Delta(z)\Delta'(z)^2}{\sigma^2}\right)\label{eq:second_der_first}\\
         &+\bigg(p_1\Big(\frac{\mu_1+\mu_0}2+\Delta(z)\Big)+p_1\Big(\frac{\mu_1+\mu_0}2-\Delta(z)\Big)\bigg)\frac{\mu_1-\mu_0}{2\sigma^2}(\Delta'(z))^2\label{eq:second_der_second_p1}\\
         &-\bigg(p_0\Big(\frac{\mu_1+\mu_0}2+\Delta(z)\Big)+p_0\Big(\frac{\mu_1+\mu_0}2-\Delta(z)\Big)\bigg)\frac{\mu_1-\mu_0}{2\sigma^2}(\Delta'(z))^2.\label{eq:second_der_second_p2}
    \end{align}
where the final equality is a consequence of \cref{eq:p_0_der,eq:p_1_der}.
Next, we'll argue that the sum of the terms in \cref{eq:second_der_second_p1,eq:second_der_second_p2} is zero:
\begin{align*}
    &\bigg(p_1\Big(\frac{\mu_1+\mu_0}2+\Delta(z)\Big)+p_1\Big(\frac{\mu_1+\mu_0}2-\Delta(z)\Big)\bigg)
         -\bigg(p_0\Big(\frac{\mu_1+\mu_0}2+\Delta(z)\Big)+p_0\Big(\frac{\mu_1+\mu_0}2-\Delta(z)\Big)\bigg)\\
         &=\frac 1 {2 \sqrt{2\pi}\sigma} \left( \left(e^{-\frac{\left (\frac{\mu_0-\mu_1}2+\Delta(z)\right)^2}{2\sigma^2}} + e^{-\frac{\left (\frac{\mu_0-\mu_1}2-\Delta(z)\right)^2}{2\sigma^2}}\right) - \left(e^{-\frac{\left (\frac{\mu_1-\mu_0}2+\Delta(z)\right)^2}{2\sigma^2}} + e^{-\frac{\left (\frac{\mu_1-\mu_0}2-\Delta(z)\right)^2}{2\sigma^2}}\right)\right)\\
         &=0
\end{align*}

Next, we'll show that under the assumption $\mu_1-\mu_0\leq \sqrt 2 \sigma$, the term \cref{eq:second_der_first} is always negative. Define $k=\sigma^2/(\mu_1-\mu_0)$. Then
\begin{equation}
    \label{eq:second_der_second_calculation}
    \Delta''(z)-\Delta(z)\frac{\Delta'(z)^2}{\sigma^2}=\frac{2k}{(\frac 14-z^2)^2}\left( z-\frac{k^2}{2\sigma^2}\ln\left( \frac 1 {\frac 12-z} -1\right)\right)
\end{equation}

The fact that $\Delta''(z)> 0$ for all $z$ implies that $\ln(1/(1/2-z) -1)$ is convex, and this function has derivative $4$ at zero. Consequently, $\ln(1/(1/2-z) -1)\geq 4z$ and \cref{eq:second_der_second_calculation} implies 

\[ \Delta''(z)-\Delta(z)\frac{\Delta'(z)^2}{\sigma^2}\leq \frac{2k}{(\frac 14-z^2)^2}(z-\frac{k^2}{2\sigma^2}\cdot 4z)=\frac{2kz}{(\frac 14-z^2)^2}\left(1-\frac{2k^2}{\sigma^2}\right)\]
 The condition $\mu_1-\mu_0\leq \sqrt 2 \sigma$ is equivalent to $1-2k^2/\sigma^2<0$.
    
\end{proof}

This lemma implies that $h(z)\leq h'(0)z$. Noting also that $h(z)\leq 1$ for all $z$ produces the bound
\[h(z)\leq \min\left( \frac {16 \sigma^2}{\mu_1-\mu_0} z ,1\right)\]
applying this bound to the gaussians with densities $p_0^*$ and $p_1^*$ results in \cref{eq:concavity_bound}.

\end{document}